\date{}
\title{\bfseries Active Ranking with Subset-wise Preferences}
\author{
Aadirupa Saha\thanks{Indian Institute of Science, Bangalore, India. {\tt aadirupa@iisc.ac.in}}, \and Aditya Gopalan \thanks{Indian Institute of Science, Bangalore, India. {\tt aditya@iisc.ac.in} }
}
\newtheorem{thm}{Theorem}
\newtheorem{lem}[thm]{Lemma}
\newtheorem{cor}[thm]{Corollary}
\newtheorem{defn}[thm]{Definition}
\newtheorem{rem}{Remark}
\newcommand{\R}{{\mathbb R}}
\newcommand{\N}{{\mathbb N}}
\newcommand{\E}{{\mathbf E}}
\newcommand{\1}{{\mathbf 1}}
\newcommand{\cA}{{\mathcal A}}
\newcommand{\cB}{{\mathcal B}}
\newcommand{\cC}{{\mathcal C}}
\newcommand{\cE}{{\mathcal E}}
\newcommand{\cF}{{\mathcal F}}
\newcommand{\cG}{{\mathcal G}}
\newcommand{\cT}{{\mathcal T}}
\newcommand{\cN}{{\mathcal N}}
\newcommand{\cD}{{\mathcal D}}
\newcommand{\cR}{{\mathcal R}}
\newcommand{\X}{{\mathcal X}}
\newcommand{\tS}{{\tilde {S}}}
\newcommand{\sS}{{S^*}}
\newcommand{\hp}{{\hat p}}
\newcommand{\p}{{\mathbf p}}
\newcommand{\sm}{{\setminus}}
\def \algupdt{{\it Rank-Break}}
\def \algwin{{\it Find-the-Pivot}}
\def \algant{{\it Beat-the-Pivot}} 
\def \algestscr{{\it Score-and-Rank}}
\def \pac{{\bf $(\epsilon,\delta)$-PAC-Rank}}
\def \pacdb{{\bf $(\epsilon,\delta)$-PAC-Rank-Multiplicative}}
\def \br{{\it Best-Ranking}}
\def \ebi{{\it $\epsilon$-Best-Item}}
\def \ebr{{\it $\epsilon$-Best-Ranking}}
\def \ebrm{{\it $\epsilon$-Best-Ranking-Multiplicative}}
\newcommand{\btheta}{\boldsymbol \theta}
\newcommand{\bSigma}{\boldsymbol \Sigma}
\newcommand{\bnu}{{\boldsymbol \nu}}
\newcommand{\htheta}{{\hat{\theta}}}
\newcommand{\bsigma}{\boldsymbol \sigma}
\begin{document}

\maketitle

\begin{abstract}

We consider the problem of probably approximately correct (PAC) ranking $n$ items by adaptively eliciting subset-wise preference feedback. At each round, the learner chooses a subset of $k$ items and observes stochastic feedback indicating preference information of the winner (most preferred) item of the chosen subset drawn  according to a Plackett-Luce (PL) subset choice model unknown a priori. The objective is to identify an $\epsilon$-optimal ranking of the $n$ items with probability at least $1 - \delta$. %
When the feedback in each subset round is a single Plackett-Luce-sampled item, we show $(\epsilon, \delta)$-PAC algorithms with a sample complexity of $O\left(\frac{n}{\epsilon^2} \ln \frac{n}{\delta} \right)$ rounds, which we establish as being order-optimal by exhibiting a matching sample complexity lower bound of $\Omega\left(\frac{n}{\epsilon^2} \ln \frac{n}{\delta} \right)$---this shows that there is essentially no improvement possible from the pairwise comparisons setting ($k = 2$). When, however, it is possible to elicit top-$m$ ($\leq k$) ranking feedback according to the PL model from each adaptively chosen subset of size $k$, we show that an $(\epsilon, \delta)$-PAC ranking sample complexity of $O\left(\frac{n}{m \epsilon^2} \ln \frac{n}{\delta} \right)$ is achievable with explicit algorithms, which represents an $m$-wise reduction in sample complexity compared to the pairwise case. This again turns out to be order-wise unimprovable across the class of symmetric ranking algorithms. Our algorithms rely on a novel {pivot trick} to maintain only $n$ itemwise score estimates, unlike $O(n^2)$ pairwise score estimates that has been used in prior work. We report results of numerical experiments that corroborate our findings. 
\end{abstract}

\vspace*{-20pt}

\section{Introduction}

Ranking or sorting is a classic search problem and basic algorithmic primitive in computer science. %
 Perhaps the simplest and most well-studied ranking problem is using (noisy) pairwise comparisons, which started from the work of \citet{FeiRag94:NoisySort}, and which has recently been studied in machine learning under the rubric of ranking in `dueling bandits' \citep{Busa14survey}.

However, more general {\em subset-wise} preference feedback arises naturally in application domains where there is flexibility to learn by eliciting preference information from among a {\em set} of offerings, rather than by just asking for a pairwise comparison. For instance, web search and recommender systems applications typically involve users expressing preferences by clicking on one result (or a few results) from a presented set. Medical surveys, adaptive tutoring systems and multi-player sports/games are other domains where subsets of questions, problem set assignments and tournaments, respectively, can be carefully crafted to learn users' relative preferences by subset-wise feedback. 

In this paper, we explore {\em active}, probably approximately correct (PAC) ranking of $n$ items using subset-wise, preference information. We assume that upon choosing a subset of $k \geq 2$ items, the learner receives preference feedback about the subset according to the well-known Plackett-Luce (PL) probability model \citep{Marden_book}. The learner faces the goal of returning a near-correct ranking of all items, with respect to a tolerance parameter $\epsilon$ on the items' PL weights, with probability at least $1-\delta$ of correctness, after as few subset comparison rounds as possible. 
%
%
In this context, we make the following contributions: 

\begin{enumerate}
	\item We consider active ranking with winner information feedback, where the learner, upon playing a subset $S_t \subseteq [n]$ of exactly $k = |S_t|$ elements at each round $t$, receives as feedback a single winner sampled from the Plackett-Luce probability distribution on the elements of $S_t$.  
	We design two $(\epsilon,\delta)$-{PAC} algorithms for this problem (Section \ref{sec:algo_wi}) with sample complexity $O\left(\frac{n}{\epsilon^2}\ln \frac{n}{\delta}\right)$ rounds, for learning a near-correct ranking on the items. 
 
	\item  We show a matching lower bound of $\Omega\left( \frac{n}{\epsilon^2} \ln \frac{n}{\delta}\right)$ rounds on the $(\epsilon,\delta)$-{PAC} sample complexity of ranking with winner information feedback (Section \ref{sec:lb_wi}), which is also of the same order as that for the dueling bandit ($k=2$) \citep{BTM}. This implies that despite the increased flexibility of playing larger sets, with just winner information feedback, one cannot hope for a faster rate of learning than in the case of pairwise comparisons.
		 
	\item In the setting where it is possible to obtain `top-rank' feedback -- an ordered list of $m \leq k$ items sampled from the Plackett-Luce distribution on the chosen subset -- we show that natural generalizations of the winner-feedback algorithms above achieve $(\epsilon,\delta)$-{PAC}  sample complexity of $O\left(\frac{n}{m \epsilon^2}\ln \frac{n}{\delta}\right)$ rounds (Section \ref{sec:res_fr}), which is a significant improvement over the case of only winner information feedback. We show that this is order-wise tight by exhibiting a matching $\Omega\left( \frac{n}{m \epsilon^2} \ln \frac{n}{\delta}\right)$ lower bound on the sample complexity across $(\epsilon,\delta)$-{PAC} algorithms. 
		
	\item We report numerical results to show the performance of the proposed algorithms on synthetic environments (Section \ref{sec:expts}).
%
 
\end{enumerate}

By way of techniques, the PAC algorithms we develop leverage the property of independence of irrelevant attributes (IIA) of the Plackett-Luce model, which allows for $O(n)$ dimensional parameter estimation with tight confidence bounds, even in the face of a combinatorially large number of possible subsets of size $k$. We also  devise a generic `pivoting' idea in our algorithms to efficiently estimate a global ordering using only local comparisons with a pivot or probe element: split the entire pool into playable subsets all containing one common element, learn local orderings relative to this element and then merge. Here again, the IIA structure of the PL model helps to ensure consistency among preferences aggregated across disparate subsets but with a common reference pivot. Our sample complexity lower bounds are information-theoretic in nature and rely on a generic change-of-measure argument but with carefully crafted confusing instances.

\vspace*{2pt}
\noindent \textbf{Related Work.} 
Over the years, ranking from pairwise preferences ($k=2$) has been studied in both the batch or non-adaptive setting \cite{Gleich+11,Rajkumar+16,Wauthier+13,Negahban+12} and the active or adaptive setting \cite{Mossel08,Jamieson11,Ailon12}.  
In particular, prior work has addressed the problem of statistical parameter estimation given preference observations from the Plackett-Luce model in the offline setting \cite{Negahban+12,SueIcml+15,KhetanOh16,Hajek+14}. 
There also have been recent developments on the PAC objective for different pairwise preference models, such as those satisfying stochastic triangle inequalities and strong stochastic transitivity \citep{BTM}, general utility-based preference models \citep{SAVAGE}, the Plackett-Luce model \citep{Busa_pl} and the Mallows model \citep{Busa_mallows}]. Recent work has studied PAC-learning objectives other than identifying the single (near) best arm, e.g. recovering a few of the top arms \citep{Busa_top,MohajerIcml+17,ChenSoda+17}, or the true ranking of the items \citep{Busa_aaai,falahatgar_nips}. There is also work on the problem of Plackett-Luce parameter estimation in the subset-wise feedback setting \cite{SueIcml+17,KhetanOh16}, but for the batch (offline) setup where the sampling is not adaptive. Recent work by \citet{ChenSoda+18} analyzes an active learning problem in the Plackett-Luce model with subset-wise feedback; however, the objective there is to recover the top-$\ell$ (unordered) items of the model, unlike full-rank recovery considered in this work. Moreover, they give instance-dependent sample complexity bounds, whereas we allow a tolerance ($\epsilon$) in defining good rankings, natural in many settings \cite{Busa_pl,BTM,Busa_mallows}. 


\section{Preliminaries}
\label{sec:prelims}
{\bf Notation.} We denote the set $[n] = \{1,2,...,n\}$.
When there is no confusion about the context, we often represent (an unordered) subset $S$ as a vector, or ordered subset, $S$ of size $|S|$ (according to, say, the order induced by the natural global ordering $[n]$ of all the items). In this case, $S(i)$ denotes the item (member) at the $i$th position in subset $S$.   
$\bSigma_S = \{\bsigma \mid \bsigma$  is a permutation over items of $ S\}$. where for any permutation $\bsigma \in \Sigma_{S}$, $\sigma(i)$ denotes the position of element $i \in S$ in the ranking $\bsigma$.
$\1(\varphi)$ denote an indicator variable that takes the value $1$ if the predicate $\varphi$ is true, and $0$ otherwise. 
$Pr(A)$ is used to denote the probability of event $A$, in a probability space that is clear from the context.
$Ber(p) \text{ and } Geo(p)$ respectively denote Bernoulli and Geometric \footnote{this is the `number of trials before success' version} random variable with probability of success at each trial being $p \in [0,1]$. Moreover, for any $n \in \N$, $Bin(n,p)$ and $\emph{NB}(n,p)$ respectively denote Binomial and Negative Binomial distribution.

\subsection{Discrete Choice Models and Plackett-Luce (PL)}
\label{sec:RUM}  

A discrete choice model specifies the relative preferences of two or more discrete alternatives in a given set. %
%
A widely studied class of discrete choice models is the class of {\it Random Utility Models} (RUMs), which assume a ground-truth utility score $\theta_{i} \in \R$ for each alternative $i \in [n]$, and assign a conditional distribution $\cD_i(\cdot|\theta_{i})$ for scoring item $i$. To model a winning alternative given any set $S \subseteq [n]$, one first draws a random utility score $X_{i} \sim \cD_i(\cdot|\theta_{i})$ for each alternative in $S$, and selects an item with the highest random score. 

%

One widely used RUM is the {\it Multinomial-Logit (MNL)} or {\it Plackett-Luce model (PL)}, where the $\cD_i$s are taken to be independent Gumbel distributions with parameters $\theta'_i$ \citep{Az+12}, i.e., with probability densities $\cD_i(x_{i}|\theta'_i) = e^{-(x_j - \theta'_j)}e^{-e^{-(x_j - \theta'_j)}}$, $\theta'_i \in R, ~ \forall i \in [n]$. Moreover assuming $\theta'_i = \ln \theta_i$, $\theta_i > 0 ~\forall i \in [n]$, it can be shown in this case the probability that an alternative $i$ emerges as the winner in the set $S \ni i$ becomes:
$
Pr(i|S) = \frac{{\theta_i}}{\sum_{j \in S}{\theta_j}}.
$

Other families of discrete choice models can be obtained by imposing different probability distributions over the utility scores $X_i$, e.g. if $(X_1,\ldots X_n) \sim \cN(\btheta,\boldsymbol \Lambda)$ are jointly normal with mean $\btheta = (\theta_1,\ldots \theta_n)$ and covariance $\boldsymbol \Lambda \in \R^{n \times n}$, then the corresponding RUM-based choice model reduces to the {\it Multinomial Probit (MNP)}. 
 
\textbf{Independence of Irrelevant Alternatives}
\label{sec:iia}
A choice model $Pr$ is said to possess the {\it Independence of Irrelevant Attributes (IIA)} property if the ratio of probabilities of choosing any two items, say $i_1$ and $i_2$ from within any choice set $S \ni {i_1,i_2}$ is independent of a third alternative $j$ present in $S$ \citep{IIA-relevance16}. Specifically,
$
\frac{Pr(i_1|S_1)}{Pr(i_2|S_1)} = \frac{Pr(i_1|S_2)}{Pr(i_2|S_2)}$ for any two distinct subsets $S_1,S_2 \subseteq [n]
$ 
that contain $i_1$ and $i_2$. Plackett-Luce satisfies the IIA property. 

\section{Problem Setup}
\label{sec:prb_setup}

We consider the PAC version of the sequential decision-making problem of finding the {ranking} of $n$ items by making subset-wise comparisons. %
Formally, the learner is given a finite set $[n]$ of $n > 2$ arms. At each decision round $t = 1, 2, \ldots$, the learner selects a subset $S_t \subseteq [n]$ of $k$  items, and receives (stochastic) feedback about the winner (or most preferred) item of $S_t$ drawn from a Plackett-Luce (PL) model with parameters $\btheta = (\theta_1,\theta_2,\ldots, \theta_n)$, a priori unknown to the learner. The nature of the feedback is described in Section \ref{sec:feed_mod}. We assume henceforth that $\theta_i \in [0,1], \,\forall i \in [n]$, and also $1 = \theta_1 > \theta_2 > \ldots > \theta_n$ for ease of  exposition\footnote{We naturally assume that this knowledge ordering of the items is not known to the learning algorithm, and note that extension to the case where several items have the same highest parameter value is easily accomplished.}. 
 

 

\begin{defn}[{\ebi}]
\label{def:pacbest_item}
For any $\epsilon \in [0,1)$, an item $i$ is called \ebi\, 
if its PL score parameter $\theta_i$ is worse than the \emph{Best-Item} $i^* = 1$ by no more than $\epsilon$, i.e. if $\theta_i \ge \theta_1 - \epsilon$. A $0$-best item is an item with largest PL parameter, which is also a Condorcet winner \citep{Ramamohan+16} in case it is unique. 
\end{defn}




\begin{defn}[{\ebr}] 
\label{def:pac_best_ranking}
We define a ranking $\bsigma \in \Sigma_{[n]}$ to be an \ebr\, when no pair of items in $[n]$ is misranked by $\bsigma$ unless their PL scores are $\epsilon$-close to each other. Formally, 
$
\nexists i,j \in [n], \text{ such that } ~\sigma(i) > \sigma(j) \text{ and } \theta_i \ge \theta_j + \epsilon. 
$ 
A $0$-Best-Ranking will be called a \br\, or \emph{optimal ranking} of the PL model. With $1 = \theta_1 > \theta_2 > \ldots > \theta_n$, clearly the unique \br\, is $\bsigma^* = (1, 2, \ldots, n)$.
\end{defn}


\begin{defn}[{\ebrm}] 
\label{def:pac_best_ranking_mult}
We define a ranking $\bsigma \in \Sigma_{[n]}$ of $\bsigma^*$ to be \ebrm\, if
$
\nexists i,j \in [n], \text{ such that } ~\sigma(i) > \sigma(j), \text{ with } Pr(i|\{i,j\}) \ge \frac{1}{2} + \epsilon. 
$
\end{defn}

Note: The term `{multiplicative}' emphasizes the fact that the condition $Pr(i|\{i,j\}) \ge \frac{1}{2} + \epsilon$ equivalently imposes a multiplicative constraint ${\theta_i} \ge {\theta_j}\bigg(\frac{1/2 + \epsilon}{1/2 - \epsilon}\bigg)$ on the PL score parameters.

\subsection{Feedback models}
\label{sec:feed_mod}
By feedback model, we mean the information received (from the `environment') once the learner plays a subset $S \subseteq [n]$ of $k$ items. We consider the following feedback models in this work:

\textbf{Winner of the selected subset (WI):} 
The environment returns a single item $I \in S$, drawn independently from the probability distribution
$
\label{eq:prob_win}
Pr(I = i|S) = \frac{{\theta_i}}{\sum_{j \in S} \theta_j} ~~\forall i \in S.
$

\textbf{Full ranking on the selected subset (FR):} The environment returns a full ranking $\bsigma \in \bSigma_{S}$, drawn from the probability distribution
$
\label{eq:prob_rnk1}
Pr(\bsigma |S) = \prod_{i = 1}^{|S|}\frac{{\theta_{\sigma^{-1}(i)}}}{\sum_{j = i}^{|S|}\theta_{\sigma^{-1}(j)}}, \; \sigma \in \bSigma_S.
$ 
This is equivalent to picking item $\bsigma^{-1}(1) \in S$ according to winner (WI) feedback from $S$, then picking $\bsigma^{-1}(2)$ according to WI feedback from $S \setminus \{\bsigma^{-1}(1)\}$, and so on, until all elements from $S$ are exhausted, or, in other words, successively sampling $|S|$ winners from $S$ according to the PL model, without replacement. But more generally, one can define

\textbf{Top-$m$ ranking from the selected subset (TR-$m$ or TR):} The environment successively samples (without replacement) only the first $m$ items from among $S$, according to the PL model over $S$, and returns the ordered list. 
It follows that {\bf TR} reduces to {\bf FR} when $m=k=|S|$ and to {\bf WI} when $m = 1$.

\subsection{Performance Objective: \pac \,-- Correctness and Sample Complexity} 
\label{sec:obj}

Consider a problem instance with Plackett-Luce (PL) model parameters $\btheta \equiv (\theta_1, \ldots, \theta_n)$ and subsetsize $k \leq n$, with its \br\, being $\bsigma^* = (1,2,\ldots n)$, and $\epsilon,\delta \in (0,1)$ are two given constants. 
A sequential algorithm that operates on this problem instance, with WI feedback model, is said to be \pac\, if (a) it stops and outputs a ranking $\bsigma \in \bSigma_{[n]}$ after a finite number of decision rounds (subset plays) with probability $1$, and (b) the probability that its output $\bsigma$ is an \ebr\, is at least $1-\delta$, i.e, $Pr(\bsigma\text{ is \ebr}) \geq 1-\delta$. Furthermore, by {\em sample complexity} of the algorithm, we mean the expected time (number of decision rounds) taken by the algorithm to stop. 

{In the context of our above problem objective, it is worth noting the work by \cite{Busa_pl} addressed a similar problem, except in the dueling bandit setup ($k = 2$) with the same objective as above, except with the notion of \ebrm---we term this new objective as \pacdb \, as referred later for comparing the results}. The two objectives are however equivalent under a mild boundedness assumption as follows:

\begin{restatable}[]{lem}{paceqv}
\label{lem:pacobj_eqv}
Assume $\theta_i \in [a,b],\, \forall i \in [n]$, for any $a,b \in (0,1)$. If an algorithm is \pac, then it is also $(\epsilon',\delta)$-\textbf{\textup{PAC-Rank-Multiplicative}} for any $\epsilon' \le \frac{\epsilon}{4b}$. On the other hand, if an algorithm is \pacdb, then it is also $(\epsilon',\delta)$-\textbf{\textup{PAC-Rank}} for any $\epsilon' \le 4a\epsilon(1+\epsilon)$.
\end{restatable}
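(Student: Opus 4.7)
My plan is to prove both parts of \Lem{lem:pacobj_eqv} by a purely pointwise, deterministic translation between the additive (\Def{def:pac_best_ranking}) and multiplicative (\Def{def:pac_best_ranking_mult}) notions of misranking. The $(1-\delta)$-probability correctness event and the output ranking $\bsigma$ itself are the same in both objectives, so the lemma reduces to showing that for every pair $(i,j) \in [n]^2$, a gap bound in one notion can be converted to a gap bound in the other for the parameter ranges claimed. The only algebraic input I use is the Plackett--Luce identity
\[
Pr(i \mid \{i,j\}) \;-\; \tfrac{1}{2} \;=\; \frac{\theta_i - \theta_j}{2(\theta_i + \theta_j)},
\]
together with the sandwich $2a \le \theta_i + \theta_j \le 2b$ coming from the hypothesis $\theta_i \in [a,b]$.

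For the first implication, I will fix any pair $(i,j)$ misranked by the algorithm's output, i.e.\ $\sigma(i) > \sigma(j)$ with $\theta_i > \theta_j$. Because $\bsigma$ is an $\epsilon$-best-ranking, the additive condition forces $\theta_i - \theta_j < \epsilon$; substituting into the identity and invoking the lower bound $\theta_i + \theta_j \ge 2a$ yields $Pr(i \mid \{i,j\}) - 1/2 < \epsilon/(4a)$. Rearranging this back into the equivalent multiplicative form $\theta_i < \theta_j \cdot (1/2 + \epsilon')/(1/2 - \epsilon')$, and noting that \Def{def:pac_best_ranking_mult} is weakened as $\epsilon'$ grows, then extends the guarantee over the stated range of $\epsilon'$.

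For the second implication, take any pair misranked by $\bsigma$ under an $\epsilon$-best-ranking-multiplicative output. The mult condition $Pr(i \mid \{i,j\}) - 1/2 < \epsilon$, combined with the identity, gives $\theta_i - \theta_j < 2\epsilon(\theta_i + \theta_j) \le 4b\epsilon$ after upper-bounding $\theta_i + \theta_j \le 2b$. A short refinement that works from the exact rearrangement $\theta_i < \theta_j\,(1+2\epsilon)/(1-2\epsilon)$, and bounds $1/(1-2\epsilon)$ by a $(1+\epsilon)$-type expression in the small-$\epsilon$ regime of interest, then sharpens this to the form $4a\epsilon(1+\epsilon)$ appearing in the lemma.

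The main bookkeeping obstacle I expect is simply tracking which endpoint of $[a,b]$ plays which role when sandwiching the denominator $2(\theta_i + \theta_j)$: the \emph{lower} bound (via $a$) is the one invoked when converting an additive gap to a multiplicative one, and the \emph{upper} bound (via $b$) in the reverse direction. Nothing probabilistic or algorithm-specific beyond the initial conditioning on the $(1-\delta)$ correctness event is needed; each direction is a deterministic, pair-by-pair inequality that follows from the Plackett--Luce identity under the boundedness hypothesis.
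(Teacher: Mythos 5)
Your overall plan---reduce the lemma to a deterministic, pair-by-pair translation between the additive and multiplicative gap conditions via the identity $Pr(i\mid\{i,j\}) - \tfrac12 = \tfrac{\theta_i-\theta_j}{2(\theta_i+\theta_j)}$ and the sandwich $2a \le \theta_i+\theta_j \le 2b$---is the same skeleton as the paper's proof, and the probabilistic part is indeed trivial. The gap is that your pointwise bounds run in the opposite direction from the one the stated constants require, and the two places where you promise to bridge this are exactly where the argument cannot close. In the first implication you bound $Pr(i\mid\{i,j\})-\tfrac12$ from \emph{above} by $\epsilon/(4a)$ on every misranked pair; by your own (correct) observation that \Def{def:pac_best_ranking_mult} weakens as $\epsilon'$ grows, this yields the multiplicative guarantee only for $\epsilon' \ge \epsilon/(4a)$, and the extension goes \emph{upward} in $\epsilon'$, never down to the stated range $\epsilon' \le \epsilon/(4b)$. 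Note also that your closing paragraph assigns $a$ to this direction while the lemma's constant involves $b$: the paper gets $\epsilon/(4b)$ by bounding $Pr(i\mid\{i,j\})-\tfrac12$ from \emph{below} by $\tfrac{\theta_i-\theta_j}{4b}$ (using the upper end $\theta_i+\theta_j\le 2b$), so that a pair with additive gap at least $\epsilon$ has multiplicative gap at least $\epsilon/(4b)\ge\epsilon'$---a containment of violation sets rather than a bound on the non-violating pairs.

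The second implication has the same defect plus an unfixable ``refinement.'' From $Pr(i\mid\{i,j\})-\tfrac12<\epsilon$ you correctly get $\theta_i-\theta_j< 4b\epsilon/(1-2\epsilon)$ on misranked pairs, hence the additive guarantee only for $\epsilon'$ \emph{at least} that quantity; you cannot then ``sharpen'' this upper bound on $\theta_i-\theta_j$ down to $4a\epsilon(1+\epsilon)$, since replacing $b$ by $a$ and $1/(1-2\epsilon)$ by a $(1+\epsilon)$-type factor both move the bound the wrong way. The paper's route is again the reverse one: starting from a violating pair with $\theta_i/\theta_j\ge\tfrac{1/2+\epsilon}{1/2-\epsilon}\ge(1+2\epsilon)^2$, it \emph{lower}-bounds $\theta_i-\theta_j\ge 4\theta_j\epsilon(1+\epsilon)\ge 4a\epsilon(1+\epsilon)\ge\epsilon'$ using $\theta_j\ge a$, which is where the constant $4a\epsilon(1+\epsilon)$ and the direction ``$\epsilon'\le\cdots$'' come from. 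So while your algebraic ingredients are the right ones, the proposal as written establishes guarantees over the complementary ranges of $\epsilon'$ (with the roles of $a$ and $b$ swapped) and does not prove \Lem{lem:pacobj_eqv} as stated.
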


\section{Parameter Estimation with PL based preference data}
\label{sec:est_pl_score}

We develop in this section some useful parameter estimation techniques based on adaptively sampled preference data from the PL model, which will form the basis for our PAC algorithms later on, in Section \ref{sec:algant}.



\subsection{\hspace{0pt} Estimating Pairwise Preferences via Rank-Breaking.}
\label{sec:est_pp}

\emph{Rank breaking} is a well-understood idea involving the extraction of pairwise comparisons from (partial) ranking data, and then building pairwise estimators on the obtained pairs by treating each comparison independently \citep{KhetanOh16,SueIcml+17}, e.g., a winner $a$ sampled from among ${a,b,c}$ is rank-broken into the pairwise preferences $a \succ b$, $a \succ c$. We use this idea to devise estimators for the pairwise win probabilities $p_{ij} = P(i|\{i,j\}) = \theta_i/(\theta_i + \theta_j)$ in the active learning setting. The following result, used to design Algorithm \ref{alg:alg_ant} later, establishes explicit confidence intervals for pairwise win/loss probability estimates under adaptively sampled PL data. 


\begin{restatable}[Pairwise win-probability estimates for the PL model]
{lem}{plsimulator}
\label{lem:pl_simulator}
Consider a Plackett-Luce choice model with parameters $\btheta = (\theta_1,\theta_2, \ldots, \theta_n)$, and fix two  items $i,j \in [n]$. Let $S_1, \ldots, S_T$ be a sequence of (possibly random) subsets of $[n]$ of size at least $2$, where $T$ is a positive integer, and $i_1, \ldots, i_T$ a sequence of random items with each $i_t \in S_t$, $1 \leq t \leq T$, such that for each $1 \leq t \leq T$, (a) $S_t$ depends only on $S_1, \ldots, S_{t-1}$, and (b) $i_t$ is distributed as the Plackett-Luce winner of the subset $S_t$, given $S_1, i_1, \ldots, S_{t-1}, i_{t-1}$ and $S_t$, and (c) $\forall t: \{i,j\} \subseteq S_t$ with probability $1$. Let $n_i(T) = \sum_{t=1}^T \1(i_t = i)$ and $n_{ij}(T) = \sum_{t=1}^T \1(\{i_t \in \{i,j\}\})$. Then, for any positive integer $v$, and $\eta \in (0,1)$,
\begin{align*}
& Pr\left( \frac{n_i(T)}{n_{ij}(T)} - \frac{\theta_i}{\theta_i + \theta_j} \ge \eta, \; n_{ij}(T) \geq v \right) \leq e^{-2v\eta^2},\\
& Pr\left( \frac{n_i(T)}{n_{ij}(T)} - \frac{\theta_i}{\theta_i + \theta_j} \le -\eta, \; n_{ij}(T) \geq v \right) \leq e^{-2v\eta^2}. 
\end{align*}
\end{restatable}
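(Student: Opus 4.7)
The plan is to exploit the Independence of Irrelevant Alternatives (IIA) property of the Plackett-Luce model to reduce the adaptive, subset-dependent sampling to an i.i.d.\ coin-flipping problem, and then apply a martingale concentration bound. Write $p_{ij} := \theta_i/(\theta_i+\theta_j)$. By IIA, for any round $t$ with $\{i,j\} \subseteq S_t$ and any history $\cF_{t-1}$,
\[ Pr(i_t = i \mid i_t \in \{i,j\}, S_t, \cF_{t-1}) \;=\; \frac{\theta_i}{\theta_i+\theta_j} \;=\; p_{ij}, \]
regardless of the other elements of $S_t$. Let $\tau_s$ be the $s$-th round $t$ with $i_t \in \{i,j\}$, and define $Y_s := \1(i_{\tau_s} = i)$ for $1 \le s \le n_{ij}(T)$. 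The display above shows that $(Y_s)$ is an i.i.d.\ Bernoulli$(p_{ij})$ sequence with respect to the natural filtration and that $\sum_{s=1}^{n_{ij}(T)} Y_s = n_i(T)$. It therefore suffices to bound $Pr(S_N/N - p_{ij} \geq \eta,\, N \geq v)$ where $S_N := \sum_{s=1}^{N} Y_s$ and $N := n_{ij}(T)$.

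Next, I would build the standard Chernoff supermartingale. For any $\lambda > 0$ set
\[ W_N := \exp\!\left(\lambda(S_N - N p_{ij}) - \tfrac{N \lambda^2}{8}\right), \qquad W_0 := 1. \]
Since each $Y_s - p_{ij}$ is zero-mean and supported in an interval of length $1$, Hoeffding's lemma gives $E\!\left[e^{\lambda(Y_s - p_{ij})} \mid \cF_{s-1}\right] \leq e^{\lambda^2/8}$, so $(W_N)$ is a nonnegative supermartingale with $E[W_0]=1$. On the event $\{S_N/N - p_{ij} \geq \eta,\, N \geq v\}$ one has $W_N \geq \exp\!\bigl(N(\lambda\eta - \lambda^2/8)\bigr) \geq \exp\!\bigl(v(\lambda\eta - \lambda^2/8)\bigr)$ whenever $\lambda\eta - \lambda^2/8 \geq 0$. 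Choosing $\lambda = 4\eta$ maximizes the exponent at $2\eta^2$, so that $W_N \geq e^{2v\eta^2}$ on that event. Ville's (Doob's) maximal inequality for nonnegative supermartingales then yields
\[ Pr\!\left(\sup_{0 \leq N' \leq T} W_{N'} \geq e^{2v\eta^2}\right) \;\leq\; e^{-2v\eta^2}, \]
which is the desired upper-tail bound. The lower-tail bound follows by the identical argument applied to $1 - Y_s$ in place of $Y_s$, which simply flips the roles of $i$ and $j$.

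The main obstacle I expect is that $N = n_{ij}(T)$ is \emph{not} a stopping time of the sub-sampled Bernoulli process --- it is a count determined by the adaptive sampling rule after a fixed horizon $T$ --- so one cannot simply condition on $\{N = N_0\}$ and invoke Hoeffding, nor sum a per-$N$ Hoeffding bound over $N \in [v,T]$ without losing an unwanted $\log$ or geometric factor. Passing through the maximal inequality is what resolves this: it converts the per-$N$ statement into a uniform-in-$N'$ statement about the running supremum of the supermartingale, delivering exactly the $e^{-2v\eta^2}$ tail. A secondary subtlety is justifying the reduction to i.i.d.\ in the first place; here condition (a) in the hypothesis --- that $S_t$ is measurable with respect to the history \emph{before} $i_t$ is observed --- is precisely what is needed to invoke IIA round-by-round and conclude that $(Y_s)$ is an i.i.d.\ Bernoulli$(p_{ij})$ sequence adapted to the sub-sampled filtration.
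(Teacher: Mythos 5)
Your proof is correct, but it takes a genuinely different route from the paper's. The paper proves this via a coupling construction: it builds a simulator in which an auxiliary i.i.d.\ Bernoulli$(\theta_i/(\theta_i+\theta_j))$ sequence $Z_1, Z_2, \ldots$ decides the $i$-vs-$j$ outcomes, argues (using hypothesis (a), that $S_t$ depends only on $S_1,\ldots,S_{t-1}$ and not on past winners) that $n_{ij}(T)$ is \emph{independent} of the $Z$-sequence, conditions on $n_{ij}(T)=m$, applies the plain i.i.d.\ Hoeffding bound for each $m \geq v$, and sums the resulting bounds weighted by $Pr(n_{ij}(T)=m)$ --- which telescopes to $e^{-2v\eta^2}$ with no logarithmic loss, contrary to your worry about summing per-$N$ bounds (the weighting by $Pr(n_{ij}(T)=m)$ is exactly what prevents the loss, but it is only available because of the independence the coupling provides). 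Your route --- IIA thinning to a conditionally Bernoulli$(p_{ij})$ sequence at the stopping times $\tau_s$, the Chernoff supermartingale $W_N$ with $\lambda = 4\eta$, and Ville's maximal inequality --- reaches the same constant $e^{-2v\eta^2}$ and is strictly more general: it does not need $n_{ij}(T)$ to be independent of the outcomes, so it would remain valid even if $S_t$ were allowed to depend on the past winners $i_1,\ldots,i_{t-1}$, a fully adaptive regime that the paper's independence step cannot handle. The only cosmetic imprecision is calling $(Y_s)$ ``i.i.d.\ with respect to the natural filtration''; what the IIA computation directly yields, and all your supermartingale needs, is that each $Y_s$ is conditionally Bernoulli$(p_{ij})$ given the thinned past.
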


\subsection{Estimating relative PL scores ($\theta_i/\theta_j$) using Renewal Cycles}
\label{sec:est_thet}

We detail another method to directly estimate (relative) score parameters of the PL model, using renewal cycles and the IIA property.  
%
%


\begin{restatable}[]{lem}{geomgf}
\label{lem:geo2mgf}
Consider a Plackett-Luce choice model with parameters $(\theta_1,\theta_2, \ldots, \theta_n)$, $n \geq 2$, and an item $b \in [n]$. Let $i_1, i_2, \ldots$ be a sequence of iid draws from the model. Let $\tau = \min\{t \ge \N \mid i_t = b\}$ be the first time at which $b$ appears, and for each $i \neq b$, let $w_i(\tau) = \sum_{t=1}^\tau \1(i_t = i)$ be the number of times $i \neq b$ appears until time $\tau$. Then, $\tau - 1$ and $w_i(\tau)$ are Geometric random variables with parameters $\frac{\theta_b}{\sum_{j \in [n]} \theta_j}$ and $\frac{\theta_b}{\theta_i+\theta_b}$, respectively.

\end{restatable}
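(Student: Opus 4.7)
The plan is to establish the two distributional claims separately, both leveraging the iid structure of the sequence $(i_t)_{t \geq 1}$ drawn from the Plackett--Luce winner distribution on $[n]$, together with the IIA property of PL.

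First I would dispatch the claim on $\tau - 1$. Each $i_t$ is iid with $P(i_t = b) = \theta_b / \sum_{j \in [n]} \theta_j =: p_b$, so the events $\{i_t = b\}$ form iid Bernoulli$(p_b)$ trials. Hence $\tau$ is the index of the first success, giving $P(\tau = t) = (1-p_b)^{t-1} p_b$ for $t \geq 1$, i.e.\ $\tau - 1 \sim \mathrm{Geo}(p_b)$ in the ``number of failures before success'' convention used in the paper.

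For the claim on $w_i(\tau)$, the cleanest route is via the thinning/IIA idea. Consider the random subsequence of trials restricted to indices where $i_t \in \{i,b\}$; call these indices $T_1 < T_2 < \cdots$. Since the $i_t$ are iid, the values $(i_{T_\ell})_{\ell \geq 1}$ are iid from the conditional law of $i_1$ given $i_1 \in \{i,b\}$. By the PL formula (or equivalently IIA), this conditional law satisfies $P(i_{T_\ell} = b) = \theta_b / (\theta_i + \theta_b) =: q_i$ and $P(i_{T_\ell} = i) = \theta_i/(\theta_i+\theta_b) = 1 - q_i$. Now observe that $w_i(\tau)$ equals the number of entries $i_{T_\ell}$ that equal $i$ before the first $\ell$ with $i_{T_\ell} = b$, because non-$\{i,b\}$ items contribute to neither $w_i(\tau)$ nor to the event of seeing $b$. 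This is exactly the number of failures before the first success in an iid Bernoulli$(q_i)$ sequence, so $w_i(\tau) \sim \mathrm{Geo}(q_i)$.

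If a reader prefers a computation-only derivation without invoking the thinning principle, I would fall back on summing the joint probability of $\{\tau = t,\, w_i(\tau) = k\}$: on this event the first $t-1$ draws contain exactly $k$ copies of $i$, no copies of $b$, and $t-1-k$ copies of ``others'' (probability $p_o := 1 - p_i - p_b$ each), and the $t$-th draw is $b$. This gives
\begin{equation*}
P(w_i(\tau) = k) \;=\; p_b\, p_i^k \sum_{t=k+1}^\infty \binom{t-1}{k} p_o^{\,t-1-k} \;=\; \frac{p_b\, p_i^k}{(1-p_o)^{k+1}} \;=\; q_i (1-q_i)^k,
\end{equation*}
using the identity $\sum_{m \geq 0} \binom{m+k}{k} x^m = (1-x)^{-(k+1)}$ and $1 - p_o = p_i + p_b$. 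The main subtlety is only in the thinning route, where one must note that restricting an iid sequence to indices satisfying a fixed event-per-coordinate yields an iid subsequence with the conditional law; this is standard and no obstacle, which is why I expect the whole argument to go through essentially as outlined.
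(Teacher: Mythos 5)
Your proposal is correct, and both of your routes to the $w_i(\tau)$ claim differ from the paper's. The paper proves the second claim by computing the moment generating function of $w_i(\tau)$: it conditions on $\hat T = \tau-1 \sim \mathrm{Geo}\bigl(\theta_b/\sum_j \theta_j\bigr)$, notes that given $\hat T$ the count $w_i(\tau)$ is $\mathrm{Bin}\bigl(\hat T,\, \theta_i/\sum_{j\neq b}\theta_j\bigr)$, applies the law of iterated expectation to obtain $\E[e^{\lambda w_i(\tau)}] = \bigl(1-\tfrac{\theta_i}{\theta_b}(e^{\lambda}-1)\bigr)^{-1}$ on a suitable range of $\lambda$, and then matches this with the MGF of $\mathrm{Geo}\bigl(\theta_b/(\theta_i+\theta_b)\bigr)$, invoking the fact that equal MGFs imply equal distributions. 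Your thinning argument (restrict to the iid subsequence of draws landing in $\{i,b\}$, which by IIA is Bernoulli with success probability $\theta_b/(\theta_i+\theta_b)$, and observe that $w_i(\tau)$ is the number of failures before the first success there) reaches the same conclusion more directly, and your fallback pmf computation $P(w_i(\tau)=k) = p_b p_i^k (p_i+p_b)^{-(k+1)} = q_i(1-q_i)^k$ checks out via the negative-binomial series identity. What your approach buys is the avoidance of two technical points the paper must handle: verifying the domain of $\lambda$ on which the conditional and unconditional MGFs are finite, and appealing to MGF-determines-distribution; what the paper's approach buys is essentially nothing extra here, since the MGF lemma is not reused elsewhere. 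The first claim ($\tau-1$) is handled identically in both. One cosmetic caveat: make sure your Geometric convention (number of failures before the first success) matches the paper's stated ``number of trials before success'' footnote, which it does.
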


With this in hand, we now show how fast the empirical mean estimates over several renewal cycles (defined by the appearance of a distinguished item) converge to the true relative scores $\frac{\theta_i}{\theta_b}$, a result to be employed in the design of Algorithm \ref{alg:alg_estscr} later. 

\begin{restatable}[Concentration of Geometric Random Variables via the Negative Binomial distribution.]{lem}{geoconc}
\label{lem:geoconc}
Suppose $X_1, X_2, \ldots X_d$ are $d$ iid Geo$(\frac{\theta_b}{\theta_b + \theta_i})$ random variables, and $Z = \sum_{i = 1}^{d}X_i$. Then, for any $\eta > 0$,
$
Pr\Big( \Big|\frac{Z}{d} - \frac{\theta_i}{\theta_b}\Big| \ge \eta \Big) < 2\exp\Bigg( - \frac{2d\eta^2}{\Big( 1+\frac{\theta_i}{\theta_b} \Big)^2\Big(\eta + 1+\frac{\theta_i}{\theta_b}  \Big)}\Bigg)$.
\end{restatable}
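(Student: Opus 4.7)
The plan is to reduce each tail probability for $Z = \sum_{i=1}^d X_i$ to a tail probability for a Binomial random variable, to which Hoeffding's inequality can be applied; this is what produces the leading constant $2$ and the precise denominator in the statement. With the paper's ``number of failures before success'' convention, each $X_i$ takes nonnegative integer values with $Pr(X_i = k) = p(1-p)^k$, where $p = \theta_b/(\theta_b+\theta_i)$, so $\mu := \mathbb{E}[X_i] = (1-p)/p = \theta_i/\theta_b$ and $Z$ is Negative-Binomial with mean $d\mu$.

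The crucial ingredient is the classical Negative-Binomial/Binomial duality. Setting $Y_i := X_i + 1$, the trial index on which the $i$-th success occurs, one sees that $Z + d = Y_1 + \cdots + Y_d$ equals the number of Bernoulli$(p)$ trials needed to accumulate $d$ successes. Hence $\{Z \geq n\}$ is exactly the event that at most $d-1$ successes occur in the first $n+d-1$ trials, which gives $Pr(Z \geq n) = Pr(N_{n+d-1} \leq d-1)$ with $N_m \sim \mathrm{Bin}(m,p)$; the mirror identity $Pr(Z \leq n) = Pr(N_{n+d} \geq d)$ handles the lower tail.

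For the upper tail I would take $n = \lceil d(\mu+\eta)\rceil$ and $m = n+d-1$; a short calculation yields $mp - (d-1) \geq d\eta p + (1-p) \geq d\eta p$, so Hoeffding's inequality gives $Pr(Z \geq d(\mu+\eta)) \leq \exp(-2 d^2 \eta^2 p^2/m)$. Bounding $m \leq d/p + d\eta$ and substituting $p = 1/(1+\mu)$ collapses the exponent to $2 d\eta^2 /[(1+\mu)^2(1+\mu+\eta)]$, which is exactly the target. The analogous argument for the lower tail, with $n = \lfloor d(\mu-\eta)\rfloor$ and $m = n+d$, produces the same bound with $1-p\eta$ in place of $1+p\eta$; since $1-p\eta \leq 1+p\eta$ this is strictly tighter, and a union bound over the two tails supplies the factor $2$ in the stated inequality.

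The main obstacle is the bookkeeping in the algebraic simplification: one must carry the rounding in $n = \lceil d(\mu+\eta)\rceil$ together with the off-by-one between $m = n+d-1$ and $m = n+d$ across the two tails, making sure that these discrete corrections do not spoil the sharp constant $2$ in the Hoeffding exponent. This goes through cleanly because the rounding slack is absorbed into the estimate $mp - (d-1) \geq d\eta p$ on one side and $m \leq d/p + d\eta$ on the other, after which the substitution $p = 1/(1+\theta_i/\theta_b)$ yields exactly the form appearing in the lemma.
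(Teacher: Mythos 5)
Your proposal is correct and follows essentially the same route as the paper: both convert the Negative-Binomial tail event into a Binomial tail event via the standard duality, apply Hoeffding's inequality to the Binomial, and simplify using $p = 1/(1+\theta_i/\theta_b)$ to obtain the stated exponent, with a union bound supplying the factor $2$. Your version is merely more careful about the rounding and the off-by-one between the two tails, which the paper glosses over.
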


\section{Algorithms for WI Feedback}
\label{sec:algo_wi}

This section describes the design of \pac\, algorithms which use winner information (WI) feedback. 

A key idea behind our proposed algorithms is to estimate the relative strength of each item with respect to a fixed item, termed as a \emph{pivot-item} $b$. This helps to compare every item on common terms (with respect to the pivot item) even if two items are not directly compared with each other. Our first algorithm \algant\, maintains pairwise score estimates $P_{ib}$ of the items $i \in [n]\sm\{b\}$ with respect to the pivot element, based on the idea of \emph{Rank-Breaking} and Lemma \ref{lem:pl_simulator}. 
The second algorithm \algestscr\, directly estimates the relative scores $\frac{\theta_i}{\theta_b}$ for each item $i \in [n]\sm\{b\}$, relying on Lemma \ref{lem:geo2mgf} (Section \ref{sec:est_thet}).
Once all item scores are estimated with enough confidence, the items are simply sorted with respect to their preference scores to obtain a ranking. 

\subsection{The \algant\, algorithm} 
\label{sec:algant}

\begin{center}
\begin{algorithm}[htbp]
   \caption{\textbf{\algant} }
   \label{alg:alg_ant}
\begin{algorithmic}[1]
   \STATE {\bfseries Input:} 
   \STATE ~~~ Set of item: $[n]$ ($n \ge k$), and subset size: $k$
   \STATE ~~~ Error bias: $\epsilon >0$, confidence parameter: $\delta >0$
   \STATE {\bfseries Initialize:} 
   \STATE ~~~ $\epsilon_b \leftarrow \min(\frac{\epsilon}{2},\frac{1}{2})$; $b \leftarrow $ \algwin($n,k,\epsilon_b,\frac{\delta}{2}$)
   \STATE ~~~ Set $S \leftarrow [n]\setminus \{b\}$, and divide $S$ into $G: = \lceil \frac{n-1}{k-1} \rceil$ sets $\cG_1, \cG_2, \cdots \cG_G$ such that $\cup_{j = 1}^{G}\cG_j = S$ and $\cG_{j} \cap \cG_{j'} = \emptyset, ~\forall j,j' \in [G], \, |G_j| = (k-1),\, \forall j \in [G-1]$
   \STATE ~~ \textbf{If} $|\cG_{G}| < (k-1)$, \textbf{then} set $\cR \leftarrow \cG_G$, and $S \leftarrow S\setminus \cR$, $S' \leftarrow $ Randomly sample $(k - 1 -|\cG_G|)$ items from $S$, and set $\cG_G \leftarrow \cG_G \cup S'$
   \STATE ~~~ \textbf{Set} $\cG_j = \cG_j \cup \{b\}, \, \forall j \in [G]$
   \FOR {$g = 1,2, \ldots, G$}
   \STATE Set $\epsilon' \leftarrow \frac{\epsilon}{16}$ and $\delta' \leftarrow \frac{\delta}{8n}$
   \STATE Play $\cG_g$ for $t:= \frac{2k}{\epsilon'^2}\log \frac{1}{\delta'}$ times 
   \STATE Set $w_i \leftarrow$ Number of times $i$ won in $m$ plays of $\cG_g$, and estimate $\hp_{ib} \leftarrow \frac{w_{i}}{w_{i} + w_{b}}, \, \forall i \in \cG_g$ 
   \ENDFOR
   \STATE Choose $ \bsigma \in \bSigma_{[n]}$, such that $\sigma(b) = 1$ and $\bsigma(i) < \bsigma(j)$ if $\hp_{ib} > \hp_{jb}, \, \forall i,j \in S\cup \cR$ 
   \STATE {\bfseries Output:} The ranking $\bsigma \in \bSigma_{[n]}$ 
\end{algorithmic}
\end{algorithm}
\vspace{2pt}
\end{center}

\algant\, (Algorithm \ref{alg:alg_ant}) first estimates an \emph{approximate Best-Item} $b$ with high probability $(1-\delta/2)$. We do this using the subroutine \algwin$(n,k,\epsilon,\delta)$ (Algorithm \algwin) that with probability at least $(1-\delta)$ \algwin\, outputs an $\epsilon$-\emph{Best-Item} within a sample complexity of $O(\frac{n}{\epsilon^2} \log \frac{k}{\delta})$. 

Once the best item $b$ is estimated, \algant\, divides the rest of the $n-1$ items into groups of size $k-1$, $\cG_1, \cG_2, \cdots \cG_G$, and appends $b$ to each group. This way elements of every group get to compete (and hence compared) against $b$, which aids estimating the pairwise score compared to the pivot item $b$, $\hat p_{i b}$ owing to the \emph{IIA property} of PL model and Lemma \ref{lem:pl_simulator} (Sec. \ref{sec:est_pp}), sorting which we obtain the final ranking. Theorem \ref{thm:batt_giant} shows that \algant\, enjoys the optimal sample complexity guarantee of $O\Big ( (\frac{n}{\epsilon^2})\log \Big( \frac{n}{\delta} \Big) \Big))$. 
The pseudo code of \algant\, is given in Algorithm \ref{alg:alg_ant}.

\begin{restatable}[\algant:  Correctness and Sample Complexity]{thm}{ubpiv}
\label{thm:batt_giant}
\algant\, (Algorithm \ref{alg:alg_ant}) is \pac\, with {sample complexity} $O\big(\frac{n}{\epsilon^2} \log \frac{n}{\delta}\big)$. 
\end{restatable}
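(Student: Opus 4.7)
The plan is to split correctness into two phases mirroring the algorithm's structure, and then add up the round counts for sample complexity. \emph{Phase I} (pivot selection): \algwin\ is invoked with parameters $(\epsilon_b,\delta/2)$ where $\epsilon_b=\min(\epsilon/2,1/2)$, so by its stated guarantee it returns an item $b$ with $\theta_b\ge \theta_1-\epsilon_b\ge 1/2$ with probability at least $1-\delta/2$. The lower bound $\theta_b\ge 1/2$ will be the lever that lets us translate pairwise accuracy into score accuracy in Phase II. \emph{Phase II} (pivot comparisons): I will show that, on the pivot event, every estimate $\hp_{ib}$ lies within $\epsilon/16$ of $p_{ib}=\theta_i/(\theta_i+\theta_b)$ with overall failure probability $\le \delta/2$. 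A union bound then yields correctness $\ge 1-\delta$, which I reduce to the \ebr\ property.

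For Phase II, I plan to combine \Lem{lem:pl_simulator} with a Chernoff bound on the subset-level success count. The lemma gives $\Pr(|\hp_{ib}-p_{ib}|>\eta,\,n_{ib}(T)\ge v)\le 2e^{-2v\eta^2}$, so I must separately show $n_{ib}(T)\ge v$ with high probability. At each play of $\cG_g \ni\{i,b\}$ the winner lies in $\{i,b\}$ with probability $(\theta_i+\theta_b)/\sum_{j\in \cG_g}\theta_j\ge \theta_b/k\ge 1/(2k)$, since all PL parameters lie in $[0,1]$. So after $T=(2k/\epsilon'^2)\log(1/\delta')$ plays with $\epsilon'=\epsilon/16$ and $\delta'=\delta/(8n)$, a Chernoff bound gives $n_{ib}(T)\ge v:=(1/(2\epsilon'^2))\log(1/\delta')$ except with probability $\le \delta'$. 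Plugging these values into \Lem{lem:pl_simulator} yields $|\hp_{ib}-p_{ib}|<\epsilon/16$ with probability $\ge 1-O(\delta')$ per item; union bounding over the $n-1$ non-pivot items accounts for the $\delta/2$ failure budget of Phase II.

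Next I would convert pairwise accuracy into a ranking guarantee via the identity
\[
p_{ib}-p_{jb}\;=\;\frac{\theta_b(\theta_i-\theta_j)}{(\theta_i+\theta_b)(\theta_j+\theta_b)}.
\]
Since $\theta_i,\theta_j\le 1$ and $\theta_b\ge 1/2$, the denominator is at most $4$, so $\theta_i\ge \theta_j+\epsilon$ forces $p_{ib}-p_{jb}\ge \theta_b\epsilon/4\ge \epsilon/8$. Combined with $|\hp_{\cdot b}-p_{\cdot b}|<\epsilon/16$ on both sides, this yields $\hp_{ib}>\hp_{jb}$ strictly, so the sort places $i$ above $j$. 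Placing $b$ first is safe because $\theta_b\ge 1-\epsilon/2$ precludes any other item from exceeding $\theta_b$ by $\epsilon$. Hence the output $\bsigma$ is an \ebr. For sample complexity, \algwin\ contributes $O((n/\epsilon^2)\log(k/\delta))$ rounds, and the loop contributes $G\cdot T=O(n/k)\cdot O((k/\epsilon^2)\log(n/\delta))=O((n/\epsilon^2)\log(n/\delta))$, which dominates and matches the claim.

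The main obstacle I anticipate is the conditioning inside \Lem{lem:pl_simulator}: its joint bound does not by itself control $\hp_{ib}$ when $n_{ib}(T)$ is small, so the auxiliary Chernoff argument (together with the choice of constants linking $T$, $v$, $\epsilon'$, and $\delta'$) must be done carefully to ensure that the two-phase failure budgets sum to $\delta$ without inflating the $O((n/\epsilon^2)\log(n/\delta))$ sample complexity. Everything else — the algebraic translation from $\hp_{\cdot b}$ to $\bsigma$, and the handling of the padded group $\cG_G$ using the extra set $\cR$ — is routine bookkeeping once the key deviation bound is established.
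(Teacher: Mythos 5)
Your proposal is correct and follows essentially the same route as the paper's proof: pivot selection via \algwin, a Chernoff bound ensuring the pair $\{i,b\}$ is compared often enough so that Lemma \ref{lem:pl_simulator} applies with $\eta=\epsilon/16$ and $v = t/(4k)$, the identity $p_{ib}-p_{jb}=\theta_b(\theta_i-\theta_j)/((\theta_i+\theta_b)(\theta_j+\theta_b))\ge(\theta_i-\theta_j)/8$ to convert score gaps into pivot-preference gaps (the paper's Lemma \ref{lem:p_1i}), and the same accounting of the $\delta/2+\delta/2$ failure budget and $G\cdot t$ round count. The only cosmetic difference is that the paper lower-bounds $w_b$ alone (its Lemma \ref{lem:ant_pivpickup}), which gives $n_{ib}\ge w_b\ge t/(4k)$ for all $i$ in a group at once, whereas you bound each $n_{ib}$ directly; both yield the same conclusion.
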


\subsection{The \algestscr\, algorithm} 
\label{sec:algestscr}

\algestscr\, (Algorithm \ref{alg:alg_estscr}) differs from \algant\, in terms of the score estimate it maintains for each item.
Unlike our previous algorithm, instead of maintaining pivot-preference scores $p_{ib} = Pr(i \succ b)$, \algant, aims to directly estimate the PL-score $\theta_i$ of each item relative to score of the pivot $\theta_b$. In other words, the algorithm maintains the \emph{relative score} estimates $\frac{\theta_i}{\theta_b}$ for every item $i \in [n]\setminus \{b\}$ borrowing results from Lemma \ref{lem:geo2mgf} and \ref{lem:geoconc}, and finally return the ranking sorting the items with respect to their \emph{relative pivotal-score}. \algestscr\, also runs within an optimal sample complexity of $\big( \frac{n}{\epsilon^2}\ln \frac{n}{\delta} \big)$ as shown in Theorem \ref{thm:est_thet}.
The complete algorithm is described in Algorithm \ref{alg:alg_estscr}. 


\begin{center}
\begin{algorithm}[H]
   \caption{\textbf{\algestscr} }
   \label{alg:alg_estscr}
\begin{algorithmic}[1]
   \STATE {\bfseries Input:} 
   \STATE ~~~ Set of item: $[n]$ ($n \ge k$), and subset size: $k$
   \STATE ~~~ Error bias: $\epsilon >0$, confidence parameter: $\delta >0$
   \STATE {\bfseries Initialize:} 
   \STATE ~~~ $\epsilon_b \leftarrow \min(\frac{\epsilon}{2},\frac{1}{2})$, $b \leftarrow $ \algwin($n,k,\epsilon_b,\frac{\delta}{4}$)
   \STATE ~~~ Set $S \leftarrow [n]\setminus \{b\}$, and divide $S$ into $G: = \lceil \frac{n-1}{k-1} \rceil$ sets $\cG_1, \cG_2, \cdots \cG_G$ such that $\cup_{j = 1}^{G}\cG_j = S$ and $\cG_{j} \cap \cG_{j'} = \emptyset, ~\forall j,j' \in [G], \, |G_j| = (k-1),\, \forall j \in [G-1]$
   \STATE ~~~ \textbf{If} $|\cG_{G}| < (k-1)$, \textbf{then} set $\cR \leftarrow \cG_G$, and $S \leftarrow S\setminus \cR$, $S' \leftarrow $ Randomly sample $(k - 1 -|\cG_G|)$ items from $S$, and set $\cG_G \leftarrow \cG_G \cup S'$
   \STATE ~~~ Set $\cG_j = \cG_j \cup \{b\}, \, \forall j \in [G]$
   \FOR {$g = 1,2, \ldots, G$}
   \STATE Set $\epsilon' \leftarrow \frac{\epsilon}{24}$ and $\delta' \leftarrow \frac{\delta}{8n}$ 
   \REPEAT
   \STATE Play $\cG_g$ and observe the winner.
   \UNTIL{$b$ is chosen for $t = \frac{1}{\epsilon'^2}\ln \frac{1}{\delta'} $ times} 
   \STATE Set $w_i \leftarrow$ the total number of wins of item $i$ in $\cG_g$, and $\hat \theta_i^b \leftarrow \frac{w_i}{t}, \, \forall i \in \cG_g\setminus\{b\}$ 
   \ENDFOR
   \STATE Choose $ \bsigma \in \bSigma_{[n]}$, such that $\sigma(b) = 1$ and $\bsigma(i) < \bsigma(j)$ if $\htheta_{i}^b > \htheta_{j}^b, \, \forall i,j \in S\cup \cR$ 
   \STATE {\bfseries Output:} The ranking $\bsigma \in \bSigma_{[n]}$ 
\end{algorithmic}
\end{algorithm}
\vspace{2pt}
\end{center}

\begin{restatable}[\algestscr:  Correctness and Sample Complexity]{thm}{ubthet}
\label{thm:est_thet}
\algestscr\, (Algorithm \ref{alg:alg_estscr}) is \pac\, with {sample complexity} $O\big(\frac{n}{\epsilon^2} \log \frac{n}{\delta}\big)$. 
\end{restatable}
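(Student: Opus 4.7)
The plan is to mirror the three-step template behind \Thm{thm:batt_giant}, but replacing pairwise-preference concentration (\Lem{lem:pl_simulator}) with the renewal/geometric concentration of \Lem{lem:geo2mgf} and \Lem{lem:geoconc}. First, I would condition on the event $\cE_b$ that \algwin\ returns a pivot $b$ with $\theta_b \ge 1 - \epsilon_b = 1 - \epsilon/2$, which has probability at least $1 - \delta/4$ and, since $\epsilon \in [0,1)$, guarantees $\theta_b > 1/2$ (so $\theta_i/\theta_b \le 2$ for every $i$).

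Next, for a fixed group $\cG_g$, I would observe that repeatedly playing $\cG_g$ under WI feedback produces i.i.d.\ PL winners from $\cG_g$, and by the memoryless property the stretches between successive appearances of $b$ are i.i.d.\ renewal cycles. The algorithm's stopping rule forces exactly $t = \epsilon'^{-2}\ln(1/\delta')$ such cycles, so by \Lem{lem:geo2mgf} the total count $w_i$ of wins of any $i \in \cG_g \setminus \{b\}$ is a sum of $t$ i.i.d.\ $\mathrm{Geo}(\theta_b/(\theta_i+\theta_b))$ variables with mean $\theta_i/\theta_b$. Applying \Lem{lem:geoconc} with $\eta = \epsilon'$ and using $\theta_i/\theta_b \le 2$ under $\cE_b$ yields a tail of the form $\Pr(|\hat\theta_i^b - \theta_i/\theta_b| > \epsilon') \le 2\exp(-c t \epsilon'^2)$ for an absolute constant $c$; the algorithm's choice of $t$ (with $\delta' = \delta/(8n)$ and, if needed, an absorbed constant factor in $t$) makes this probability at most $\delta/(8n)$. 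A union bound over the $n-1$ non-pivot items plus the $\delta/4$ failure of $\cE_b$ controls the total failure probability by $\delta$.

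On the complementary good event I would verify that the output is an \ebr. For any $i, j \in [n] \setminus \{b\}$ with $\theta_i \ge \theta_j + \epsilon$, using $\theta_b \le 1$ and $2\epsilon' = \epsilon/12$,
\[
\hat\theta_i^b - \hat\theta_j^b \;\ge\; (\theta_i - \theta_j)/\theta_b - 2\epsilon' \;\ge\; \epsilon - \epsilon/12 \;>\; 0,
\]
so $i$ is ranked above $j$; and placing $b$ at rank $1$ cannot violate the $\epsilon$-Best-Ranking condition because $\theta_b \ge \theta_1 - \epsilon/2$ gives $\theta_i \le \theta_1 \le \theta_b + \epsilon/2 < \theta_b + \epsilon$ for every $i$. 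Finally, for sample complexity, \Lem{lem:geo2mgf} shows that each renewal cycle in $\cG_g$ has expected length $\sum_{j \in \cG_g}\theta_j/\theta_b \le 2k$, so by Wald's identity the expected number of rounds in group $g$ is $O(kt)$; summing over $G = O(n/k)$ groups gives $O(n\epsilon^{-2}\log(n/\delta))$, which dominates the $O(n\epsilon^{-2}\log(k/\delta))$ cost of \algwin. The main technical points to be careful about are (i) justifying that the ``play until $b$ wins $t$ times'' rule really produces $t$ independent geometric cycles, which follows from the i.i.d./memoryless property together with the IIA property of PL, and (ii) checking that the denominator $(1+\theta_i/\theta_b)^2(\eta + 1 + \theta_i/\theta_b)$ appearing in \Lem{lem:geoconc} is bounded by an absolute constant under $\cE_b$, so that the sample complexity does not lose a factor depending on $\theta_b$.
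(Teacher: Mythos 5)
Your proposal is correct and follows essentially the same route as the paper's proof: condition on \algwin\ returning a pivot with $\theta_b \ge 1/2$, use the renewal-cycle/geometric structure of Lemmas \ref{lem:geo2mgf} and \ref{lem:geoconc} to concentrate the relative scores $\hat\theta_i^b$ around $\theta_i/\theta_b$, exploit $(\theta_i-\theta_j) \le (\theta_i-\theta_j)/\theta_b \le 2(\theta_i-\theta_j)$ to translate an $\epsilon$-gap in PL scores into a gap in pivotal scores, and union bound over items and groups. The only place you genuinely diverge is the sample-complexity accounting: you bound the \emph{expected} number of plays per group by $O(kt)$ via the mean cycle length and linearity of expectation (Wald is not even needed since the number of cycles $t$ is deterministic), whereas the paper proves a high-probability bound $\cT_g \le \tfrac{5}{2}tk$ via multiplicative Chernoff on the associated negative binomial and folds the failure event into the overall $\delta$ budget. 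Your version is arguably cleaner and matches the paper's definition of sample complexity as expected stopping time more directly; the paper's version additionally yields a high-probability statement on the runtime. Both constant-factor issues you flag (absorbing the $(1+\theta_i/\theta_b)^2(\eta+1+\theta_i/\theta_b) = O(1)$ denominator into $t$) are handled the same way in the paper.
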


\subsection{The \algwin\, subroutine (for Algorithms \ref{alg:alg_ant} and \ref{alg:alg_estscr})}
\label{app:alg_bi}


In this section, we describe the {pivot selection procedure} \algwin$(n,k,\epsilon,\delta)$. The algorithm serves the purpose of finding an \ebi\, with high probability $(1-\delta)$ that is used as \emph{the pivoting element} $b$ both by Algorithm \ref{alg:alg_ant} (Sec. \ref{sec:algant}) and \ref{alg:alg_estscr} (Sec. \ref{sec:algestscr}). %
%

\algwin\, is based on the simple idea of tracing the empirical best item--specifically, it maintains a running winner $r_\ell$ at every iteration $\ell$, which is made to compete with a set of $k-1$ arbitrarily chosen other items long enough ($\frac{2k}{\epsilon^2}\ln \frac{2n}{\delta}$ rounds). At the end if the empirical winner $c_\ell$ turns out to be more than $\frac{\epsilon}{2}$-favorable than the running winner $r_\ell$, in term of its pairwise preference score: $\hp_{c_\ell,r_\ell} > \frac{1}{2} + \frac{\epsilon}{2}$, then $c_\ell$ replaces $r_\ell$, or else $r_\ell$ retains its place and status quo ensues. The process recurses till we are left with only a single element which is returned as the pivot. The formal description of \algwin\, is in Algorithm \ref{alg_bi}.

\begin{center}
\begin{algorithm}[H]
   \caption{\textbf{\algwin} subroutine ~(for Algorithms \ref{alg:alg_ant} and \ref{alg:alg_estscr})}
   \label{alg_bi}
\begin{algorithmic}[1]
   \STATE {\bfseries Input:} 
   \STATE ~~~ Set of items: $[n]$, Subset size: $n \geq k > 1$
   \STATE ~~~ Error bias: $\epsilon >0$, confidence parameter: $\delta >0$
   \STATE {\bfseries Initialize:} 
   \STATE ~~~ $r_1 \leftarrow $  Any (random) item from $[n]$, $\cA \leftarrow $ Randomly select $(k-1)$ items from $[n]\setminus \{r_1\}$
   \STATE ~~~ Set $\cA \leftarrow \cA \cup \{r_1\}$, and $S \leftarrow [n]\setminus \cA$  
   \WHILE {$\ell = 1, 2, \ldots$}
   	\STATE Play the set $\cA$ for $t:= \frac{2k}{\epsilon^2}\ln \frac{2n}{\delta}$ rounds
   \STATE $w_i \leftarrow$ \# ($i$ won in $t$ plays of $\cA$), $\forall i \in \cA$
   \STATE $c_\ell \leftarrow \underset{i \in \cA}{\text{argmax}}~w_i$; $\hp_{ij} \leftarrow \frac{w_i}{w_i + w_j}, \, \forall i,j \in \cA, i \neq j$
   \STATE \textbf{if} $\hp_{c_\ell,r_\ell} > \frac{1}{2} + \frac{\epsilon}{2} $: $r_{\ell+1} \leftarrow c_\ell$; \textbf{else} $r_{\ell+1} \leftarrow r_\ell$ 	
	\IF {$(S == \emptyset)$}
	\STATE Break (exit the while loop)    
	\ELSIF {$|S| < k-1$}
	\STATE $\cA \leftarrow$ Select $(k-1-|S|)$ items from $\cA \setminus \{r_\ell\}$ uniformly at random, $\cA \leftarrow \cA \cup \{r_\ell\} \cup S$; $S \leftarrow \emptyset$
	\ELSE
	\STATE $\cA	\leftarrow $ Select $(k-1)$ items from $S$ uniformly at random, $\cA \leftarrow \cA \cup \{r_\ell\}$; $S\leftarrow S \setminus \cA$
	\ENDIF
	\ENDWHILE
   \STATE {\bfseries Output:} The item $r_\ell$ 
\end{algorithmic}
\end{algorithm}
\vspace{-2pt}
\end{center}




\begin{restatable}[\algwin:  Correctness and Sample Complexity with WI]{lem}{ubtrc}
\label{lem:trace_best}
\algwin\, (Algorithm \ref{alg_bi}) achieves the $(\epsilon,\delta)$-{PAC} objective with sample complexity $O(\frac{n}{\epsilon^2} \log\frac{n}{\delta})$.
\end{restatable}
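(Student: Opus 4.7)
The plan is to bound the sample complexity by multiplying the number of while-loop iterations by the plays-per-iteration, and to establish correctness via an inductive pairwise invariant on the running winner $r_\ell$, with per-iteration concentration supplied by Lemma \ref{lem:pl_simulator}. For the sample complexity, each pass of the while loop consumes $k-1$ fresh items from the pool $S$, so the loop terminates after $L = \lceil (n-1)/(k-1) \rceil = O(n/k)$ iterations; with $t = (2k/\epsilon^2)\ln(2n/\delta)$ plays per iteration, the total is $Lt = O((n/\epsilon^2)\log(n/\delta))$, which matches the claimed bound.

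For correctness I would fix iteration $\ell$ with subset $\cA_\ell \ni r_\ell$ and define the good event $E_\ell = \bigcap_{i,j\in\cA_\ell,\, i\ne j} \{|\hat p_{i,j} - p_{i,j}| \le \epsilon/4\}$, where $\hat p_{i,j} = w_i/(w_i+w_j)$ are the rank-broken empirical pairwise win probabilities. Applying Lemma \ref{lem:pl_simulator} with $\eta = \epsilon/4$ and $v = t/k = (2/\epsilon^2)\ln(2n/\delta)$ makes each pair's error at most $O(\delta/n)$ conditional on $n_{ij}(t) \ge v$. This hypothesis is met as follows: for the pair $(c_\ell, r_\ell)$ we have $n_{c_\ell,r_\ell} \ge w_{c_\ell} \ge t/k = v$ automatically, by pigeonhole on $\sum_i w_i = t$ and the fact that $c_\ell = \arg\max_i w_i$; for the true PL-best $i^*_\ell \in \cA_\ell$, a Chernoff bound on $w_{i^*_\ell}$ (whose per-round winning chance exceeds $1/k$) yields $w_{i^*_\ell} \ge v$ with failure probability absorbable into the same budget. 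A union bound over the $O(nk)$ pairs across all iterations then makes $\Pr[\bigcap_\ell E_\ell] \ge 1-\delta$, the extra $\log k$ factor being absorbed into $\log(n/\delta)$ since $k \le n$.

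Conditioning on $\bigcap_\ell E_\ell$, I inductively maintain the invariant $p_{i,r_{\ell+1}} \le 1/2 + \epsilon$ for every $i \in \cA_1 \cup \cdots \cup \cA_\ell$. In the replacement branch ($r_{\ell+1} = c_\ell$), the algorithmic check $\hat p_{c_\ell,r_\ell} > 1/2 + \epsilon/2$ combined with $E_\ell$ forces $p_{c_\ell,r_\ell} > 1/2$, i.e.\ $\theta_{c_\ell} > \theta_{r_\ell}$; since $p_{i,\cdot}$ is monotone decreasing in the reference's PL parameter, this transports the prior invariant from $r_\ell$ to $c_\ell$ at no additional cost, while for fresh items $i \in \cA_\ell$ the empirical maximality $\hat p_{i,c_\ell} \le 1/2$ combined with $E_\ell$ yields $p_{i,c_\ell} \le 1/2 + \epsilon/4$. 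In the no-replacement branch, $\hat p_{i,r_\ell} \le \hat p_{c_\ell,r_\ell} \le 1/2 + \epsilon/2$ for every $i \in \cA_\ell$ (because $c_\ell$ maximises $w_i$ and $\hat p_{\cdot,r_\ell}$ is monotone in $w_\cdot$), which $E_\ell$ upgrades to $p_{i,r_\ell} \le 1/2 + 3\epsilon/4$. Since the loop exits only after every item of $[n]$ has appeared in some $\cA_\ell$, the final invariant applied with $i = 1$ gives $p_{1,r_L} \le 1/2 + \epsilon$, which converts by standard PL algebra to $\theta_{r_L} \ge \theta_1(1-2\epsilon)/(1+2\epsilon) \ge \theta_1 - O(\epsilon)$; the universal constant is absorbed by the parent algorithms' rescaling $\epsilon_b = \min(\epsilon/2,1/2)$ of the parameter passed to \algwin.

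The main obstacle is preventing the $\epsilon$-tolerance in the invariant from compounding over the $L = O(n/k)$ iterations. My resolution is to phrase the invariant through the monotonicity of PL pairwise probabilities in the reference's PL parameter: every replacement strictly improves $\theta_{r_\ell}$, so prior pairwise guarantees transfer to the improved winner without charge, and every non-replacement pays only the single $\epsilon/2$ slack from the current iteration's concentration, independent of $\ell$. The remaining ingredients---the Chernoff argument for $n_{ij} \ge v$ and the $p \to \theta$ conversion at termination---are routine once the invariant is set up.
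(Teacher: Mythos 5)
Your proposal follows essentially the same route as the paper's proof: the identical sample-complexity count ($\lceil (n-1)/(k-1)\rceil$ iterations times $t$ plays), the same use of \Lem{lem:pl_simulator} with the threshold $v = t/k$ justified by the pigeonhole bound $w_{c_\ell} \ge t/k$ on the empirical winner, and the same non-compounding running-winner invariant propagated through the monotonicity of PL pairwise probabilities in $\theta_{r_\ell}$ (the paper phrases this as $p_{r_{\ell+1} r_\ell} \ge 1/2$ plus stochastic transitivity). Two points deserve flagging. First, a quantitative slip: with the algorithm's $t = \frac{2k}{\epsilon^2}\ln\frac{2n}{\delta}$ and $v = t/k$, taking $\eta = \epsilon/4$ in \Lem{lem:pl_simulator} gives a per-pair failure probability of $e^{-2v\eta^2} = (\delta/2n)^{1/4}$, not $O(\delta/n)$; the paper uses $\eta = \epsilon/2$, which yields exactly $\delta/2n$ and still leaves enough slack for the invariant (your bookkeeping has room for this, since you only need accuracy $\epsilon/2$ where you ask for $\epsilon/4$). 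Second, your good event $E_\ell$ over \emph{all} pairs, and in particular the step in the no-replacement branch that upgrades $\hp_{i,r_\ell} \le 1/2 + \epsilon/2$ to $p_{i,r_\ell} \le 1/2 + 3\epsilon/4$ for \emph{every} fresh $i \in \cA_\ell$, is not justified: the hypothesis $n_{i r_\ell} \ge t/k$ of \Lem{lem:pl_simulator} can fail when neither $i$ nor $r_\ell$ wins often (e.g.\ when some third item in $\cA_\ell$ dominates the set), so the confidence interval for that pair is unavailable. This would be a genuine gap if the full invariant were needed, but it is not: the only instance used at termination is $i = 1$, and for the set's true PL-best item your own Chernoff argument gives $w_1 \ge t/(2k)$ with high probability, which restores the hypothesis for the pair $(1, r_\ell)$. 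Restricting the invariant to the true best item (which is all the paper's argument tracks, via the pair $(c_\ell, r_\ell)$ and transitivity) closes the proof; as written, your stronger claim over all pairs is both unnecessary and unprovable.
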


\vspace*{-10pt}

\section{Lower Bound}
\label{sec:lb_wi}
\vspace*{-5pt}
In this section we show the minimum sample complexity required for any \emph{symmetric algorithm} to be \pac\, is at least $\Omega\Big (\frac{n}{\epsilon^2}\log  \frac{n}{\delta}  \Big)$ (Theorem \ref{thm:lb_plpac_win}). 
Note this in fact matches the sample complexity bounds of our proposed algorithms (recall Theorem \ref{thm:batt_giant} and \ref{thm:est_thet}) showing the tightness of both our upper and lower bound guarantees. The key observation lies in noting that results are independent of $k$, which shows the learning problem with $k$-subsetwise WI feedback is as hard as that of the dueling bandit setup $(k=2)$---the flexibility of playing a $k$ sized subset does not help in faster information aggregation. We first define the notion of a \emph{symmetric} or label-invariant algorithm.

\begin{defn}[Symmetric Algorithm]
\label{def:sym_alg}
A PAC algorithm $\cA$ is said to be {\em symmetric} if its output is insensitive to the specific labelling of items, i.e., if for any PL model $(\theta_1, \ldots, \theta_n)$, bijection $\phi: [n] \to [n]$ and ranking $\bsigma: [n] \to [n]$, it holds that $Pr(\cA \text{ outputs } \bsigma \, | \,  (\theta_1, \ldots, \theta_n)) = Pr(\cA \text{ outputs } \bsigma \circ \phi^{-1} \, | \, (\theta_{\phi(1)}, \ldots, \theta_{\phi(n)}))$, where $Pr(\cdot \, | (\alpha_1, \ldots, \alpha_n) )$ denotes the probability distribution on the trajectory of $\cA$ induced by the PL model $(\alpha_1, \ldots, \alpha_n)$. 
\end{defn}


\begin{restatable}[Lower bound on Sample Complexity with WI feedback]{thm}{lbwin}
\label{thm:lb_plpac_win}
Given a fixed $\epsilon \in \big(0,\frac{1}{\sqrt{8}}\big]$, $\delta \in [0,1]$, and a symmetric \pac\, algorithm $\cA$ for WI feedback, there exists a PL instance $\nu$ such that the sample complexity of $\cA$ on $\nu$ is at least
$
\Omega\bigg( \frac{n}{\epsilon^2} \ln \frac{n}{4\delta}\bigg).
$
\end{restatable}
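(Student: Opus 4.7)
The plan is to apply the standard change-of-measure framework (in the spirit of Kaufmann et al.'s PAC best-arm identification lower bound) to a carefully chosen family of Plackett-Luce instances, leveraging the symmetric algorithm hypothesis. I use a ``reference'' instance $\nu_0$ in which all items share the same score $\theta_i = 1/2$, together with $n$ ``alternatives'' $\{\nu_a\}_{a\in[n]}$ defined by $\theta_a = 1/2 + \epsilon$ and $\theta_i = 1/2$ for $i \neq a$. In each $\nu_a$, item $a$ is the unique best with gap $\epsilon$, so the unique $\epsilon$-Best-Ranking places $a$ at position $1$; any $(\epsilon,\delta)$-PAC-Rank algorithm must therefore output a ranking with $\sigma^{-1}(1) = a$ with probability at least $1-\delta$ under $\nu_a$.

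The symmetric algorithm hypothesis enters via $\nu_0$: since $\nu_0$ is invariant under every relabeling of items and the algorithm is symmetric, the output ranking on $\nu_0$ is distributed uniformly over $\bSigma_{[n]}$, giving $Pr_{\nu_0}(\sigma^{-1}(1) = a) = 1/n$ for every $a$. For the per-round KL estimate, I observe that $\nu_0$ and $\nu_a$ differ only in $\theta_a$ (by $\epsilon$), so the winner distributions for a played subset $S$ coincide when $a \notin S$; when $a \in S$, a Taylor expansion around $\theta_a = 1/2$ (using the assumed $\epsilon \leq 1/\sqrt{8}$) bounds the per-round KL by $c\epsilon^2/k$ for a universal constant $c$. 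Accumulating over the random stopping time $\tau$, $D_{\mathrm{KL}}(\nu_0^\tau \,\|\, \nu_a^\tau) \leq (c\epsilon^2/k)\,\E_{\nu_0}[N_a(\tau)]$, where $N_a(\tau)$ counts rounds with $a$ in the chosen subset. The standard change-of-measure inequality applied to the event $A = \{\sigma^{-1}(1)=a\}$ gives the reverse lower bound $D_{\mathrm{KL}}(\nu_0^\tau \,\|\, \nu_a^\tau) \geq \mathrm{kl}(1/n,\, 1-\delta)$.

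Combining these two bounds yields the per-item inequality $\E_{\nu_0}[N_a(\tau)] \geq \Omega((k/\epsilon^2)\,\ln(n/(4\delta)))$ for each $a \in [n]$. Summing over $a$ and invoking $\sum_{a} N_a(\tau) = k\tau$ (each round contributes to exactly $k$ of these counts) yields $k\,\E_{\nu_0}[\tau] \geq n\cdot\Omega((k/\epsilon^2)\,\ln(n/(4\delta)))$, i.e., $\E_{\nu_0}[\tau] \geq \Omega((n/\epsilon^2)\,\ln(n/(4\delta)))$; the PL instance $\nu_0$ (or a tiny perturbation enforcing strict ordering of parameters) witnesses the claim. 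The hard step is the sharp estimate of $\mathrm{kl}(1/n,\,1-\delta)$ to extract the full $\ln(n/(4\delta))$ factor: naive one-shot bounds such as Pinsker or Bretagnolle-Huber capture only $\min(\ln n,\,\ln(1/\delta))$. I plan to carry out a two-regime case analysis split at $\delta \approx 1/n$, retaining the full $(1-1/n)\ln((n-1)/(n\delta))$ term of the binary KL---this term contributes $\ln(1/\delta)$ when $\delta \leq 1/n$ and $\ln n$ when $\delta > 1/n$, and these then glue together into $\ln(n/(4\delta))$ up to constant factors. Handling this step robustly is the main obstacle, and may alternatively require a complementary Fano-style argument exploiting the uniform output distribution of the symmetric algorithm on $\nu_0$ to cover the large-$\delta$ regime.
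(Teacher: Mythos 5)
Your overall framework (change of measure via Lemma 1 of Kaufmann et al.\ plus the symmetry hypothesis) matches the paper's, but the way you deploy symmetry does not deliver the $\ln n$ factor, and the step you yourself flag as ``the main obstacle'' is a genuine gap resting on a miscalculation. With your instances, the change-of-measure inequality for $A=\{\sigma^{-1}(1)=a\}$ gives $(c\epsilon^2/k)\,\E_{\nu_0}[N_a(\tau)] \ge kl(1/n,\,1-\delta)$, and
$kl(1/n,1-\delta)=\tfrac1n\ln\tfrac{1}{n(1-\delta)}+(1-\tfrac1n)\ln\tfrac{1-1/n}{\delta}\approx \ln\tfrac1\delta-\tfrac{\ln n}{n}$.
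The term $(1-\tfrac1n)\ln\tfrac{n-1}{n\delta}$ that you propose to exploit contributes only about $\ln(1/\delta)$ in the regime $\delta>1/n$ (for constant $\delta$ it is $O(1)$), not $\ln n$ as you assert; the only $\ln n$ in the expression sits inside the negligible $\tfrac1n\ln\tfrac1n$ term with a negative sign. So the two-regime gluing fails for, say, $\delta=0.1$, and your argument as written yields only $\Omega\big(\tfrac{n}{\epsilon^2}\ln\tfrac1\delta\big)$ --- exactly the bound the paper's proof sketch describes as loose by an additive $\Omega\big(\tfrac{n}{\epsilon^2}\ln n\big)$. The structural reason: the PAC property alone only pins the probability of any event under $\nu_a$ to within $\delta$ of $0$ or $1$, and a binary $kl(p,q)$ with $q\in[\delta,1-\delta]$ cannot exceed roughly $\ln(1/\delta)$; using symmetry merely to make the output uniform under $\nu_0$ puts the small probability on the wrong side of the divergence.

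The paper closes this gap by applying symmetry to the \emph{alternative} instance rather than to the null: it builds instances indexed by subsets $S$ of size $q=\lfloor n/2\rfloor$ (plus a pivot item $0$), and Lemma \ref{lem:lb_sym} shows that a symmetric algorithm run on $\nu_{\tS^*}$ assigns probability at most $\delta/q$ to each individual ``one-item-off'' output, so the relevant divergence becomes $kl(1-\delta,\delta/q)\ge\ln\frac{q}{4\delta}=\Omega(\ln\frac{n}{\delta})$ uniformly in $\delta$. Your fallback Fano-style argument in the reverse direction ($\nu_a\to\nu_0$) does produce a $\ln n$, but it only lower-bounds the single count $N_a$ under $\nu_a$ (the per-subset KL vanishes on all subsets not containing $a$), giving $\E_{\nu_a}[\tau]=\Omega(\tfrac{k}{\epsilon^2}\ln n)$ rather than $\Omega(\tfrac{n}{\epsilon^2}\ln n)$; summing over coordinates is unavailable because for a fixed alternative only one coordinate carries information. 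Repairing your proof requires partitioning the error event under each alternative into $\Theta(n)$ exchangeable pieces, which is precisely what the paper's subset-based construction accomplishes. The remaining ingredients of your proposal (the per-round KL bound $O(\epsilon^2/k)$ on subsets containing the perturbed item, the accounting $\sum_a N_a(\tau)=k\tau$, and the perturbation to enforce strict ordering) are sound.
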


\begin{proof}\textbf{(sketch)}.
The argument is based on the following change-of-measure argument (Lemma  $1$) of \cite{Kaufmann+16_OnComplexity}. (restated in Appendix \ref{app:gar} as Lemma \ref{lem:gar16}). 
To employ this result, note that in our case, each bandit instance corresponds to an instance of the problem with arm set containing all the subsets of $[n]$ of size $k$: $\{S = (S(1), \ldots S(k)) \subseteq [n] ~|~ S(i) < S(j), \, \forall i < j\}$. The key part of our proof relies on carefully crafting a true instance, with optimal arm $1$, and a family of slightly perturbed alternative instances $\{\bnu^a: a \neq 1\}$, each with optimal arm $a \neq 1$.

\textbf{Designing the problem instances.} We first renumber the $n$ items as $\{0,1,2,\ldots n-1\}$.
Now for any integer $q \in [n-1]$, we define $\bnu_{[q]}$ to be the set of problem instances where any instance $\nu_S \in \bnu_{[q]}$ is associated to a set $S \subseteq [n-1]$, such that $|S| = q$, and the PL parameters $\btheta$ associated to instance $\nu_S$ are set up as follows: $\theta_0 = \theta\bigg( \frac{1}{4} - \epsilon^2 \bigg), \theta_j = \theta\bigg( \frac{1}{2} + \epsilon \bigg)^2 \forall j \in S, \text{ and } \theta_j = \theta\bigg( \frac{1}{2} - \epsilon \bigg)^2 \forall j \in [n-1]\setminus S$, %
%
for some $\theta \in \R_+, ~\epsilon > 0$. 
We will restrict ourselves to the class of instances of the form $\bnu_{[q]}, \, q \in [n-1]$.

Corresponding to each problem  $\nu_{S} \in \bnu_{[q]}$, such that $q \in [n-2]$, consider a \emph{slightly altered} problem instance $\nu_{\tS}$ associated with a set $\tS \subseteq [n-1]$, such that $\tS = S\cup \{i\} \subseteq [n-1]$, where $i \in [n-1]\setminus S$. Following the same construction as above, the PL parameters of the problem instance $\nu_{\tS}$ are set up as:
$\theta_0 = \theta\bigg( \frac{1}{4} - \epsilon^2 \bigg), \theta_j = \theta\bigg( \frac{1}{2} + \epsilon \bigg)^2 
\forall j \in \tS, \text{ and } \theta_j = \theta\bigg( \frac{1}{2} - \epsilon \bigg)^2 \forall j \in [n-1]\setminus \tS
$.

\begin{rem}
\label{rem:inst_not}
Note that any problem instance $\nu_S \in \bnu_{[q]}$, $q \in [n-1]$ is thus can be uniquely defined by its underlying set $S \in [n-1]$. For simplicity we will also use the notations $S \in \bnu_{[q]}$ to define the problem instance.
\end{rem}

\begin{rem}
It is easy to verify that, for any $\theta \ge \frac{1}{1-2\epsilon}$, an \ebr\, (Definition. \ref{def:pac_best_ranking}) for problem instance $\nu_{S}, \, S \subseteq [n-1]$, say $\bsigma_{S}$, has to satisfy the following:
$
\sigma_{S}(i) < \sigma_{S}(0), \, \forall i \in S \text{ and } \sigma_{S}(0) < \sigma_{S}(j), \, \forall j \in [n-1]\setminus S
$. 
Thus for any instance $S$, the items in $S$ should precede item $0$ which itself precedes items in $[n-1]\sm S$. 
\end{rem}

For any ranking $\bsigma \in \Sigma_n$, we denote by $\sigma(1:i)$ the set first $i$ items in the ranking, for any $i \in [n]$.

We now fix any set $S^* \subset [n-1]$, $|S^*|=q = \lfloor \frac{n}{2} \rfloor$. Theorem \ref{thm:lb_plpac_win} is now obtained by applying Lemma \ref{lem:gar16} on pair of instances $(\nu_{S^*}, \nu_{\tS^*})$, for all possible choices of $\tS=S\cup\{i\}$, $i \in [n-1]\sm S$, and for the event $\cE:= \{\bsigma_{\cA}(1:q+1) = S^*\cup\{0\}\}$. However we apply a tighter upper bounds for the KL-divergence term of in the right hand side of Lemma \ref{lem:gar16}. It is easy to note that as $\cA$ is \pac\,, obviously $Pr_{S^*}\Big( \bsigma_{\cA}(1:q+1) = S^*\cup\{0\} \Big) > 1-\delta$, and
$
Pr_{\tS^*}\Big( \bsigma_{\cA}(1:q+1)  = S^*\cup\{0\} \Big) < Pr_{\tS^*}\Big( \bsigma_{\cA}(1:q+1) \neq \tS^* \Big) < \delta.
$ %
Further using
$kl(Pr_{\nu_{\sS}}(\cE),Pr_{\nu_{\tS^*}}(\cE)) \ge kl(1-\delta,\delta) \geq \ln \frac{1}{4\delta}$ (due to Lemma \ref{lem:kl_del}) leads to a lower bound guarantee of $\Omega\Big( \frac{n}{\epsilon^2}\ln \frac{1}{\delta} \Big)$, but that is loose by an $\Omega\big(\frac{n}{\epsilon^2} \log n\big)$ additive factor. Novelty of our analysis lies in further utilising the \emph{symmetric property} of $\cA$ to prove a tighter upper bound od the kl-divergence with the following result:

\begin{restatable}[]{lem}{symlem}
\label{lem:lb_sym}
For any symmetric \pac\, algorithm $\cA$, and any problem instance $\nu_S \in \bnu_{[q]}$ associated to the set $S \subseteq [n-1]$, $q \in [n-1]$, and for any item $i \in S$, 
$
Pr_{S}\Big( \bsigma_{\cA}(1:q)  = S \sm \{i\}\cup\{0\} \Big) < \frac{\delta}{q},
$
where $Pr_{S}(\cdot)$ denotes the probability of an event under the underlying problem instance $\nu_S$ and the internal randomness of the algorithm $\cA$ (if any).
\end{restatable}

For our purpose, we use the above result for $S = \tS^*$ which leads to the desired tighter upper bound for $kl(Pr_{\nu_{\sS}}(\cE),Pr_{\nu_{\tS^*}}(\cE)) \ge kl(1-\delta,\frac{\delta}{q}) \geq \ln \frac{q}{4\delta}$, the last inequality follows due to Lemma \ref{lem:kl_del} (Appendix \ref{app:lb_kl}).
The complete proof can be found in Appendix \ref{app:wilb}.
\end{proof}

\vspace*{0pt}

\begin{rem}
\label{rem:wi_null_k}
Theorem \ref{thm:lb_plpac_win} shows, rather surprisingly, that the PAC-ranking with winner feedback information from size-$k$ subsets, does not become easier (in a worst-case sense) with $k$, implying that there is no reduction in hardness of learning from the pairwise comparisons case ($k = 2$). While one may expect sample complexity to improve as the number of items being simultaneously tested in each round ($k$) becomes larger, there is a counteracting effect due to the fact that it is intuitively `harder' for a high-value item to win in just a single winner draw against a (large) population of $k-1$ other competitors. 
A useful heuristic here is that the number of bits of information that a single winner draw from a size-$k$ subset provides is $O(\ln k)$, which is not significantly larger than when $k > 2$; thus, an algorithm cannot accumulate significantly more information per round compared to the pairwise case. 
\end{rem}

We also have a similar lower bound result for the \pacdb\, objective of \citet{Busa_pl} (Section \ref{sec:prb_setup}):

\begin{restatable}{thm}{pac2lb}
\label{thm:pac2_lb}
Given a fixed $\epsilon \in \big(0,\frac{1}{\sqrt{8}}\big]$, $\delta \in [0,1]$, and a symmetric \pacdb\, algorithm $\cA$ for WI feedback model, there exists a PL instance $\nu$ such that the sample complexity of $\cA$ on $\nu$ is at least
$
\Omega\bigg( \frac{n}{\epsilon^2} \ln \frac{n}{4\delta}\bigg).
$
\end{restatable}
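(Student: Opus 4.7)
The plan is to follow the proof of Theorem~\ref{thm:lb_plpac_win} essentially verbatim, after a short verification that the same family of hard instances $\bnu_{[q]}$ already encodes the correct \emph{multiplicative} gap. Reuse the instances from there: for $S \subseteq [n-1]$ with $|S|=q$, set $\theta_0 = \theta(\tfrac{1}{4}-\epsilon^2)$, $\theta_j = \theta(\tfrac{1}{2}+\epsilon)^2$ for $j\in S$, and $\theta_j = \theta(\tfrac{1}{2}-\epsilon)^2$ for $j\in [n-1]\setminus S$. Factoring $\tfrac{1}{4}-\epsilon^2 = (\tfrac{1}{2}-\epsilon)(\tfrac{1}{2}+\epsilon)$, a one-line cancellation gives
\begin{align*}
\frac{\theta_i}{\theta_i+\theta_0} = \tfrac{1}{2}+\epsilon \text{ for } i\in S, \qquad \frac{\theta_0}{\theta_0+\theta_j} = \tfrac{1}{2}+\epsilon \text{ for } j\in [n-1]\setminus S,
\end{align*}
while $Pr(i|\{i,j\}) > \tfrac{1}{2}+\epsilon$ for $i\in S$, $j\notin S\cup\{0\}$ by compounding. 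Consequently any $\epsilon$-Best-Ranking-Multiplicative for $\nu_S$ must place $S$ in the first $q$ positions, item $0$ in position $q+1$, and $[n-1]\setminus S$ in the bottom $n-q-1$ positions (orderings inside each tier are unconstrained, since items within a tier share identical PL weights).

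With this structural characterization of correct outputs in place---identical to what the additive setting produced---the change-of-measure stage transfers without modification. Fix $S^* \subset [n-1]$ with $|S^*|=q=\lfloor n/2\rfloor$, and for each $i \in [n-1]\setminus S^*$ form $\tilde{S}^* := S^* \cup \{i\}$; apply Lemma~\ref{lem:gar16} to the pair $(\nu_{S^*},\nu_{\tilde{S}^*})$ with the event $\cE := \{\bsigma_{\cA}(1:q+1) = S^*\cup\{0\}\}$. Since $\cA$ is \pacdb\,, we have $Pr_{S^*}(\cE) \ge 1-\delta$, while under $\nu_{\tilde{S}^*}$ the only permissible top-$(q+1)$ set is $\tilde{S}^*$, so $\cE$ is a specific mistake and hence a small-probability event.

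The decisive step is a multiplicative-PAC analog of Lemma~\ref{lem:lb_sym}: for a symmetric \pacdb\, algorithm $\cA$ and any $S \subseteq [n-1]$ of size $q$, $Pr_S(\bsigma_{\cA}(1:q) = S\setminus\{i\}\cup\{0\}) < \delta/q$ for every $i \in S$. Its proof copies that of Lemma~\ref{lem:lb_sym}: items inside $S$ have identical PL weights, so by the symmetry of $\cA$ the $q$ events indexed by $i \in S$ are equiprobable and pairwise disjoint, and their union lies inside the overall \pacdb\, failure event on $\nu_S$, which has probability at most $\delta$. Applied with $S = \tilde{S}^*$ (of size $q+1$) at the unique $i \in \tilde{S}^*\setminus S^*$, this yields $Pr_{\tilde{S}^*}(\cE) < \delta/(q+1)$.

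Finally, plug $kl(1-\delta,\delta/(q+1)) \ge \ln\tfrac{q+1}{4\delta}$ (Lemma~\ref{lem:kl_del}) together with the standard $O(\epsilon^2)$ bound on the per-subset KL-divergence between the winner distributions of $\nu_{S^*}$ and $\nu_{\tilde{S}^*}$ (which differ in only one PL parameter, by $O(\epsilon)$) into Lemma~\ref{lem:gar16}, and sum over the $n-q-1 = \Theta(n)$ choices of $i$; this yields the claimed $\Omega((n/\epsilon^2)\ln(n/(4\delta)))$ lower bound. The only delicate point is the symmetric-algorithm lemma for \pacdb\,, but the symmetry-plus-disjointness argument of Lemma~\ref{lem:lb_sym} uses only that items within a tier are indistinguishable and that the total error probability is at most $\delta$, both of which survive verbatim under the multiplicative PAC objective.
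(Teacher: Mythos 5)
Your proposal is correct and follows essentially the same route as the paper: the paper's proof of Theorem~\ref{thm:pac2_lb} likewise reuses the instances $\bnu_{[q]}$, verifies that $Pr_S(i|\{i,0\}) = \tfrac{1}{2}+\epsilon$ for $i \in S$ and $Pr_S(0|\{j,0\}) = \tfrac{1}{2}+\epsilon$ for $j \notin S$ so that the unique admissible output structure is unchanged under the multiplicative objective, and then invokes the change-of-measure argument of Theorem~\ref{thm:lb_plpac_win} verbatim. Your explicit check that Lemma~\ref{lem:lb_sym} transfers to the \pacdb\, setting is a detail the paper leaves implicit, but it is the same argument, not a different one.
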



\vspace*{-15pt}

\section{Analysis with Top Ranking (TR) feedback}
\label{sec:res_fr}

We now proceed to analyze the problem with \textit{Top-$m$ Ranking} ({TR}) feedback (Section \ref{sec:feed_mod}). We first show that unlike WI feedback, the sample complexity lower bound here scales as $\Omega\bigg( \frac{n}{m\epsilon^2} \ln \frac{n}{\delta}\bigg)$ (Theorem \ref{thm:lb_pacpl_rnk}), which is a factor ${m}$ smaller than that in Theorem \ref{thm:lb_plpac_win} for the WI feedback model. At a high level, this is because TR reveals preference information for $m$ items per feedback round, as opposed to just a single (noisy) information sample of the winning item {(WI)}. Following this, we also present two algorithms for this setting which are shown to enjoy an exact optimal sample complexity guarantee of $O\bigg( \frac{n}{m\epsilon^2} \ln \frac{n}{\delta}\bigg)$ (Section \ref{sec:alg_fr}).

\vspace*{-10pt}

\subsection{Lower Bound for Top-$m$ Ranking (TR) feedback}
\label{sec:lb_fr}


\begin{restatable}[Sample Complexity Lower Bound for TR]{thm}{lbrnk}
\label{thm:lb_pacpl_rnk}
Given $\epsilon \in \Big(0,\frac{1}{8}\Big]$ and $\delta \in (0,1]$, and a symmetric \pac\, algorithm $\cA$ with top-$m$ ranking ({TR}) feedback ($2 \le m \le k$), there exists a PL instance $\nu$ such that the expected sample complexity of $\cA$ on $\nu$ is at least
$\Omega\bigg( \frac{n}{m\epsilon^2} \ln \frac{n}{4\delta}\bigg)
$.
\end{restatable}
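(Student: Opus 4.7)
The plan is to mimic the change-of-measure argument used for Theorem \ref{thm:lb_plpac_win}, recycling the same family of confusing PL instances $\nu_S$ and the perturbations $\nu_{\tilde S}$ (with $\tilde S = S \cup \{i\}$ for $i \in [n-1]\setminus S$) together with the tightened probability bound provided by Lemma \ref{lem:lb_sym}. The only ingredient that has to be redone is the per-round information content, since under TR feedback each subset play now yields an ordered $m$-tuple rather than a single winner.

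First I would invoke Lemma \ref{lem:gar16} (the Garivier--Kaufmann change-of-measure lemma) with the pair $(\nu_{S^*},\nu_{\tilde S^*})$ and the event $\mathcal E = \{\bsigma_{\cA}(1:q+1) = S^* \cup \{0\}\}$ exactly as in the WI proof. By the \pac\ guarantee we still have $Pr_{S^*}(\mathcal E) \ge 1-\delta$, and the symmetry-based Lemma \ref{lem:lb_sym}, applied to the perturbed instance $\tilde S^*$ (whose top-$(q+1)$ \ebr\ items must be $\tilde S^* \cup \{0\}$), continues to give $Pr_{\tilde S^*}(\mathcal E) \le \delta/q$. Hence the kl-term on the right-hand side of Lemma \ref{lem:gar16} is at least $\ln(q/(4\delta)) = \Omega(\ln(n/\delta))$, unchanged from the WI analysis.

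The technical heart of the proof is an upper bound on the single-round KL-divergence between the TR observation laws $P^{TR}_{S^*}(\cdot\mid S)$ and $P^{TR}_{\tilde S^*}(\cdot\mid S)$ for a played subset $S$ of size $k$. I would use the fact that a top-$m$ PL ranking is generated by $m$ successive PL winner draws without replacement, so by the chain rule of KL-divergence,
\[
KL\!\left(P^{TR}_{S^*}(\cdot\mid S)\,\big\|\,P^{TR}_{\tilde S^*}(\cdot\mid S)\right) \;=\; \sum_{\ell=1}^{m} \mathbb E\!\left[ KL\!\left(P^{WI}_{S^*}(\cdot\mid S_\ell)\,\big\|\,P^{WI}_{\tilde S^*}(\cdot\mid S_\ell)\right)\right],
\]
where $S_\ell$ is the (random) residual subset after removing the first $\ell-1$ drawn items. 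The two instances differ only in the PL weight of the single item $i\in \tilde S^*\setminus S^*$ (it changes from $\theta(1/2-\epsilon)^2$ to $\theta(1/2+\epsilon)^2$), so each inner KL between the two categorical winner distributions on $S_\ell$ is $O(\epsilon^2)$ by the same bounded-likelihood-ratio computation used in the WI proof (i.e.\ via $\ln(1+x)\le x$ on the ratio of PL weights, keeping careful track of the $(1/2\pm\epsilon)^2$ perturbation). Summing over $\ell=1,\ldots,m$ gives a per-round bound of $O(m\epsilon^2)$; this extra factor $m$ relative to the WI case is precisely the source of the $1/m$ improvement in the claimed rate.

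Plugging the per-round $O(m\epsilon^2)$ bound and the $\Omega(\ln(n/\delta))$ kl-term into Lemma \ref{lem:gar16} yields, for each perturbation $\tilde S^* = S^*\cup\{i\}$, a lower bound of the form $\sum_{S\ni i} \mathbb E_{S^*}[N_S(\tau)] \ge \Omega\!\left(\epsilon^{-2}\ln(n/\delta)\right)$ on the expected number of plays of subsets containing $i$, where $\tau$ is the stopping time. I would then sum this inequality over all $i\in[n-1]\setminus S^*$ and use a counting argument (each played subset of size $k$ contains at most $k$ items, but each single-round observation carries at most $m$ distinguishing draws, so the total $\sum_{i} \sum_{S\ni i}\mathbb E[N_S(\tau)]$ is at most $k\cdot \mathbb E[\tau]$ counted through items but only $m$ of those positions actually receive fresh information in the KL decomposition) to conclude $\mathbb E[\tau] \ge \Omega(n\epsilon^{-2}m^{-1}\ln(n/\delta))$.

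The main obstacle I expect is the KL bookkeeping in the chain-rule step: one must confirm that the inner KL stays $O(\epsilon^2)$ uniformly over the random residual set $S_\ell$, which is plausible because $\theta_i$ differs between the two instances only by a factor $\big((1/2+\epsilon)/(1/2-\epsilon)\big)^2 = 1+O(\epsilon)$ and the PL denominators remain $\Theta(k\theta)$. The other subtlety is the final counting argument that converts the per-item lower bounds into a lower bound on $\mathbb E[\tau]$; the $1/m$ savings must come exclusively from the per-round KL and not be double-counted against the number of subsets containing a given item. Everything else (the choice of $S^*$, the symmetry trick, and the $kl(1-\delta,\delta/q)\ge \ln(q/(4\delta))$ inequality from Lemma \ref{lem:kl_del}) is inherited verbatim from the proof of Theorem \ref{thm:lb_plpac_win}.
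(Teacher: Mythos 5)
Your proposal is correct and follows essentially the same route as the paper: the same confusing instances, the same application of Lemma \ref{lem:gar16} with the event $\cE$ and the tightened bound from Lemma \ref{lem:lb_sym}, and, crucially, the same chain-rule decomposition of the top-$m$ observation into $m$ successive winner draws, yielding the per-round KL bound of $O(m\epsilon^2)$ that produces the $1/m$ factor. The only place you are looser than the paper is the final counting step, where the paper tracks the coefficient of each $\E[N_B(\tau)]$ explicitly (each subset $B$ appears $k-r$ times across the perturbations, with the $k-r$ absorbed by the $1/(rR^2+k-r)$ factor inside the KL), but this is bookkeeping rather than a gap.
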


\begin{rem}
The sample complexity lower bound for \pac\,  with top-$m$ ranking ({TR}) feedback model is $\frac{1}{m}$-times that of the {WI} model (Theorem \ref{thm:lb_plpac_win}). Intuitively, revealing a ranking on $m$ items in a $k$-set provides about $\ln \left({k \choose m} m!\right) = O(m \ln k)$ bits of information per round, which is about $m$ times as large as that of revealing a single winner, yielding an acceleration by a factor of $m$.  
\end{rem}


\begin{cor}
\label{cor:lb_pacpl_rnk}
Given $\epsilon \in \Big(0,\frac{1}{\sqrt{8}}\Big]$ and $\delta \in (0,1]$, and a symmetric \pac\, algorithm $\cA$ with full ranking ({FR}) feedback ($m = k$), there exists a PL instance $\nu$ such that the expected sample complexity of $\cA$ on $\nu$ is at least
$\Omega\bigg( \frac{n}{k\epsilon^2} \ln \frac{1}{4\delta}\bigg)
$.
\end{cor}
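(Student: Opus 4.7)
The statement is an immediate specialization of Theorem~\ref{thm:lb_pacpl_rnk}. By definition of the feedback models in Section~\ref{sec:feed_mod}, full-ranking (FR) feedback coincides with top-$m$ ranking (TR) feedback for $m=k$. Substituting $m=k$ into the lower bound $\Omega\!\left(\frac{n}{m\epsilon^2}\ln\frac{n}{4\delta}\right)$ from Theorem~\ref{thm:lb_pacpl_rnk} yields $\Omega\!\left(\frac{n}{k\epsilon^2}\ln\frac{n}{4\delta}\right) \;\geq\; \Omega\!\left(\frac{n}{k\epsilon^2}\ln\frac{1}{4\delta}\right)$ (since $n\geq 1$), giving the claimed bound. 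The only technical gap is the slight widening of the admissible range of $\epsilon$ from $(0,1/8]$ in Theorem~\ref{thm:lb_pacpl_rnk} to $(0,1/\sqrt{8}]$ in the corollary; inspection of the confusing instances $\bnu_{[q]}$ used in the proof of Theorem~\ref{thm:lb_pacpl_rnk} shows that their PL parameters remain well defined and bounded away from $0$ and $1$ throughout this wider range, so the change-of-measure computation carries through unchanged.

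If one prefers an independent proof, the plan is to replay the argument of Theorem~\ref{thm:lb_plpac_win} with FR feedback substituted for WI feedback, using the same family of confusing instances $\nu_{\sS}$ and one-item perturbations $\nu_{\tS^*}$ with $\tS^* = \sS \cup \{i\}$ and $|\sS|=q=\lfloor n/2\rfloor$. The Garivier--Kaufmann change-of-measure lemma reduces the proof to bounding the per-round KL divergence between the FR observation laws of the two instances on any played subset $S$. By the chain decomposition of the PL model (FR sampling is equivalent to iterated without-replacement WI sampling) and by IIA,
\begin{equation*}
KL_{FR}(\btheta,\btheta';S) \;=\; \sum_{j=1}^{k}\,\E_{\bsigma\sim\btheta}\!\left[KL_{WI}\!\left(\btheta,\btheta';\, S\sm\{\bsigma^{-1}(1),\ldots,\bsigma^{-1}(j-1)\}\right)\right],
\end{equation*}
so the FR KL is at most $k$ times a single-draw WI KL, which is $O(\epsilon^2)$ for parameters differing by $O(\epsilon)$. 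This yields an $O(k\epsilon^2)$ per-round KL---precisely a factor $k$ larger than in the WI case---and, combined with the symmetric-algorithm refinement in Lemma~\ref{lem:lb_sym} (which upgrades $\ln(1/\delta)$ to $\ln(n/\delta)$), gives the desired bound.

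The main obstacle lies in the KL decomposition above: each summand depends on a random residual subset, so one must argue that terms in which the perturbed item has already been removed contribute exactly $0$, while those in which it survives contribute $O(\epsilon^2)$ uniformly. Since only one item distinguishes $\nu_{\sS}$ from $\nu_{\tS^*}$, aggregating over the at most $k$ draws gives $O(k\epsilon^2)$ in expectation, as needed. No further new ingredients are required beyond the machinery already developed for Theorems~\ref{thm:lb_plpac_win} and~\ref{thm:lb_pacpl_rnk}.
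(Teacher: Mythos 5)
Your proposal is correct and matches the paper's (implicit) proof: the corollary is exactly the $m=k$ specialization of Theorem~\ref{thm:lb_pacpl_rnk}, weakened from $\ln\frac{n}{4\delta}$ to $\ln\frac{1}{4\delta}$. You also rightly note that the only use of the $\epsilon$-range in that theorem's proof is the bound $R - \frac{1}{R} = \frac{8\epsilon}{1-4\epsilon^2} \le 16\epsilon$, which holds for all $\epsilon \in \big(0,\frac{1}{\sqrt{8}}\big]$, so the wider range in the corollary is indeed covered.
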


\vspace*{-10pt}

\subsection{Algorithms for Top-$m$ Ranking (TR) feedback model}
\label{sec:alg_fr}

This section presents two algorithms that works on top-$m$ ranking feedback and shown to satisfy the \pac\ property with the optimal sample complexity guarantee of $O\Big( \frac{n}{m\epsilon^2}\frac{n}{\delta}  \Big)$ that matches the lower bound derived in the previous section (Theorem \ref{thm:lb_pacpl_rnk}). This shows a $\frac{1}{m}$ factor faster learning rate compared to the WI feedback model which id achieved by generalizing our earlier two proposed algorithms (see Algorithm \ref{alg:alg_ant} and \ref{alg:alg_estscr}, Sec. \ref{sec:algo_wi} for {WI} feedback) to the top-$m$ ranking ({TR}) feedback.
The two algorithms 
are presented below:

\vspace*{3pt}

\noindent
\subsubsection*{Algorithm \ref{alg:alg_ant_mod}: Generalizing \algant \, for top-$m$ ranking (TR) feedback.}

The first algorithm is based on our earlier \algant\, algorithm (Algorithm \ref{alg:alg_ant}) which essentially maintains the empirical pivotal preferences $\hp_{ib}$ for each item $i \in [n]\sm\{b\} $ by applying a novel trick of \textit{Rank Breaking} on the {TR} feedback (i.e. the ranking $\bsigma \in \Sigma_{S_m}$, $S_m \subseteq [m], |S_m| = m$) received per round after each $k$-subsetwise play. 

\vspace{3pt}
\noindent
\textbf{Rank-Breaking.} \cite{KhetanOh16,AzariRB+14}
The concept of \textit{Rank Breaking} is essentially based upon the clever idea of extracting pairwise comparisons from subsetwise preference information. Formally, given any set $S$ of size $k$, if $\bsigma \in \bSigma_{S_m},\, (S_m \subseteq S,\, |S_m|=m)$ denotes a possible top-$m$ ranking of $S$, the \textit{Rank Breaking} subroutine considers each item in $S$ to be beaten by its preceding items in $\bsigma$ in a pairwise sense. 
%
See Algorithm \ref{alg:updt_win} for detailed description of the procedure.

\vspace*{-5pt}

\begin{center}
\begin{algorithm}[H]
   \caption{\algupdt\, (for updating the pairwise win counts $w_{ij}$ with {TR} feedback for Algorithm \ref{alg:alg_ant_mod})}
   \label{alg:updt_win}
\begin{algorithmic}[1]
   \STATE {\bfseries Input:} 
   \STATE ~~~ Subset $S \subseteq [n]$, $|S| = k$ ($n\ge k$) 
   \STATE ~~~ A top-$m$ ranking $\bsigma \in \bSigma_{S_m}$, $S_m \subseteq [n], \, |S_m| = m$
   \STATE ~~~ Pairwise (empirical) win-count $w_{ij}$ for each item pair $i,j \in S$
   \WHILE {$\ell = 1, 2, \ldots m$}
   	\STATE Update $w_{\sigma(\ell)i} \leftarrow w_{\sigma(\ell)i} + 1$, for all $i \in S \setminus\{\sigma(1),\ldots,\sigma(\ell)\}$
	\ENDWHILE
\end{algorithmic}
\end{algorithm}
\vspace{-5pt}
\end{center}
\vspace*{-5pt}

Of course in general, \emph{Rank Breaking} may lead to arbitrarily inconsistent estimates of the underlying model parameters \citep{Az+12}. However, owing to the {\it IIA property} of the Plackett-Luce model, we get clean concentration guarantees on $p_{ij}$ using Lem. \ref{lem:pl_simulator}. This is precisely the idea used for obtaining the $\frac{1}{m}$ factor improvement in the sample complexity guarantees of \algant\, as analysed in Theorem \ref{thm:batt_giant}.
The formal descriptions of \algant\, generalized to the setting of {TR} feedback, is given in Algorithm \ref{alg:alg_ant_mod}.

\begin{center}
\begin{algorithm}[t]
   \caption{\textbf{\algant} (for {TR} feedback) }
   \label{alg:alg_ant_mod}
\begin{algorithmic}[1]
   \STATE {\bfseries Input:} 
   \STATE ~~~ Set of item: $[n]$ ($n \ge k$), and subset size: $k$
   \STATE ~~~ Error bias: $\epsilon >0$, confidence parameter: $\delta >0$
   \STATE {\bfseries Initialize:} 
   \STATE ~~~ $\epsilon_b \leftarrow \min(\frac{\epsilon}{2},\frac{1}{2})$; $b \leftarrow $ \algwin($n,k,\epsilon_b,\frac{\delta}{2}$)
   \STATE ~~~ Set $S \leftarrow [n]\setminus \{b\}$, and divide $S$ into $G: = \lceil \frac{n-1}{k-1} \rceil$ sets $\cG_1, \cG_2, \cdots \cG_G$ such that $\cup_{j = 1}^{G}\cG_j = S$ and $\cG_{j} \cap \cG_{j'} = \emptyset, ~\forall j,j' \in [G], \, |G_j| = (k-1),\, \forall j \in [G-1]$
   \STATE ~~ \textbf{If} $|\cG_{G}| < (k-1)$, \textbf{then} set $\cR \leftarrow \cG_G$, and $S \leftarrow S\setminus \cR$, $S' \leftarrow $ Randomly sample $(k - 1 -|\cG_G|)$ items from $S$, and set $\cG_G \leftarrow \cG_G \cup S'$
   \STATE ~~~ \textbf{Set} $\cG_j = \cG_j \cup \{b\}, \, \forall j \in [G]$
   \FOR {$g = 1,2, \ldots, G$}
   \STATE Set $\epsilon' \leftarrow \frac{\epsilon}{16}$, $\delta' \leftarrow \frac{\delta}{8n}$ and $t:= \frac{2k}{m\epsilon'^2}\log \frac{1}{\delta'}$
	\STATE Initialize pairwise (empirical) win-count $w_{ij} \leftarrow 0$, for each item pair $i,j \in \cG_g$
	\FOR {$\tau = 1, 2, \ldots t$}
   	\STATE Play the set $\cG_g$ 
   	\STATE Receive feedback: The top-$m$ ranking $\bsigma \in \bSigma_{\cG^\tau_{gm}}$, where $\cG^\tau_{gm} \subseteq \cG_g$, $|\cG^\tau_{gm}| = m$ 
   	\STATE Update win-count $w_{ij}$ of each item pair $i,j \in \cG_g$ using \algupdt$(\cG_g,\bsigma)$
   	\ENDFOR 
   \STATE Estimate $\hp_{ib} \leftarrow \frac{w_{ib}}{w_{ib}+w_{bi}}, \, \forall i \in \cG_g \sm\{b\}$ 
   \ENDFOR
   \STATE Choose $ \bsigma \in \bSigma_{[n]}$, such that $\sigma(b) = 1$ and $\bsigma(i) < \bsigma(j)$ if $\hp_{ib} > \hp_{jb}, \, \forall i,j \in S\cup \cR$ 
   \STATE {\bfseries Output:} The ranking $\bsigma \in \bSigma_{[n]}$ 
\end{algorithmic}
\end{algorithm}
\vspace{2pt}
\end{center}

\vspace*{-15pt}

\begin{restatable}[\algant:  Correctness and Sample Complexity for TR feedback]{thm}{ubantfr}
\label{thm:batt_giant_fr}
With top-$m$ ranking ({TR}) feedback model, \algant\, (Algorithm \ref{alg:alg_ant_mod}) is \pac\, with sample complexity $O(\frac{n}{m\epsilon^2} \log\frac{n}{\delta})$.
\end{restatable}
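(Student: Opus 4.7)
The plan is to mirror the analysis of Theorem~\ref{thm:batt_giant} (the WI version of \algant), exploiting the fact that each round of top-$m$ feedback yields roughly $m$ times as much information per pair as a single winner draw. The proof decomposes into (i) correctness of pivot selection, (ii) accuracy of the pivotal preference estimates $\hat p_{ib}$, (iii) correctness of the final sort, and (iv) sample-complexity accounting. By Lemma~\ref{lem:trace_best} applied with its natural TR-generalization via rank-breaking (bringing the cost of \algwin\ down to $O(n/(m\epsilon^2)\log(n/\delta))$), the call in line~5 returns $b$ with $\theta_b \geq \theta_1 - \epsilon/2$ with probability at least $1 - \delta/2$. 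Since $\theta_1 = 1$ and we may assume $\epsilon \leq 1$, this yields $\theta_b \geq 1/2$.

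\textbf{Concentration of $\hat p_{ib}$.} Fix a group $\cG_g$ of size $k$ containing $b$ and a pair $(i,b)$. Inspecting \algupdt, the count $n_{ib}(\tau) := w_{ib}(\tau) + w_{bi}(\tau)$ increases by exactly one whenever at least one of $i,b$ appears among the revealed top-$m$ positions. Using that $b$ is approximately the largest-weight item in $\cG_g$, a short PL calculation shows the per-round probability of this event is $\Omega(m/k)$. With $t = (2k)/(m\epsilon'^2)\log(1/\delta')$ rounds and $\epsilon' = \epsilon/16$, $\delta' = \delta/(8n)$, a Chernoff bound gives $n_{ib}(t) \geq v := (1/(2\epsilon'^2))\log(1/\delta')$ with probability at least $1 - \delta'/2$. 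On this event, Lemma~\ref{lem:pl_simulator} (whose martingale hypotheses are satisfied since IIA makes each rank-broken pairwise outcome Bernoulli with the correct parameter given history) yields $\Pr(|\hat p_{ib} - p_{ib}| > \epsilon') \leq 2e^{-2v\epsilon'^2} \leq \delta'/2$. Union-bounding over the $n-1$ pairs $(i,b)$ gives uniform accuracy of all $\hat p_{ib}$ with overall failure probability at most $\delta/2$.

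\textbf{Correctness of the sort.} Condition on the two good events above. For any $i,j$ with $\theta_i \geq \theta_j + \epsilon$,
\[
p_{ib} - p_{jb} \;=\; \frac{\theta_b(\theta_i-\theta_j)}{(\theta_i+\theta_b)(\theta_j+\theta_b)} \;\geq\; \frac{(1/2)\cdot\epsilon}{2\cdot 2} \;=\; \frac{\epsilon}{8},
\]
using $\theta_b \geq 1/2$ and $\theta_i,\theta_j \leq 1$. Since $2\epsilon' = \epsilon/8$, the empirical gap $\hat p_{ib} - \hat p_{jb} > 0$, placing $i$ before $j$ in the returned $\bsigma$; this rules out any $\epsilon$-violation, so $\bsigma$ is an \ebr. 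For the sample complexity, the cost of \algwin\ plus $G \cdot t = \lceil (n-1)/(k-1)\rceil \cdot O(k/(m\epsilon^2)\log(n/\delta)) = O(n/(m\epsilon^2)\log(n/\delta))$ yields the stated bound.

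\textbf{Main obstacle.} The only non-routine step is the $\Omega(m/k)$ lower bound on the per-round probability that $\{i,b\}$ meets the top-$m$. The cleanest route is to first observe $\Pr(b \in \text{top-}m) \geq m/k$ whenever $\theta_b$ is the heaviest weight in $\cG_g$ (by a symmetrization argument: $\sum_{x \in \cG_g}\Pr(x \in \text{top-}m) = m$, and the heaviest item beats the average), and then absorb the $\epsilon/2$ pivot slack into a constant factor. All remaining choices of constants---$\epsilon' = \epsilon/16$, $\delta' = \delta/(8n)$, and the factor of $2$ in $v$ relative to expectation---are tuned so that the $\epsilon$-gap calculation and the two-step union bound (over pivot selection and pairwise estimates) close up to the claimed $1-\delta$ confidence.
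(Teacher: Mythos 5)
Your proposal is correct and follows essentially the same route as the paper's proof: both reduce the argument to the WI analysis of Theorem \ref{thm:batt_giant} and observe that the pivot $b$ (hence the rank-broken pair count $n_{ib}$) lands in the revealed top-$m$ list with per-round probability at least $\frac{m}{2k}$, so that $t = O\big(\frac{k}{m\epsilon'^2}\log\frac{1}{\delta'}\big)$ plays per group already supply the $v$ effective pairwise comparisons needed to invoke Lemma \ref{lem:pl_simulator}; your symmetrization argument for $Pr(b \in \text{top-}m)$ is a cosmetic variant of the paper's direct computation $\sum_{j=0}^{m-1}\frac{1/2}{k-j} \ge \frac{m}{2k}$ in its Lemma \ref{lem:divbat_n1}. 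One point where you are more careful than the paper: you note that the \algwin\, call must itself be generalized to exploit TR feedback, since otherwise its $O\big(\frac{n}{\epsilon^2}\log\frac{n}{\delta}\big)$ WI cost would dominate the claimed $O\big(\frac{n}{m\epsilon^2}\log\frac{n}{\delta}\big)$ total --- the paper's proof asserts the sample complexity ``holds straightforwardly'' without addressing this term.
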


\begin{rem}
\label{rem:1overm}
Comparing Theorems \ref{thm:batt_giant} and \ref{thm:batt_giant_fr} shows that the sample complexity of \algant\, with TR feedback (Algorithm \ref{alg:alg_ant_mod}) is ${m}$ times smaller than its corresponding counterpart for WI feedback, owing to the additional information gain revealed from preferences among $m$ items instead of just $1$. 
\end{rem}

\vspace{-2pt}
\noindent
\subsubsection*{Algorithm \ref{alg:alg_estscr_mod}: Generalizing \algestscr \, for top-$m$ ranking (TR) feedback.}

The second algorithm essentially goes along the same line of \algestscr\, algorithm (Algorithm \ref{alg:alg_estscr}) except that in this case, after each round of subsetwise play, the empirical win-count $w_i$ of any element $i \in \cG_g\sm \{b\}$, at any group $g \in [G]$, is updated based on the selection of item $i$ in top-$m$ ranking, i.e. $w_i$ is incremented by $1$ as long as item $i$ is selected in the top-$m$ ranking  $\sigma \in \Sigma_{\cG_{gm}^\tau}$, at any round $\tau$ (Line $14$). The reduced sample complexity (compared to the earlier case of WI feedback) is thus achieved, as now it takes much lesser number of plays to select $b$ for $t = \frac{1}{\epsilon'^2}\ln \frac{1}{\delta'} $ times in the top-$m$ ranking (Line $13$). The formal descriptions of \algestscr\, (generalized to the setting of {TR} feedback) is given in Algorithm \ref{alg:alg_estscr_mod}.

\begin{center}
\begin{algorithm}[t]
   \caption{\textbf{\algestscr} (for TR feedback)}
   \label{alg:alg_estscr_mod}
\begin{algorithmic}[1]
   \STATE {\bfseries Input:} 
   \STATE ~~~ Set of item: $[n]$ ($n \ge k$), and subset size: $k$
   \STATE ~~~ Error bias: $\epsilon >0$, confidence parameter: $\delta >0$
   \STATE {\bfseries Initialize:} 
   \STATE ~~~ $\epsilon_b \leftarrow \min(\frac{\epsilon}{2},\frac{1}{2})$, $b \leftarrow $ \algwin($n,k,\epsilon_b,\frac{\delta}{4}$)
   \STATE ~~~ Set $S \leftarrow [n]\setminus \{b\}$, and divide $S$ into $G: = \lceil \frac{n-1}{k-1} \rceil$ sets $\cG_1, \cG_2, \cdots \cG_G$ such that $\cup_{j = 1}^{G}\cG_j = S$ and $\cG_{j} \cap \cG_{j'} = \emptyset, ~\forall j,j' \in [G], \, |G_j| = (k-1),\, \forall j \in [G-1]$
   \STATE ~~~ \textbf{If} $|\cG_{G}| < (k-1)$, \textbf{then} set $\cR \leftarrow \cG_G$, and $S \leftarrow S\setminus \cR$, $S' \leftarrow $ Randomly sample $(k - 1 -|\cG_G|)$ items from $S$, and set $\cG_G \leftarrow \cG_G \cup S'$
   \STATE ~~~ Set $\cG_j = \cG_j \cup \{b\}, \, \forall j \in [G]$
   \FOR {$g = 1,2, \ldots, G$}
   \STATE Set $\epsilon' \leftarrow \frac{\epsilon}{24}$ and $\delta' \leftarrow \frac{\delta}{8n}$ 
   \REPEAT
   \STATE Play $\cG_g$ and observe the feedback: The top-$m$ ranking $\bsigma \in \bSigma_{\cG^\tau_{gm}}$, where $\cG^\tau_{gm} \subseteq \cG_g$, $|\cG^\tau_{gm}| = m$ 
   \UNTIL{$b$ is chosen for $t = \frac{1}{\epsilon'^2}\ln \frac{1}{\delta'} $ times in the top-$m$ ranking} 
   \STATE Set $w_i \leftarrow$ is the total number of times item $i$ appear in top-$m$ rankings in between $t$ selections of $b$, and $\hat \theta_i^b \leftarrow \frac{w_i}{t}, \, \forall i \in \cG_g\setminus\{b\}$ 
   \ENDFOR
   \STATE Choose $ \bsigma \in \bSigma_{[n]}$, such that $\sigma(b) = 1$ and $\bsigma(i) < \bsigma(j)$ if $\htheta_{i}^b > \htheta_{j}^b, \, \forall i,j \in S\cup \cR$ 
   \STATE {\bfseries Output:} The ranking $\bsigma \in \bSigma_{[n]}$ 
\end{algorithmic}
\end{algorithm}
\vspace{2pt}
\end{center}


\vspace*{-30pt}

\section{Experiments}
\label{sec:expts}

The experimental setup of our empirical evaluations are as follows:

\textbf{Algorithms.} We simulate the results on our two proposed algorithms (1). \algant \, and (2). \algestscr.
We also compare our ranking performance with the \emph{PLPAC-AMPR} method, the only existing method (to the best of our knowledge) that addresses the online PAC ranking problem, although only in the dueling bandit setup (i.e. $k = 2$).

\textbf{Ranking Performance Measure.} 
We use the popular pairwise \emph{Kendall's Tau ranking loss} ('pd-loss' in short) \cite{Monjardet98} for measuring the accuracy of the estimated ranking $\bsigma$ with respect to the \br \, $\bsigma^*$ (corresponding to the true PL scores $\btheta$) with an additive $\epsilon$-relaxation: %
$
d_{\epsilon}(\bsigma^*,\bsigma) = \frac{1}{{n \choose 2}}\sum_{i < j} (g_{ij} + g_{ji})$, where each $g_{ij} = \1\left( (\theta_i > \theta_j + \epsilon) \wedge(\sigma(i) > \sigma(j)) \right)$. %
All reported performances are averaged across $50$ runs.

\textbf{Environments.}
We use four PL models: 1. {\it geo8} (with $n = 8$) 2. {\it arith10} (with $n = 10$) 3. {\it har20} (with $n = 20$) and 4. {\it arith50} (with $n = 50$). Their individual score parameters are as follows: \textbf{1. geo8:}
$
\theta_1=1$, and
$\frac{\theta_{i+1}}{\theta_{i}} = 0.875, ~\forall i \in [7]$. 
\textbf{2. arith10:} 
$\theta_1 = 1$ and $ \theta_{i} - \theta_{i+1} = 0.1, \, \forall i \in [9]$.
\textbf{3. har20:} $\theta = 1/(i), ~\forall i \in [20]$.
\textbf{4. arith50:} $\theta_1 = 1$ and $ \theta_{i} - \theta_{i+1} = 0.02, \, \forall i \in [9]$.

%
\begin{figure}[h!]
	\begin{center}
		\includegraphics[trim={0cm 0 0cm 0},clip,scale=0.5,width=0.45\textwidth]{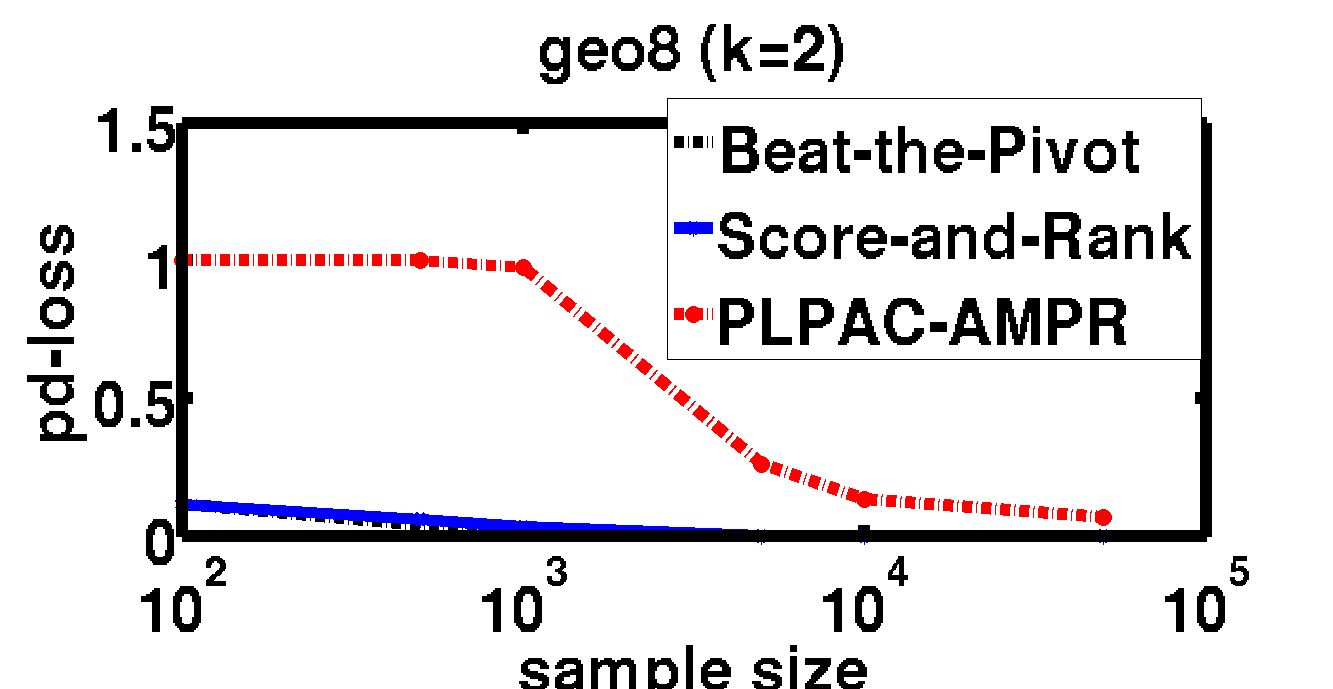}
		\hspace{0pt}
		\includegraphics[trim={0.cm 0 0cm 0},clip,scale=0.5,width=0.45\textwidth]{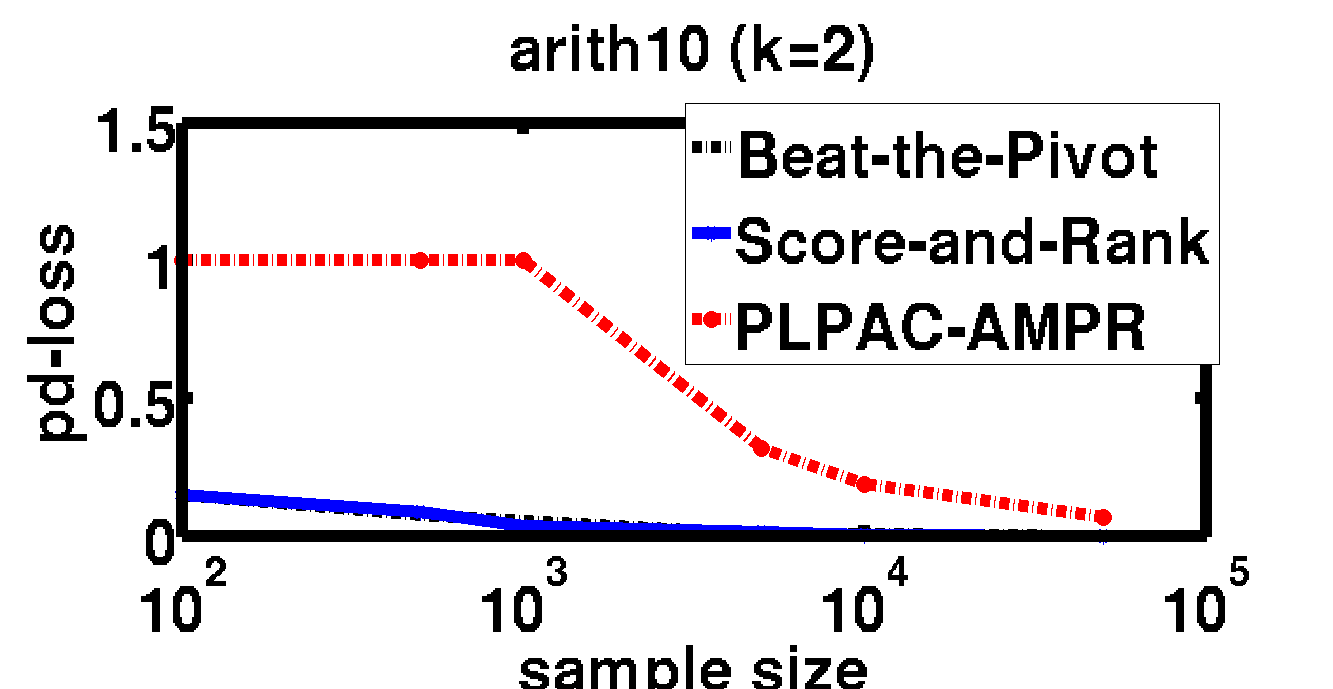}
		\hspace{0pt}
		\includegraphics[trim={0cm 0 0cm 0},clip,scale=0.5,width=0.45\textwidth]{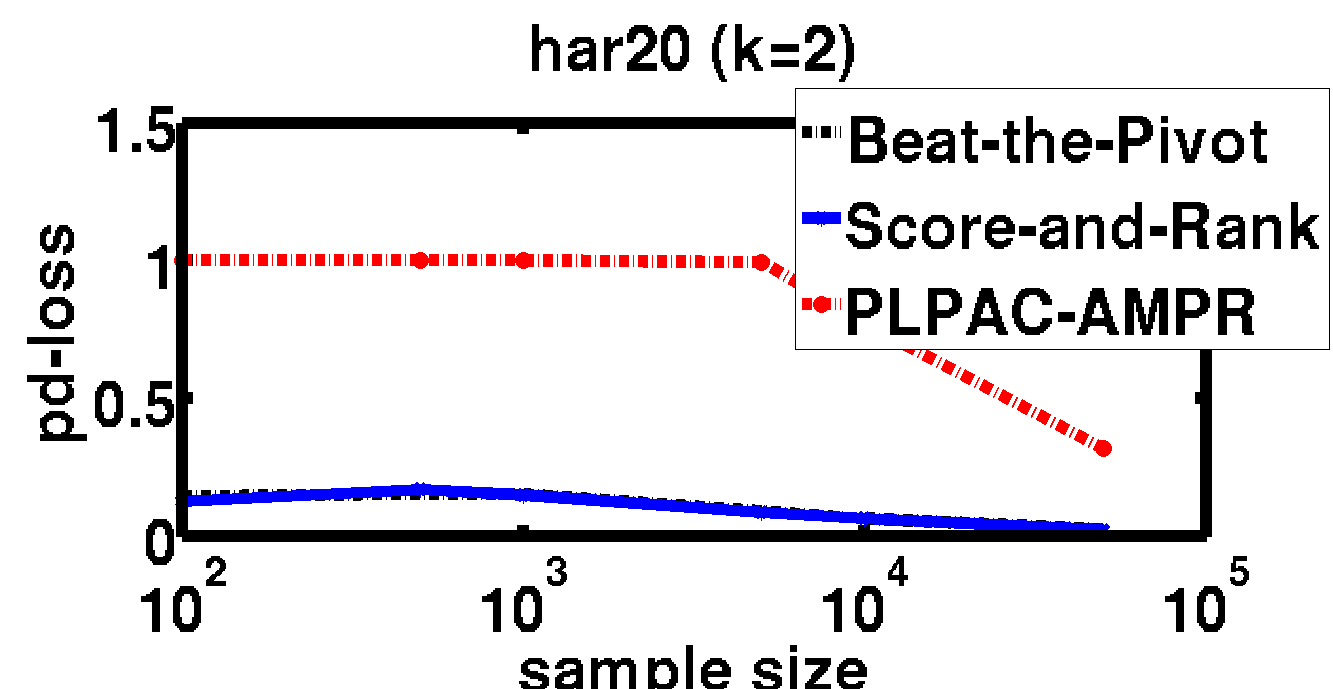}
		\hspace{0pt}
		\includegraphics[trim={0cm 0 0cm 0},clip,scale=0.5,width=0.45\textwidth]{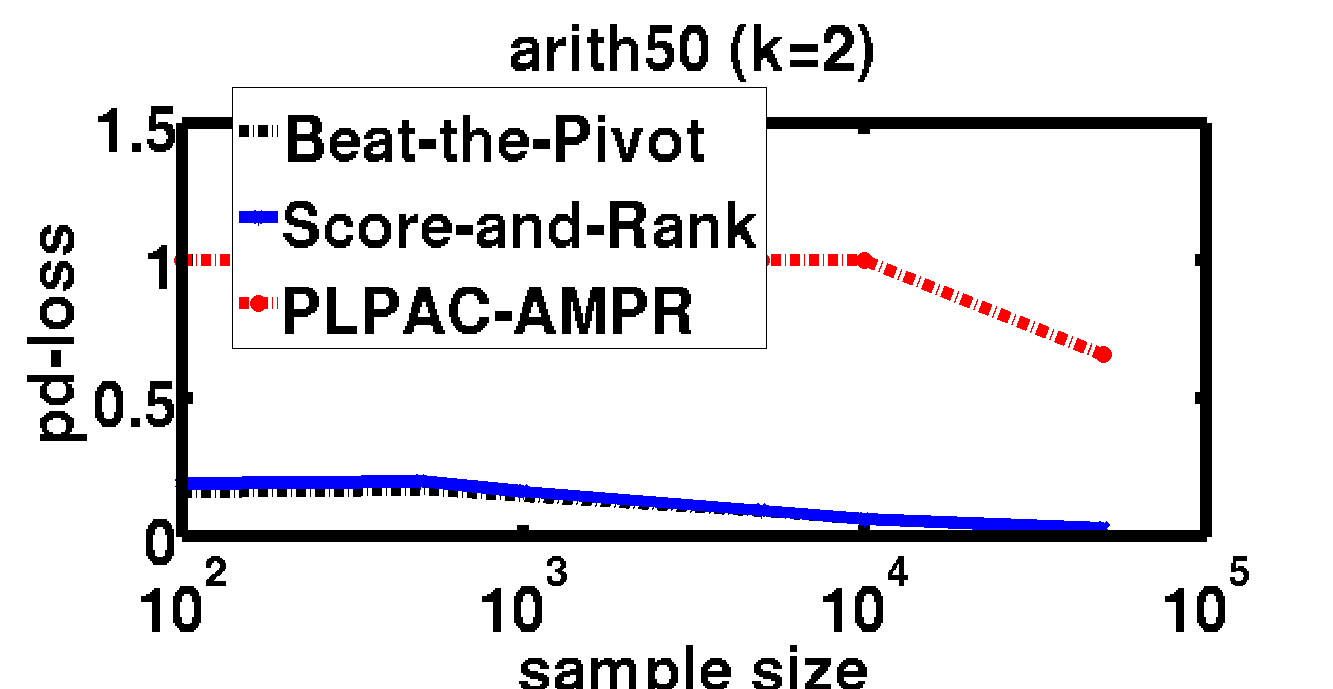}
		\vspace{-5pt}
		\caption{Ranking performance vs. sample size (\# rounds) with dueling plays ($k = 2$)}
		\label{fig:db}
		\vspace{-10pt}
	\end{center}
\end{figure}

\textbf{Ranking with Pairwise-Preferences ($k = 2$).} We first compare the above three algorithms with pairwise preference feedback, i.e. with $k = 2$ and $m = 1$ (WI feedback model). We set $\epsilon = 0.01$ and $\delta = 0.1$. Figure \ref{fig:db} clearly shows superiority of our two proposed algorithms over \emph{PLPAC-AMPR} \cite{Busa_pl} as they give much higher ranking accuracy given the sample size, rightfully justifying our improved theoretical guarantees as well (Theorem \ref{thm:batt_giant} and \ref{thm:est_thet}). Note that \emph{geo8} and \emph{arith50} are the easiest and hardest PL model instances, respectively; the latter has the largest $n$ with gaps $\theta_i - \theta_{i+1} = 0.02$. This also reflects in our experimental results as the ranking estimation loss being the highest for \emph{arith50} for all the algorithms, specifically  \emph{PLPAC-AMPR} very poorly till $10^4$ samples.

%
\vspace*{-10pt}
\begin{figure}[H]
	\begin{center}
		\includegraphics[trim={0cm 0 0cm 0},clip,scale=0.5,width=0.45\textwidth]{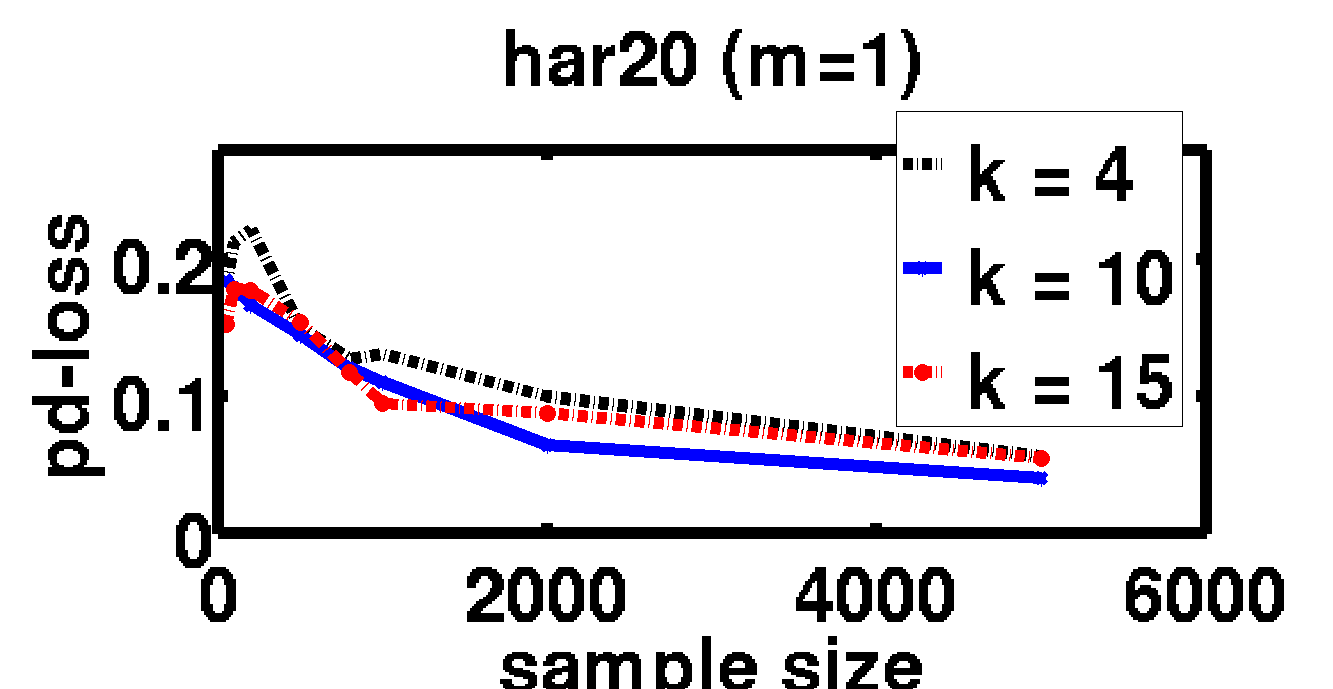}
		\hspace{0pt}
		\includegraphics[trim={0.cm 0 0cm 0},clip,scale=0.5,width=0.45\textwidth]{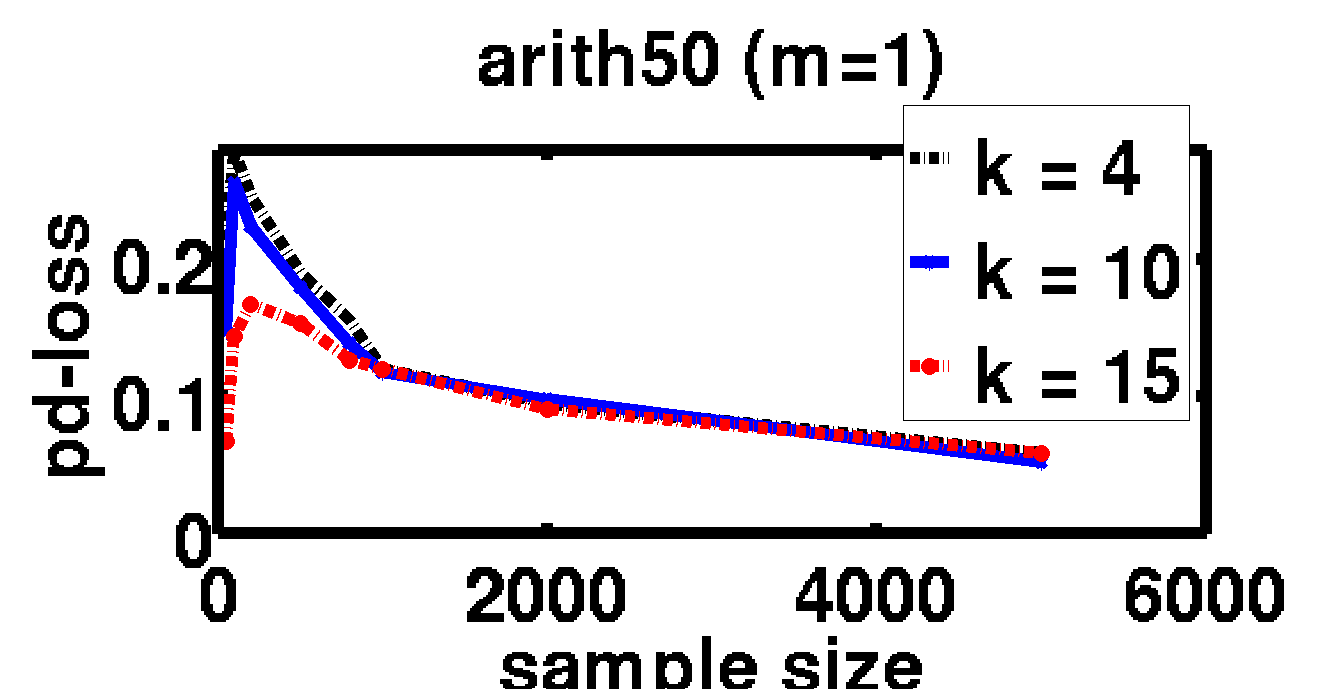}
		\vspace{-5pt}
		\caption{Ranking performance vs. subset size ($k$) with WI feedback ($m = 1$)} 
		\label{fig:varyk}
		\vspace{-10pt}
	\end{center}
\end{figure}
\vspace{-10pt}

\textbf{Ranking with Subsetwise-Preferences ($k > 2$ (with winner information (WI) feedback).} We next move to the setup of general subsetwise preference feedback $(k \ge 2)$ for WI feedback model (i.e. for $m = 1$) \footnote{\emph{PLPAC-AMPR} only works for $k = 2$ and is no longer applicable henceforth.}. We fix $\epsilon = 0.01$ and $\delta = 0.1$ and report the performance of \algant\,  on the datasets \emph{har20} and \emph{arith50}, varying $k$ over the range $4$ - $40$. As expected from Theorem \ref{thm:batt_giant} and explained in Remark \ref{rem:wi_null_k}, the ranking performance indeed does not seem to be varying with increasing subsetsize $k$ for WI feedback model for both PL models (Figure \ref{fig:varyk}).

%
\begin{figure}[H]
	\begin{center}
		\includegraphics[trim={0cm 0 0cm 0},clip,scale=1,width=0.45\textwidth]{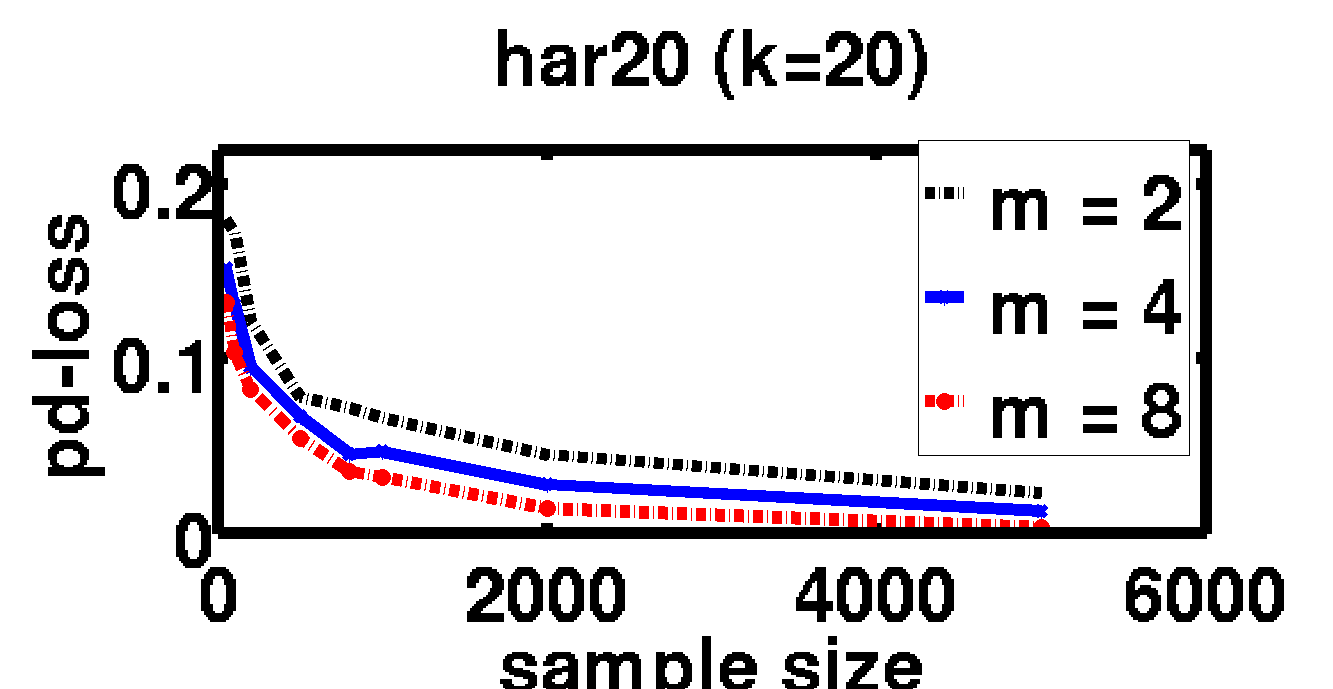}
		\hspace{0pt}
		\includegraphics[trim={0.cm 0 0cm 0},clip,scale=1,width=0.45\textwidth]{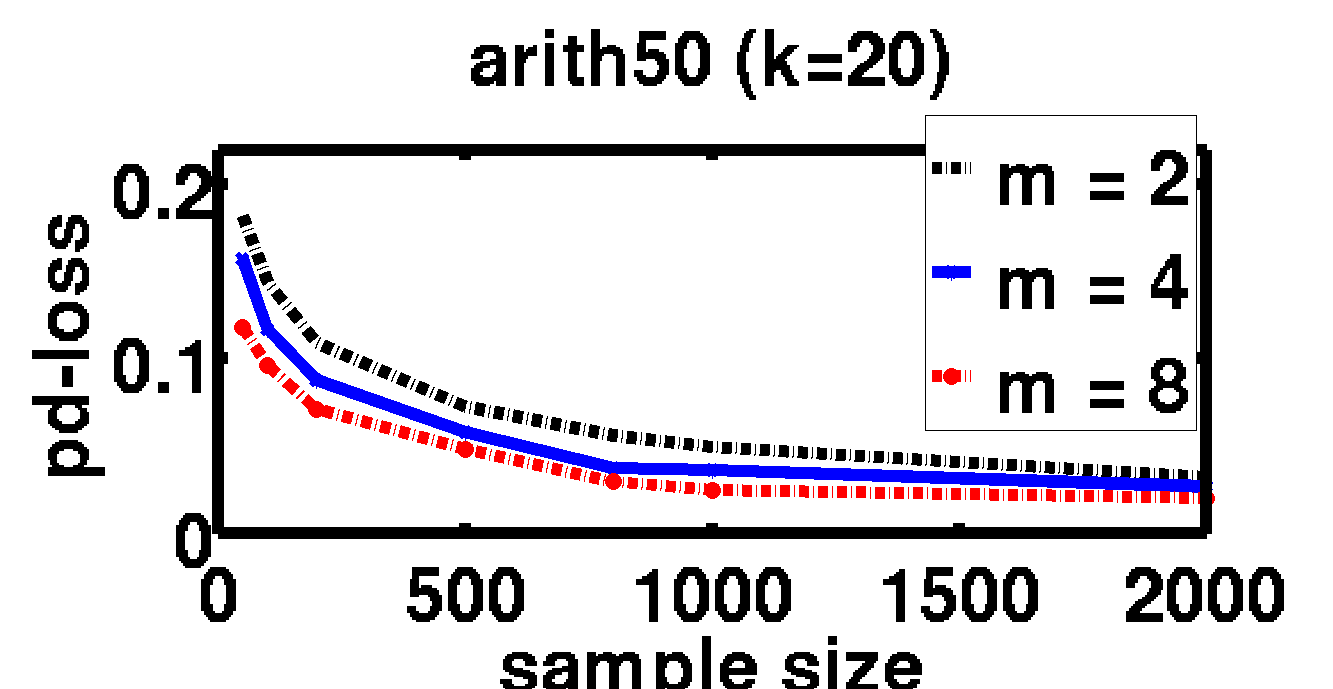}
		\vspace{-5pt}
		\caption{Ranking performance vs. feedback size ($m$) for fixed subset size ($k$)}
		\label{fig:varym}
		\vspace{-10pt}
	\end{center}
\end{figure}

	\vspace{-10pt}

\textbf{Ranking with Subsetwise-Preferences ($k > 2$ (with top-$m$ ranking (TR) feedback).} Lastly we report the performance of \algant\, for top-$m$ ranking (TR) feedback model (Algorithm \ref{alg:alg_ant_mod}) on two PL models: \emph{har20} (for $k = 20$) and \emph{arith50} (for $k = 45$), varying the range of $m$ from $2$ to $40$ (Figure \ref{fig:varym}). We set $\epsilon = 0.01$ and $\delta = 0.1$ as before. As expected, in this case it indeed reflects that the the ranking accuracy improves for larger $m$ given a fixed sample size---this reflects over theoretical guarantee of $\frac{1}{m}$-factor improvement of the sample complexity guarantee for TR feedback model (see Theorem \ref{thm:lb_pacpl_rnk} and Remark \ref{rem:1overm}).



\vspace*{-10pt}
\section{Conclusion and Future Work}
\label{sec:conclusion}
\vspace*{-10pt}




We have considered the PAC version of the problem of adaptively ranking $n$ items from $k$-subset-wise comparisons, 
in the Plackett-Luce (PL) preference model with winner information {(WI)} and top ranking {(TR)} feedback. 
With just WI, the required sample complexity lower bound is $\Omega\Big( \frac{n}{\epsilon^2} \ln \frac{n}{\delta} \Big)$, which is surprisingly independent of the subset size $k$. We have also designed two algorithms enjoying optimal sample complexity guarantees, and based on a novel \emph{pivoting-trick}. 
With {TR} feedback, a $\frac{1}{m}$-times faster learning rate is achievable, and we have given an algorithm with optimal sample complexity guarantees.

In the future, it would be of interest to analyse the problem with other choice models (e.g. multinomial probit, Mallows, nested logit, generalized extreme-value models, etc.), and perhaps to extend this theory to newer formulations such as assortment selection \cite{assort-discrete,assort-mallows}, revenue maximization with item prices \cite{assort-mnl,Agrawal+16}, or even in contextual scenarios \cite{CDB} where every individual user comes with their own model parameter.



\newpage

\bibliographystyle{plainnat}
\bibliography{bbpl-fr}

\newpage

\newpage
\onecolumn
\allowdisplaybreaks

\appendix
{
\section*{\centering \Large{Supplementary for Active Ranking with Subset-wise Preferences}}
}
  
\section{Appendix for Section \ref{sec:prelims}}  

\subsection{Proof of Lemma \ref{lem:pacobj_eqv}}  

\paceqv*

Recall that an algorithm is defined to be \pac\, (or \pac2\,) if it returns an \ebr\, (\ebrm\, ) with probability $(1-\delta)$.

\begin{proof}
\textbf{Case 1.} Suppose the algorithm is \pac. So if $\bsigma$ is the ranking returned by it, with high probability $(1-\delta)$, $\nexists$ two items $i,j \in [n]$ such that $\sigma(i) > \sigma(j)$ but $\theta_i - \theta_j \ge \epsilon $. But then this implies, $\nexists$ two items $i,j \in [n]$ with $\sigma(i) > \sigma(j)$ such that

\begin{align*}
Pr(i|\{i,j\}) - \frac{1}{2} = \frac{\theta_i - \theta_j}{2(\theta_i - \theta_j)} \ge \frac{\theta_i - \theta_j}{4b} = \frac{\epsilon}{4b} \ge \epsilon',
\end{align*}

which proves our first claim.

\textbf{Case 2.} Now suppose the algorithm is \pacdb. So if $\bsigma$ is the ranking returned by it, with high probability $(1-\delta)$, $\nexists$ two items $i,j \in [n]$ such that $\sigma(i) > \sigma(j)$ but $Pr(i|\{i,j\}) - \frac{1}{2} \ge \epsilon $. But since $Pr(i|\{i,j\}) = \frac{\theta_i}{\theta_i+\theta_j}$, this them equivalently implies, $\nexists$ two items $i,j \in [n]$ with $\sigma(i) > \sigma(j)$ such that

\begin{align*}
& \frac{\theta_i}{\theta_j} \ge \frac{1/2 + \epsilon}{1/2 - \epsilon}\\
& \implies \theta_i  \ge \theta_j\Bigg(\frac{1/2 + \epsilon}{1/2 - \epsilon}\Bigg) \ge \theta_j\Big({\frac{1}{2} + \epsilon}\Big)^2\\
& \implies \theta_i - \theta_j \ge \theta_j\big(4\epsilon^2 + 4\epsilon\big) \ge 4a\epsilon(1+\epsilon) \ge \epsilon',
\end{align*}
which proves our second claim and concludes the proof.

\end{proof}

\subsection{Proof of Lemma \ref{lem:pl_simulator}}  

\plsimulator*

\begin{proof}
	We prove the lemma by using a coupling argument. Consider the following `simulator' or probability space for the Plackett-Luce choice model that specifically depends on the item pair $i,j$, constructed as follows. Let $Z_1, Z_2, \ldots$ be a sequence of iid Bernoulli random variables with success parameter $\theta_i/(\theta_i + \theta_j)$. A counter is first initialized to $0$. At each time $t$, given $S_1, i_1, \ldots, S_{t-1}, i_{t-1}$ and $S_t$,  an independent coin is tossed with probability of heads $(\theta_i + \theta_j)/\sum_{k \in S_t} \theta_k$. If the coin lands tails, then $i_t$ is drawn as an independent sample from the Plackett-Luce distribution over $S_t \setminus \{i,j\}$, else, the counter is incremented by $1$, and $i_t$ is returned as $i$ if $Z(C) = 1$ or $j$ if $Z(C) = 0$ where $C$ is the present value of the counter.
	
	It may be checked that the construction above indeed yields the correct joint distribution for the sequence $i_1, S_1, \ldots, i_T, S_T$ as desired, due to the independence of irrelevant alternatives (IIA) property of the Plackett-Luce choice model: 
	\[ Pr(i_t = i | i_t \in \{i,j\}, S_t) = \frac{Pr(i_t = i | S_t)}{Pr(i_t \in \{i,j\} | S_t)} = \frac{\theta_i/\sum_{k \in S_t} \theta_k}{(\theta_i + \theta_j)/\sum_{k \in S_t} \theta_k} = \frac{\theta_i}{\theta_i + \theta_j}. \]
	Furthermore, $i_t \in \{i,j\}$ if and only if $C$ is incremented at round $t$, and $i_t = i$ if and only if $C$ is incremented at round $t$ and $Z(C) = 1$. We thus have
\begin{align*}
	Pr &\left( \frac{n_i(T)}{n_{ij}(T)} - \frac{\theta_i}{\theta_i + \theta_j} \ge \eta, \; n_{ij}(T) \geq v \right) = Pr\left(\frac{\sum_{\ell=1}^{n_{ij}(T)} Z_\ell}{n_{ij}(T)} - \frac{\theta_i}{\theta_i + \theta_j}\ge\eta, \; n_{ij}(T) \geq v \right) \\
	&= \sum_{m = v}^T Pr\left(\frac{\sum_{\ell=1}^{n_{ij}(T)} Z_\ell}{n_{ij}(T)} - \frac{\theta_i}{\theta_i + \theta_j}\ge\eta, \; n_{ij}(T) = m \right) \\
	&= \sum_{m = v}^T Pr\left(\frac{\sum_{\ell=1}^{m} Z_\ell}{m} - \frac{\theta_i}{\theta_i + \theta_j}\ge\eta, \; n_{ij}(T) = m \right) \\
	&\stackrel{(a)}{=} \sum_{m = v}^T Pr\left(\frac{\sum_{\ell=1}^{m} Z_\ell}{m} - \frac{\theta_i}{\theta_i + \theta_j}\ge\eta \right) \, Pr \left( n_{ij}(T) = m \right) \\
	&\stackrel{(b)}{\leq} \sum_{m = v}^T Pr \left( n_{ij}(T) = m \right) \, e^{-2m \eta^2} \leq e^{-2v \eta^2},
\end{align*}	
where $(a)$ uses the fact that $S_1, \dots, S_T, X_1, \ldots, X_T$ are independent of $Z_1, Z_2, \ldots,$, and so $n_{ij}(T) \in \sigma(S_1, \dots, S_T, X_1, \ldots, X_T)$ is independent of $Z_1, \ldots, Z_m$ for any fixed $m$, and $(b)$ uses Hoeffding's concentration inequality for the iid sequence $Z_i$. 

Similarly, one can also derive
\[
Pr\left( \frac{n_i(T)}{n_{ij}(T)} - \frac{\theta_i}{\theta_i + \theta_j} \le -\eta, \; n_{ij}(T) \geq v \right) \le e^{-2v \eta^2}, 
\]
which concludes the proof.
\end{proof}

\section{Appendix for Section \ref{sec:est_pl_score}}  


\subsection{Proof of Lemma \ref{lem:geo2mgf}}
\label{app:geo2mgf}

\geomgf*

\begin{proof}
$(1)$ follows from the simple observation probability of item $b$ winning at any trial $t$ is independent and identically distributed (\emph{iid}) $Ber\Big( \frac{\theta_b}{\sum_{j \in S}} \Big)$ and $T$ essentially denotes the number of trials till first success (win of $b$). (Recall from Section \ref{sec:prelims} that $Geo(p)$ denote the `number of trials before success' version of the Geometric random variable with probability of success at each trial being $p \in [0,1]$).

We proof $(2)$ by deriving the moment generating function (MGF) of the random variable $w_i(T)$ which gives:

\begin{restatable}[MGF of $w_i(T)$]{lem}{mgf}
\label{lem:mgf}
For any item $i \in S\setminus\{b\}$, the moment generating function of the random variable $w_i(T)$ is given by: 
$
\E\Big[ e^{\lambda w_i(T)} \Big] = \frac{1}{1-\frac{\theta_i}{\theta_b}(e^{\lambda - 1})}, \forall \ell \in [w_1], \, \text{ for any } \lambda \in \big(0,\ln(1+\eta)\big), \text{ with } \eta < \min_{j \in S}\frac{\theta_b}{\theta_j}
$.
\end{restatable}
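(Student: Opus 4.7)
}
My plan is to compute the moment generating function of $w_i(T)$ directly from the joint distribution of $(T, w_i(T))$, without invoking part (2) of Lemma~\ref{lem:geo2mgf} (since Lemma~\ref{lem:mgf} is being used \emph{en route} to establishing that part). The key observation is that the sequence $i_1, i_2, \ldots$ consists of i.i.d.\ PL draws over $[n]$, so if I set
\[
p_b = \frac{\theta_b}{\sum_{j \in [n]} \theta_j}, \qquad p_i = \frac{\theta_i}{\sum_{j \in [n]} \theta_j}, \qquad p_o = 1 - p_b - p_i,
\]
then each trial independently lands in exactly one of the three ``buckets'' $\{b\}, \{i\}, [n]\setminus\{b,i\}$ with probabilities $p_b, p_i, p_o$ respectively.

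The event $\{T = t, \; w_i(T) = k\}$ is precisely the event that the first $t-1$ trials contain $k$ draws equal to $i$ and $t-1-k$ draws lying outside $\{b,i\}$, and the $t$-th trial equals $b$. Since the trials are i.i.d., this gives the clean expression
\[
\Pr(T = t, \; w_i(T) = k) \;=\; \binom{t-1}{k} \, p_i^{\,k}\, p_o^{\,t-1-k}\, p_b, \qquad 0 \le k \le t-1.
\]
Plugging into $\E[e^{\lambda w_i(T)}] = \sum_{t \ge 1}\sum_{k=0}^{t-1} e^{\lambda k}\,\Pr(T=t,\,w_i(T)=k)$, substituting $m = t-1$, and swapping the order of summation, the inner sum becomes a binomial expansion $(p_o + p_i e^{\lambda})^{m}$ and the outer sum becomes a geometric series in that quantity. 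Summing yields
\[
\E\!\left[e^{\lambda w_i(T)}\right] \;=\; \frac{p_b}{1 - (p_o + p_i e^{\lambda})} \;=\; \frac{p_b}{p_b + p_i - p_i e^{\lambda}} \;=\; \frac{1}{1 - \frac{\theta_i}{\theta_b}\bigl(e^{\lambda} - 1\bigr)},
\]
which matches the stated formula after cancelling $\sum_j \theta_j$ from the numerator and denominator of the ratio $p_i/p_b = \theta_i/\theta_b$.

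The only nontrivial point is justifying the convergence of the geometric series, which requires $p_o + p_i e^{\lambda} < 1$, equivalently $e^{\lambda} < 1 + p_b/p_i = 1 + \theta_b/\theta_i$. This is ensured by the hypothesis $\lambda \in (0, \ln(1+\eta))$ with $\eta < \min_{j \in S} \theta_b/\theta_j \le \theta_b/\theta_i$; the uniformity in $j$ simply allows the same bound to work when one later applies the MGF for every $i \neq b$ simultaneously. I expect the main ``obstacle'' to be purely notational bookkeeping in the double sum manipulation; conceptually there is no difficulty, because the independence of the trials and the PL form of the marginals make the joint probability factorize cleanly. As a sanity check, one can alternatively derive the same MGF by a thinning argument---restrict attention to the subsequence of trials whose outcome lies in $\{b,i\}$, observe that by IIA this is an i.i.d.\ Bernoulli sequence with $\Pr(\text{outcome}=b) = \theta_b/(\theta_b+\theta_i)$, and identify $w_i(T)$ as the number of failures before the first success---which is a geometric random variable whose MGF is exactly the expression above.
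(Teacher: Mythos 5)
Your proof is correct, and it reaches the stated MGF by a route that is organized differently from the paper's. The paper conditions on $\hat T := T-1 \sim \mathrm{Geo}\big(\theta_b/\sum_{j}\theta_j\big)$, asserts that $w_i(T)\,|\,\hat T \sim \mathrm{Bin}\big(\hat T, \theta_i/\sum_{j\neq b}\theta_j\big)$ (a consequence of IIA applied to the non-$b$ draws), and then composes the known Binomial and Geometric MGFs via the law of iterated expectation, with a substitution $\lambda' = \ln(1+p'(e^{\lambda}-1))$ whose admissible range must be checked separately. You instead write down the joint pmf $\Pr(T=t,\,w_i(T)=k)=\binom{t-1}{k}p_i^k p_o^{t-1-k}p_b$ from the trinomial structure of the i.i.d.\ draws and sum the double series directly; after expanding, your inner sum $(p_o+p_ie^{\lambda})^m$ coincides term-by-term with the paper's $(1-p)^m(1-p'+p'e^{\lambda})^m$, so the two arguments are algebraically the same computation. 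What your version buys is self-containedness: you never need to justify the conditional Binomial claim (which itself rests on an IIA-type argument), and the single convergence condition $p_o+p_ie^{\lambda}<1$ replaces the paper's range check on $\lambda'$; the cost is slightly more explicit series bookkeeping. Your closing remark correctly identifies the thinning/IIA argument as equivalent to part (2) of Lemma~\ref{lem:geo2mgf} and rightly avoids invoking it circularly. One cosmetic note: the formula as printed in the lemma statement reads $e^{\lambda-1}$, but both your derivation and the paper's make clear the intended expression is $(e^{\lambda}-1)$.
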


See Appendix \ref{app:mgf} for the proof. Now firstly recall that the MGF of any random variable $X \sim Geo(p)$ is given by $\E[e^{\lambda X}] = \frac{p}{(1-e^{\lambda }(1-p))}, \, \forall \lambda \in \Big(0, -\ln (1-p) \Big)$. In the current case $p = \frac{\theta_b}{\theta_b+\theta_i}$. Thus we have $\Bigg(\frac{1}{p} - 1\Bigg) = \frac{\theta_i}{\theta_b}$ or $\frac{1}{1-p} = \frac{\theta_i + \theta_b}{\theta_i}$, and the MGF holds good for any $\lambda \in (0,\ln(1+\eta))$ as long as $\eta < \min_{j\in S}\frac{\theta_b}{\theta_j}$.

The proof now follows from straightforward reduction of Lemma \ref{lem:mgf}. Formally,  we have:

\begin{align*}
\E \Big[ e^{\lambda X} \Big] & =  \frac{p}{(1-e^{\lambda }(1-p))} ~~\text{ for any } \lambda \in \big(0,\ln(1+\eta)\big), \text{ where } \eta < \min_{j \in S}\frac{\theta_b}{\theta_j}, \\
& = \frac{1}{\frac{1}{p}-e^{\lambda}\Big(\frac{1-p}{p}\Big)} = \frac{1}{1+ \frac{\theta_i}{\theta_b} - e^\lambda\big(\frac{\theta_i}{\theta_b}\big)}\\
& = \frac{1}{1-\frac{\theta_i}{\theta_b}(e^{\lambda - 1})} = \E\Big[ e^{\lambda w_i(T)} \Big], 
\end{align*} 

where the last equality follows from Lemma \ref{lem:mgf} as two random variables with same MGF must have same distributions. This concludes the proof.
\end{proof}

\subsection{Proof of Lemma \ref{lem:mgf}} 
\label{app:mgf}
 
\mgf*

\begin{proof}
The proof follows from using standard MGF results of Bernoulli and Geometric random variables. We denote $S_{-b} = S \sm \{b\}$, $\hat T = (T-1)$, $p = \frac{\theta_b}{\sum_{j \in S}\theta_j}$, and $p' = \frac{\theta_i}{\sum_{j \in S_{-b}}\theta_j}$. As argued in Lemma \ref{lem:geo2mgf}, we know that ${\hat T} \sim$ Geo$\Big(p\Big)$. Also given a fixed (non-random) ${\hat T}$, $w_i(T) \sim$ Bin$\Big({\hat T},p'\Big)$. Then using law of iterated expectation:

\begin{align*}
\E\Big[ e^{\lambda w_i(T)} \Big] & = \E_{{\hat T}}\Big[ \E\big[ e^{\lambda w_i(T)} \mid {\hat T} \big] \Big]\\
& = \E_{{\hat T}}\Big[ (p'e^\lambda + 1 - p')^{{\hat T}} \Big],\\
\end{align*}
where the last equality follows from the MGF of Binomial random variables. Note that, since $\lambda > 0$, we have $(p'e^\lambda + 1 - p')  = 1 + p'(e^\lambda -1) > 1$. Let us denote $\lambda' = \ln(1 + p'(e^\lambda -1))$. Clearly $ \lambda' > 0$ as both $\lambda,p' > 0$. 
Then from above equation, one can write:

\begin{align*}
\E\Big[ e^{\lambda w_i(T)} \Big] & = \E_{{\hat T}}\Big[ e^{\lambda'{\hat T}} \Big],\\
& = \frac{p}{(1-e^{\lambda'}(1-p))}\\
& = \frac{p}{1-(1 + p'(e^\lambda -1))(1-p)} = \frac{1}{1-\frac{\theta_i}{\theta_b}(e^{\lambda - 1})},
\end{align*}

where the second equality follows from the result that MGF of a geometric random variable $X \sim$ Geo$(p)$ is: $\E[e^{\lambda'X}] = \frac{p}{(1-e^{\lambda'}(1-p))}, \, \forall \lambda' \in \Big(0, -\ln (1-p) \Big)$. 

Thus the only remaining thing to show is $\lambda'$ indeed satisfies the above range. As argues above, clearly $ \lambda' > 0$ as both $\lambda,p' > 0$. 
To verify the upper bound, note that by choice $\lambda < \ln \Big( 1+\frac{\theta_b}{\theta_j}\Big), \, \forall j \in S$, which implies $e^\lambda  < (1+ \frac{\theta_b}{\theta_i})$, for any $i \in S_{-b}$. This further implies $(e^\lambda - 1)\frac{\theta_i}{\theta_b} < 1 \implies (1 - \frac{\theta_i}{\theta_b}(e^\lambda -1)) > 0 \implies (1-p)(1+p'(e^\lambda - 1)) < 1$ rearranging which leads to the desired bound $\lambda' < -\ln (1-p)$ (recall $\lambda' = \ln(1+p'(e^\lambda-1))$,
and thus the above MGF holds good. This concludes the proof.
\end{proof}

\subsection{Proof of Lemma \ref{lem:geoconc}}

\geoconc*

\begin{proof}
The result follows from the concentration of Geometric random variable as shown in \cite{geoconc}. Note that, $Z$ denotes the number of trials needed to get $n$ wins of item $b$, where the probability of success (i.e. item $b$ winning) at each trial is $\frac{\theta_b}{\theta_b + \theta_i}$. Thus $Z \sim $ \emph{NB}$(n,\frac{\theta_b}{\theta_b + \theta_i})$. Clearly, by applying union bounding we get:

\begin{align*}
Pr\Big( \Big|\frac{Z}{n} - \frac{\theta_i}{\theta_b}\Big| > \eta \Big) \le Pr\Big( \frac{Z}{n} - \frac{\theta_i}{\theta_b} > \eta \Big) + Pr\Big( \frac{Z}{n} - \frac{\theta_i}{\theta_b} < -\eta \Big).
\end{align*}

Let us start by analysing the first term $Pr\Big( \frac{Z}{n} - \frac{\theta_i}{\theta_b} > \eta \Big)$.

\begin{align}
\label{eq:conc1}
\nonumber Pr& \Big( \frac{Z}{n} - \frac{\theta_i}{\theta_b} > \eta \Big) =  Pr\Big( Z > n\frac{\theta_i}{\theta_b} + n\eta \Big)\\
\nonumber & \le Pr \Bigg( Bin\bigg(n(\frac{\theta_i}{\theta_b} + 1) +  n\eta, \frac{\theta_b}{\theta_b + \theta_i} \bigg) < n \Bigg)\\
\nonumber & \le Pr \Bigg( Bin\bigg(n(\frac{\theta_i}{\theta_b} + 1) +  n\eta, \frac{\theta_b}{\theta_b + \theta_i} \bigg) - \bigg[ n(\frac{\theta_i}{\theta_b} + 1) +  n\eta\bigg]\frac{1}{1 + \frac{\theta_i}{\theta_b}}  < - \frac{n \eta}{1 + \frac{\theta_i}{\theta_b}} \Bigg)\\
& \le \exp\Big(-\frac{2}{m}\tilde \eta^2\Big) = \exp\Bigg( - \frac{2n\eta^2}{\Big( 1+\frac{\theta_i}{\theta_b} \Big)^2\Big(\eta + 1+\frac{\theta_i}{\theta_b}  \Big)}\Bigg)
\end{align}
where the last inequality follows simply apply Hoeffding's inequality with $m = n(\frac{\theta_i}{\theta_b} + 1) +  n\eta$ and $\tilde \eta = \frac{n \eta}{1 + \frac{\theta_i}{\theta_b}}$. Using a similar derivation as before, one can also show:

\begin{align}
\label{eq:conc2}
Pr\Big( \frac{Z}{n} - \frac{\theta_i}{\theta_b} < -\eta \Big) \le \exp\Bigg( - \frac{2n\eta^2}{\Big( 1+\frac{\theta_i}{\theta_b} \Big)^2\Big(\eta + 1+\frac{\theta_i}{\theta_b}  \Big)}\Bigg)
\end{align}

The result now follows combining \eqref{eq:conc1} and \eqref{eq:conc2}.
\end{proof}

\newpage

\section{Appendix for Section \ref{sec:algo_wi}}


\subsection{Proof of Lemma \ref{lem:trace_best}}
\label{app:trace_best}


\ubtrc*

\begin{proof}
We start by analyzing the required sample complexity first. Note that the `while loop' of Algorithm \ref{alg_bi} always discards away $k-1$ items per iteration. Thus, $n$ being the total number of items the loop can be executed is at most for $\lceil \frac{n}{k-1} \rceil $ many number of iterations. Clearly, the sample complexity of each iteration being $t = \frac{2k}{\epsilon^2}\ln \frac{2\delta}{n}$, the total sample complexity of the algorithm becomes $\big( \lceil \frac{n}{k-1} \rceil \big)\frac{2k}{\epsilon^2}\ln \frac{n}{2\delta} \le \big(  \frac{n}{k-1} + 1 \big)\frac{2k}{\epsilon^2}\ln \frac{n}{2\delta} = \big( n+ \frac{n}{k-1} + k \big)\frac{2}{\epsilon^2}\ln \frac{n}{2\delta} = O(\frac{n}{\epsilon^2}\ln \frac{n}{\delta})$.

We now prove the $(\epsilon,\delta)$-{PAC} correctness of the algorithm. As argued before, the `while loop' of Algorithm \ref{alg_bi} can run for maximum $\lceil \frac{n}{k-1} \rceil $ many number of iterations. We denote the iterations by $\ell = 1,2, \ldots \lceil \frac{n}{k-1} \rceil $, and the corresponding set $\cA$ of iteration $\ell$ by $\cA_\ell$. 

Note that our idea is to retain the estimated best item in `running winner' $r_\ell$ and compare it with the `empirical best item' $c_\ell$ of $\cA_\ell$ at every iteration $\ell$. The crucial observation lies in noting that at any iteration $\ell$, $r_\ell$ gets updated as follows:

\begin{restatable}[]{lem}{lemtrc}
\label{lem:c_vs_r}
At any iteration $\ell = 1,2 \ldots \big \lfloor \frac{n}{k-1} \big \rfloor$, with probability at least $(1-\frac{\delta}{2n})$, Algorithm \ref{alg_bi} retains $r_{\ell+1} \leftarrow r_\ell$ if $p_{c_\ell r_\ell } \le \frac{1}{2}$, and sets $r_{\ell+1} \leftarrow c_\ell$ if $p_{c_\ell r_\ell} \ge \frac{1}{2} + \epsilon$.
\end{restatable}

\begin{proof}
Consider any set $\cA_\ell$, by which we mean the state of $\cA$ in the algorithm at iteration $\ell$. The crucial observation to make is that since $c_\ell$ is the empirical winner of $t$ rounds, then $w_{c_\ell} \ge \frac{t}{k}$. Thus $w_{c_\ell} + w_{r_\ell} \ge \frac{t}{k}$. Let $n_{ij}:= w_i + w_j$ denotes the total number of pairwise comparisons between item $i$ and $j$ in $t$ rounds, for any $i,j \in \cA_\ell$. Then clearly, $0 \le n_{ij} \le t$ and $n_{ij} = n_{ji}$. Specifically we have $\hp_{r_\ell c_\ell} = \frac{w_{r_\ell}}{w_{r_\ell}+w_{c_\ell}} = \frac{w_{r_\ell}}{n_{r_\ell c_\ell}}$.
We prove the claim by analyzing the following cases: 

\textbf{Case 1.} (If $p_{c_\ell r_\ell} \le \frac{1}{2}$, \algwin\, retains $r_{\ell+1} \leftarrow r_\ell$): Note that \algwin\, replaces $r_{\ell+1}$ by $c_\ell$ only if $\hp_{c_\ell,r_\ell} > \frac{1}{2} + \frac{\epsilon}{2} $, but this happens with probability: 

\begin{align*}
& Pr\Bigg( \bigg\{ \hp_{c_\ell r_\ell} > \frac{1}{2} + \frac{\epsilon}{2} \bigg\} \Bigg) \\
& = Pr\Bigg( \bigg\{ \hp_{c_\ell r_\ell} > \frac{1}{2} + \frac{\epsilon}{2} \bigg\} \cap \bigg\{ n_{c_\ell r_\ell} \ge \frac{t}{k} \bigg\}\Bigg) + \cancelto{0}{Pr\bigg\{ n_{c_\ell r_\ell} < \frac{t}{k} \bigg\}}Pr\Bigg( \bigg\{ \hp_{c_\ell r_\ell} > \frac{1}{2} + \frac{\epsilon}{2} \bigg\} \Big | \bigg\{ n_{c_\ell r_\ell} < \frac{t}{k} \bigg\}\Bigg)\\
& \le Pr\Bigg( \bigg\{ \hp_{c_\ell r_\ell} - p_{c_\ell r_\ell} > \frac{\epsilon}{2} \bigg\} \cap \bigg\{ n_{c_\ell r_\ell} \ge \frac{t}{k} \bigg\} \Bigg) \le \exp\Big( -2\dfrac{t}{k}\bigg(\frac{\epsilon}{2}\bigg)^2 \Big) = \frac{\delta}{2n},
\end{align*}
where the first inequality follows as $p_{c_\ell r_\ell} \le \frac{1}{2}$, and the second inequality is by applying Lemma \ref{lem:pl_simulator} with $\eta = \frac{\epsilon}{2}$ and $v = \frac{t}{k}$.
We now proceed to the second case:

\textbf{Case 2.} (If $p_{c_\ell r_\ell} \ge \frac{1}{2} + \epsilon$, \algwin\,  sets $r_{\ell+1} \leftarrow c_\ell$): Recall again that \algwin\, retains $r_{\ell+1}  \leftarrow r_\ell$ only if $\hp_{c_\ell,r_\ell} \le \frac{1}{2} + \frac{\epsilon}{2} $. This happens with probability:

\begin{align*}
& Pr\Bigg( \bigg\{ \hp_{c_\ell r_\ell} \le \frac{1}{2} + \frac{\epsilon}{2} \bigg\} \Bigg) \\
& = Pr\Bigg( \bigg\{ \hp_{c_\ell r_\ell} \le \frac{1}{2} + \frac{\epsilon}{2} \bigg\} \cap \bigg\{ n_{c_\ell r_\ell} \ge \frac{t}{k} \bigg\}\Bigg) + \cancelto{0}{Pr\bigg\{ n_{c_\ell r_\ell} < \frac{t}{k} \bigg\}}Pr\Bigg( \bigg\{ \hp_{c_\ell r_\ell} \le \frac{1}{2} + \frac{\epsilon}{2} \bigg\} \Big | \bigg\{ n_{c_\ell r_\ell} < \frac{t}{k} \bigg\}\Bigg)\\
& = Pr\Bigg( \bigg\{ \hp_{c_\ell r_\ell} \le \frac{1}{2} + \epsilon - \frac{\epsilon}{2} \bigg\} \cap \bigg\{ n_{c_\ell r_\ell} \ge \frac{t}{k} \bigg\} \Bigg) \\
& \le Pr\Bigg( \bigg\{ \hp_{c_\ell r_\ell} - p_{c_\ell r_\ell} \le - \frac{\epsilon}{2} \bigg\} \cap \bigg\{ n_{c_\ell r_\ell} \ge \frac{t}{k} \bigg\} \Bigg) \le \exp\Big( -2\dfrac{t}{k}\bigg(\frac{\epsilon}{2}\bigg)^2 \Big) = \frac{\delta}{2n},
\end{align*}
where the first inequality holds as $p_{c_\ell r_\ell} \ge \frac{1}{2} + \epsilon$, and the second one by applying Lemma \ref{lem:pl_simulator} with $\eta = \frac{\epsilon}{2}$ and $v = \frac{t}{k}$. Combining the above two cases concludes the proof.
\end{proof}

Given Algorithm \ref{alg_bi} satisfies Lemma \ref{lem:c_vs_r}, and taking union bound over $(k-1)$ elements in $\cA_\ell \setminus\{r_\ell\}$, we get that with probability at least $(1-\frac{(k-1)\delta}{2n})$,

\begin{align}
\label{eq:r_vs_c}
p_{r_{\ell+1}r_\ell} \ge \frac{1}{2} \text{ and, } p_{r_{\ell+1}c_\ell} \ge \frac{1}{2} - \epsilon.
\end{align} 

Above suggests that for each iteration $\ell$, the estimated `best' item $r_\ell$ only gets improved as $p_{r_{\ell+1}r_\ell} \ge \frac{1}{2}$. Let, $\ell_*$ denotes the specific iteration such that $1 \in \cA_\ell$ for the first time, i.e. $\ell_* = \min\{ \ell \mid 1 \in \cA_\ell \}$. Clearly $\ell_* \le \lceil \frac{n}{k-1} \rceil$. 
Now \eqref{eq:r_vs_c} suggests that with probability at least $(1-\frac{(k-1)\delta}{2n})$, $p_{r_{\ell_*+1}1} \ge \frac{1}{2} - \epsilon$. Moreover \eqref{eq:r_vs_c} also suggests that for all $\ell > \ell_*$, with probability at least $(1-\frac{(k-1)\delta}{2n})$, $p_{r_{\ell+1}r_\ell} \ge \frac{1}{2}$, which implies for all $\ell > \ell_*$, $p_{r_{\ell+1}1} \ge \frac{1}{2} - \epsilon$ as well -- This holds due to the following transitivity property of the Plackett-Luce model: For any three items $i_1,i_2,i_3 \in [n]$, if $p_{i_1i_2}\ge \frac{1}{2}$ and $p_{i_2i_3}\ge \frac{1}{2}$, then we have $p_{i_1i_3}\ge \frac{1}{2}$ as well. 

This argument finally leads to $p_{r_*1} \ge \frac{1}{2} - \epsilon$. Since failure probability at each iteration $\ell$ is at most $\frac{(k-1)\delta}{2n}$, and Algorithm \ref{alg_bi} runs for maximum $\lceil \frac{n}{k-1} \rceil$ many number of iterations, using union bound over $\ell$, the total failure probability of the algorithm is at most $\lceil \frac{n}{k-1} \rceil \frac{(k-1)\delta}{2n} \le (\frac{n}{k-1}+1)\frac{(k-1)\delta}{2n} = \delta\Big( \frac{n+k-1}{2n} \Big) \le \delta$ (since $k \le n$). This concludes the correctness of the algorithm showing that it indeed satisfies the $(\epsilon,\delta)$-PAC objective.
\end{proof}

\subsection{Proof of Theorem \ref{thm:batt_giant}}

\ubpiv*

\begin{proof}
We first analyze the sample complexity of \algant. Clearly, there are at most $G = \lceil \frac{n-1}{k-1} \rceil \le \frac{n-1}{k-1} + 1 \le \frac{2(n-1)}{k-1}$ groups. Here the last inequality follows since $n \ge k$. 
Now each group $\cG_g$ (set of $k$ items) is played (queried) for at most $t:= \frac{2k}{\epsilon'^2}\log \frac{1}{\delta'}$ times, which gives the total sample complexity of the algorithm to be 
\[
G*t \le  \frac{2(n-1)}{(k-1)}*\frac{2k}{\epsilon'^2}\log \frac{1}{\delta'} = \frac{2048(n-1)}{\epsilon^2}\log \frac{8n}{\delta} = O\Big( \frac{n}{\epsilon^2}\log \frac{n}{\delta} \Big),
\]
where we used the fact $k \ge 2$ and bound $\frac{k}{k-1} \le 2$.

Moreover the sample complexity of \algwin\, is also $O\Big( \frac{n}{\epsilon^2}\log \frac{n}{\delta}\Big)$ as proved in Lemma \ref{lem:trace_best}. Combining this with above thus makes the total sample complexity of \algant \, $O\Big( \frac{n}{\epsilon^2}\log \frac{n}{\delta}\Big)$.

We are now only left to show the correctness of the algorithm, i.e. \algant\, indeed \pac\, in the above sample complexity.
We start by proving the following lemma which would be crucial throughout the analysis. Let us first denote $\Delta^b_{ij} = P(i \succ b) - P(j \succ b), \,$ for any $i$ and $j \in [n]$.

\begin{lem}
\label{lem:p_1i}
If $b$ is the pivot-item returned by Algorithm \ref{alg_bi} (Line $5$ of \algant), then
for any two items $i,j \in [n]$, such that $\theta_i \ge \theta_j$, $\frac{(\theta_i - \theta_j)}{8} \le \Delta^b_{ij} \le 4(\theta_i - \theta_j)$, with probability at least $\big( 1 - \frac{\delta}{2}\big)$.
\end{lem}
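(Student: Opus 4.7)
The plan is to combine the correctness of \algwin\ (Lemma~\ref{lem:trace_best}) with a short algebraic manipulation of $\Delta^b_{ij}$. Since \algant\ invokes $\algwin(n,k,\epsilon_b, \delta/2)$ with $\epsilon_b = \min(\epsilon/2, 1/2) \le 1/2$, Lemma~\ref{lem:trace_best} gives that with probability at least $1 - \delta/2$ the returned pivot $b$ is an $\epsilon_b$-Best-Item, so
\[
\theta_b \;\ge\; \theta_1 - \epsilon_b \;=\; 1 - \epsilon_b \;\ge\; \tfrac{1}{2}.
\]
I will condition every subsequent calculation on this `good-pivot' event, so that the claimed $\delta/2$ failure probability is inherited directly and what remains is purely deterministic.

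With $\theta_b \ge 1/2$ in hand, I would expand
\[
\Delta^b_{ij} \;=\; \frac{\theta_i}{\theta_i + \theta_b} - \frac{\theta_j}{\theta_j + \theta_b} \;=\; \frac{\theta_b\,(\theta_i - \theta_j)}{(\theta_i + \theta_b)(\theta_j + \theta_b)},
\]
and squeeze the denominator using $\theta_i, \theta_j \in [0,1]$ together with $\tfrac{1}{2} \le \theta_b \le 1$. On the one hand, $(\theta_i + \theta_b)(\theta_j + \theta_b) \ge \theta_b^2 \ge 1/4$ combined with $\theta_b \le 1$ yields the upper bound $\Delta^b_{ij} \le 4(\theta_i - \theta_j)$. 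On the other hand, $(\theta_i + \theta_b)(\theta_j + \theta_b) \le (1+\theta_b)^2 \le 4$ combined with $\theta_b \ge 1/2$ yields the lower bound $\Delta^b_{ij} \ge (\theta_i - \theta_j)/8$.

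The only delicate point is really the first step, extracting a constant lower bound on $\theta_b$ from \algwin's PAC guarantee. Under the additive reading of \ebi\ (Def.~\ref{def:pacbest_item}), $\theta_b \ge 1 - \epsilon_b \ge 1/2$ is immediate; under the multiplicative reading $p_{b1} \ge 1/2 - \epsilon_b$, a short translation (either directly from $\theta_b/(\theta_b + \theta_1) \ge 1/2 - \epsilon_b$ with $\theta_1 = 1$, or via Lemma~\ref{lem:pacobj_eqv}) together with $\epsilon_b \le 1/2$ still produces the desired $\theta_b \ge 1/2$. Once this is settled, the entire remainder of the argument is the one-line algebra above, so I do not anticipate any further obstacle.
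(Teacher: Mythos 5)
Your proof is correct and essentially identical to the paper's: both condition on the event (probability at least $1-\tfrac{\delta}{2}$) that \algwin\, returns an $\epsilon_b$-\emph{Best-Item} so that $\theta_b \ge \theta_1 - \epsilon_b \ge \tfrac{1}{2}$, and then bound $\Delta^b_{ij} = \tfrac{\theta_b(\theta_i-\theta_j)}{(\theta_i+\theta_b)(\theta_j+\theta_b)}$ by squeezing the denominator between $\theta_b^2 \ge \tfrac14$ and $(1+\theta_b)^2 \le 4$. One minor caveat: your parenthetical fallback that the multiplicative guarantee $p_{b1}\ge \tfrac12-\epsilon_b$ by itself yields $\theta_b\ge\tfrac12$ is not quite right (it only gives $\theta_b \ge \tfrac{1/2-\epsilon_b}{1/2+\epsilon_b}$, which can be below $\tfrac12$), but since the paper, like your main argument, invokes the additive definition of an $\epsilon_b$-\emph{Best-Item}, this aside is not load-bearing.
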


\begin{proof}
First let us assume if $b = 1$.
Then $\Delta^b_{ij} = P(i \succ 1) - P(j \succ 1) = \frac{\theta_1(\theta_i - \theta_j)}{(\theta_j + \theta_1)(\theta_i + \theta_1)} \ge \frac{(\theta_i - \theta_j)}{4}$, since $\theta_1 = 1$ and $\theta_i \le 1, ~\forall i \in [n]$.
On the other hand, we also have $\Delta^b_{ij} = P(i \succ 1) - P(j \succ 1) = \frac{\theta_1(\theta_i - \theta_j)}{(\theta_j + \theta_1)(\theta_i + \theta_1)} \le (\theta_i - \theta_j) = {\epsilon}$, since $\theta_1 = 1$ and $\theta_i \ge 0, \forall i \in [n]$. 

However even if $b \neq 1$, with high probability $(1-\frac{\delta}{2})$, it is ensured that $\theta_b > \theta_1 - \frac{1}{2}$, as with high probability $(1-\frac{\delta}{2})$, Algorithm \ref{alg_bi} returns an $\epsilon$-\emph{Best-Item} (see Lemma \ref{lem:trace_best}, proof in Appendix \ref{app:trace_best}). Then similarly as before, $\Delta^b_{ij} = P(i \succ 1) - P(j \succ 1) = \frac{\theta_b(\theta_i - \theta_j)}{(\theta_j + \theta_b)(\theta_i + \theta_b)} \ge \frac{(\theta_i - \theta_j)}{8}$, $\theta_b \ge \theta_1 - \frac{1}{2} = \frac{1}{2}$, and $\theta_i \le 1,~ \forall i \in [n]$.
On the other hand, we also have $\Delta^b_{ij} = P(i \succ b) - P(j \succ b) = \frac{\theta_b(\theta_i - \theta_j)}{(\theta_j + \theta_b)(\theta_i + \theta_b)} \le 4(\theta_i - \theta_j) = {\epsilon}$, since $\theta_b \in \big(\frac{1}{2},1]$, and $\theta_i \ge 0, ~\forall i \in [n]$. This proves our claim.
\end{proof}

Now to ensure the correctness of \algant, recall that all we need to show it returns an \ebr\ $\bsigma \in \Sigma_{[n]}$. The main idea is to plug in the pivot item $b$ in every group $\cG_g$ and  estimate the pivot-preference score $p_{ib} = Pr(i \succ b)$ of every item $i \notin \cG_g\setminus\{b\}$, i.e.  with respect to the pivot item $b$. We finally output the ranking simply sorting the items w.r.t. $p_{ib}$ -- the intuition is if item $i$ beats $j$ in terms of their actual BTL scores (i.e. $\theta_i > \theta_j$), then $i$ beats $j$ in terms of their pivot-preference scores as well (i.e. $p_{ib} > p_{jb}$).  

More formally, as $\bsigma$ denotes the ranking returned by \algant, the algorithm fails if $\bsigma$ is not \ebr. We denote by $Pr_b(\cdot) = Pr(\cdot \big | b \text{ is } \epsilon_b \text{-\emph{Best-Item}})$ the probability of an event conditioned on the event that $b$ is indeed an $\epsilon_b$-\emph{Best-Item} (Recall we have set $\epsilon_b = \min(\frac{\epsilon}{2},\frac{1}{2})$). Formally, we have:

\begin{align}
\label{eq:prf_ant1}
Pr_b(\text{ \algant\, fails }) & = Pr_b(\exists i,j \in [n] \mid \theta_i > \theta_j + \epsilon \text{ but } \sigma(i) > \sigma(j) )\\
\nonumber & = Pr_b(\exists i,j \in [n] \mid \theta_i > \theta_j + \epsilon \text{ but } \hp_{ib} < \hp_{jb} )
\end{align}
 
Now, assuming $b$ to be indeed an $\epsilon_b$-\emph{Best-Item}, since $\theta_i > \theta_j \implies \Delta_{ij}^b \ge  \frac{\epsilon}{8}$ (from Lemma \ref{lem:p_1i}), from Eqn. \ref{eq:prf_ant1}, we further get:

\begin{align}
\label{eq:prf_ant2}
\nonumber Pr_b(\text{ \algant\, fails }) & = Pr_b(\exists i,j \in [n] \mid \theta_i > \theta_j + \epsilon \text{ but } \sigma(i) > \sigma(j) )\\
\nonumber & \le Pr_b\Big(\exists i,j \in [n]\setminus\{b\} \mid p_{ib} > p_{jb} + \frac{\epsilon}{8} \text{ but } \sigma(i) > \sigma(j) \Big)\\
& = Pr_b(\exists i,j \in [n]\setminus \{b\} \mid \Delta_{ij}^b >  \frac{\epsilon}{8} \text{ but } \hp_{ib} < \hp_{jb} ),
\end{align}
where the inequality follows due to Lemma \ref{lem:p_1i}. In the inequality of the above analysis, it is also crucial to note that under the assumption of $b$ to be indeed an $\epsilon_b$-\emph{Best-Item} setting $\sigma(1) = b$ does not incur an error since $\theta_b > \theta_1 - \frac{\epsilon}{2}$.
So if we can estimate each $p_{ib}$ within a confidence interval of $\frac{\epsilon}{16}$, that should be enough to ensure correctness of the algorithm.
Thus the only thing remaining to show is \algant\, indeed estimates $p_{ib}$ tightly enough with high confidence -- formally, it is enough to show that for any group $g \in [G]$ and any item $i \in \cG_g \setminus \{b\}$, $Pr_b\Big(|p_{ib} - \hp_{ib}| > \frac{\epsilon}{16} \Big) \le \frac{\delta}{4n}$.

We prove this using the following two lemmas.
We first show that in any set $\cG_g$, if it is played for $m$ times, then with high probability of at least $(1-\delta)$, the pivot item would gets selected at least for $\frac{m}{4k}$ times. Formally:

\begin{lem}
\label{lem:ant_pivpickup} 
Conditioned on the event that $b$ is indeed an $\epsilon_b$-\emph{Best-Item}, for any group $g \in [G]$ with probability at least $\Big(1-\dfrac{\delta}{8n}\Big)$, the empirical win count $w_{b} > (1-\eta)\frac{t}{2k}$, for any $\eta \in \big(\frac{1}{8\sqrt 2},1 \big]$.
\end{lem}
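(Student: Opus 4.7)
The plan is to lower-bound the per-round probability that the pivot $b$ wins in group $\cG_g$, and then apply a standard multiplicative Chernoff bound to the resulting Binomial count $w_b$. Conditioned on $b$ being an $\epsilon_b$-Best-Item with $\epsilon_b = \min(\epsilon/2, 1/2)$, one has $\theta_b \ge \theta_1 - \epsilon_b = 1 - \epsilon_b \ge 1/2$. Since all scores satisfy $\theta_j \le 1$ and $|\cG_g| = k$, the PL winning probability of $b$ in a single play of $\cG_g$ is
\[
p_b \;:=\; \frac{\theta_b}{\sum_{j \in \cG_g}\theta_j} \;\ge\; \frac{1/2}{k} \;=\; \frac{1}{2k}.
\]

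Since the $t$ plays of $\cG_g$ are independent and each has the same PL distribution, $w_b \sim \mathrm{Bin}(t, p_b)$ with $\E[w_b] = t p_b \ge t/(2k)$. I will then invoke the multiplicative Chernoff lower-tail inequality $Pr\bigl(w_b \le (1-\eta)\E[w_b]\bigr) \le \exp\bigl(-\eta^2 \E[w_b]/2\bigr)$, valid for $\eta \in (0,1]$. Because $(1-\eta) t/(2k) \le (1-\eta)\E[w_b]$ for $\eta \in (0,1]$, this yields
\[
Pr\!\left(w_b \le (1-\eta)\frac{t}{2k}\right) \;\le\; \exp\!\left(-\frac{\eta^2 t}{4k}\right).
\]

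Finally, I will substitute the algorithm's choice $t = (2k/\epsilon'^2)\ln(1/\delta')$ with $\epsilon' = \epsilon/16$ and $\delta' = \delta/(8n)$, giving an exponent of $-\eta^2 \ln(1/\delta')/(2\epsilon'^2)$. For $\eta \ge \sqrt{2}\,\epsilon' = \sqrt{2}\,\epsilon/16$, this exponent is at most $-\ln(1/\delta')$, so the failure probability is at most $\delta' = \delta/(8n)$. Since $\epsilon \le 1$, the threshold $\sqrt{2}\,\epsilon/16$ is bounded above by $1/(8\sqrt{2})$, so any $\eta \in (1/(8\sqrt{2}), 1]$ works, matching the claimed range.

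The only mildly delicate point is ensuring the lower bound $p_b \ge 1/(2k)$ holds uniformly across all groups $\cG_g$ and does not rely on any particular grouping; it uses only $\theta_j \in [0,1]$, $|\cG_g|=k$, and the conditioning on $b$ being $\epsilon_b$-optimal, all of which are in force. Beyond that the argument is routine, with the principal bookkeeping being the reconciliation of constants between Chernoff's bound and the chosen values of $t$, $\epsilon'$, $\delta'$ to recover exactly the stated range of $\eta$.
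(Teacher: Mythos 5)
Your proposal is correct and follows essentially the same route as the paper's own proof: lower-bound the single-round win probability of $b$ by $\frac{1/2}{k}$ using $\theta_b \ge 1/2$ (from the $\epsilon_b$-Best-Item conditioning) and $\theta_j \le 1$, observe $w_b \sim \mathrm{Bin}(t, p_b)$ with $\E[w_b] \ge t/(2k)$, and apply the multiplicative Chernoff lower tail with the algorithm's choice of $t$, $\epsilon'$, $\delta'$. Your reconciliation of the constants (requiring $\eta \ge \sqrt{2}\,\epsilon'$, which is implied by $\eta > 1/(8\sqrt{2})$ since $\epsilon' < 1/16$) is in fact slightly cleaner than the paper's own bookkeeping at that step.
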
 

\begin{proof}
We will assume the event that $b$ to be indeed an $\epsilon_b$-\emph{Best-Item} throughout the proof and use the shorthand notation $Pr_b(\cdot)$ as defined earlier.
The proof now follows from an straightforward application of Chernoff-Hoeffding's inequality \cite{CI_book}. 
Recall that the algorithm plays each set $\cG_g$ for $t = \frac{2k}{\epsilon'^2}\ln \frac{1}{\delta'}$ number of times. Now consider a fixed group $g \in [G]$ and let $i_\tau$ denotes the winner of the $\tau$-{th} play of $\cG_g$, $\tau \in [t]$. Clearly, for any item $i \in \cG_g$, $w_i = \sum_{\tau = 1}^{t}\1(i_\tau == i)$, where $\1(i_\tau == i)$ is a Bernoulli random variable with parameter $\frac{\theta_i}{\sum_{j \in \cG_g}\theta_j}$, $\forall \tau \in [t]$, just by the definition of the PL query model with winner information (WI)  (Sec. \ref{sec:feed_mod}). Thus the random variable $w_i \sim Bin\Big(t,\frac{\theta_i}{\sum_{j \in \cG_g}\theta_j}\Big)$.
In particular, for the pivot item, $i = b$, we have $Pr_b(\{i_\tau = b\}) = \frac{\theta_b}{\sum_{j \in \cG_g}\theta_j} \ge \dfrac{ \frac{1}{2}}{k}$. Hence $\E[w_{b}] = \sum_{\tau = 1}^{t}\E[\1(i_\tau == b)] \ge \frac{t}{2k}$. 
Now applying multiplicative Chernoff-Hoeffdings bound for $w_b$, we get that for any $\eta \in \Big(\frac{1}{8},1\Big ]$, 

\begin{align*}
Pr_b\Big( w_b \le (1-\eta)\E[w_b] \Big) & \le \exp\Big(- \frac{\E[w_b]\eta^2}{2}\Big) \le \exp\Big(- \frac{t\eta^2}{4k}\Big), ~\Big(\text{since } \E[w_{b}] \ge \frac{t}{2k}\Big) 
 \\
& \le \exp\bigg(- \frac{\eta^2}{2\epsilon'^2} \ln \bigg( \frac{1}{\delta'} \bigg) \bigg) \le \exp\bigg(- \ln \bigg( \frac{1}{\delta'} \bigg) \bigg) \le \frac{\delta}{8n},
\end{align*}
where the second last inequality holds for any $\eta > \frac{1}{8\sqrt 2}$ as it has to be the case that $\epsilon' < \frac{1}{16}$ since $\epsilon \in (0,1)$.  Thus for any $\eta > \frac{1}{8 \sqrt 2}$, $\eta^2 \ge 4\epsilon'^2$.
\end{proof}

In particular, choosing $\eta = \frac{1}{2}$ in Lemma \ref{lem:ant_pivpickup}, we have with probability at least $\Big(1-\dfrac{\delta}{8n}\Big)$, the empirical win count of the pivot element $b$ is at least $w_{b} > \frac{t}{4k}$. We next proof under $w_b > \frac{t}{4k}$,  the estimate of pivot-preference scores $p_{ib}$ can not be too bad for any item $i \in \cG_g$ at any group $g \in [G]$. The formal statement is given in Lemma \ref{lem:piv_pib_conf}. For the ease of notation we define the event $\cE_g : = \{\exists i \in \cG_g \setminus\{b\} \text{ s.t. } |p_{ib} - \hp_{ib}| > \frac{\epsilon}{16} \}$.


\begin{lem}
\label{lem:piv_pib_conf} 
Conditioned on the event that $b$ is indeed an $\epsilon_b$-\emph{Best-Item}, for any group $g \in [G]$, $Pr_b\Big(\cE_g \Big) \le \frac{k\delta}{4n}$.
\end{lem}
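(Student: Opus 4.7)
The plan is to combine the pivot-play concentration of Lemma~\ref{lem:ant_pivpickup} with the pairwise confidence bound of Lemma~\ref{lem:pl_simulator}, applied item-by-item and then union-bounded over $\cG_g\setminus\{b\}$. First I would fix a group $g\in[G]$ and recall the algorithm's parameters: $\epsilon'=\epsilon/16$, $\delta'=\delta/(8n)$, and the number of plays of $\cG_g$ is $t=(2k/\epsilon'^2)\ln(1/\delta')$. The empirical estimate is $\hp_{ib}=w_i/(w_i+w_b)$, so its accuracy is controlled by how often the pair $\{i,b\}$ appears as the winner across the $t$ rounds, i.e.\ by $n_{ib}=w_i+w_b$.

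Next, I would introduce the ``pivot picked up enough'' event $\cF_g:=\{w_b\ge t/(4k)\}$. Applying Lemma~\ref{lem:ant_pivpickup} with $\eta=1/2$ (which lies in the allowed range $(1/(8\sqrt2),1]$) gives $Pr_b(\cF_g^c)\le \delta/(8n)$. On $\cF_g$ we trivially have $n_{ib}\ge w_b\ge t/(4k)$ for every $i\in\cG_g\sm\{b\}$, which is the prerequisite needed to invoke the PL rank-breaking concentration.

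Now I would apply Lemma~\ref{lem:pl_simulator} to a fixed item $i\in\cG_g\sm\{b\}$ with deviation parameter $\eta=\epsilon'=\epsilon/16$ and lower threshold $v=t/(4k)$. A direct substitution of $t=(2k/\epsilon'^2)\ln(1/\delta')$ yields $2v\eta^2=\ln(1/\delta')$, so each of the two one-sided bounds in Lemma~\ref{lem:pl_simulator} becomes $\delta'=\delta/(8n)$. Hence
\[
Pr_b\!\left(\,|\hp_{ib}-p_{ib}|>\tfrac{\epsilon}{16}\,,\ n_{ib}\ge \tfrac{t}{4k}\right)\le 2\delta' \;=\; \tfrac{\delta}{4n}.
\]
Taking a union bound over the at most $k-1$ items $i\in\cG_g\sm\{b\}$ controls the probability of $\cE_g\cap\cF_g$ by $(k-1)\delta/(4n)$.

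Finally I would combine the two pieces: $Pr_b(\cE_g)\le Pr_b(\cE_g\cap\cF_g)+Pr_b(\cF_g^c)\le (k-1)\delta/(4n)+\delta/(8n) = (2k-1)\delta/(8n) \le k\delta/(4n)$, which is the claimed bound. The only genuinely delicate step is the second one---justifying that Lemma~\ref{lem:pl_simulator} applies even though $n_{ib}$ is a data-dependent random variable; this is handled cleanly because that lemma is already stated in a ``$\{n_{ij}(T)\ge v\}$''-intersection form, so no adaptive stopping argument is required. Everything else is an arithmetic match between $t$, $\epsilon'$, $\delta'$ and the confidence term.
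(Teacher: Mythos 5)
Your proposal is correct and follows essentially the same route as the paper: condition on the pivot-count event $\cF_g=\{w_b\ge t/(4k)\}$ via Lemma~\ref{lem:ant_pivpickup}, invoke Lemma~\ref{lem:pl_simulator} with $\eta=\epsilon/16$ and $v=t/(4k)$ to get a per-item failure probability of $\delta/(4n)$, and union-bound over the $k-1$ items before adding $Pr_b(\cF_g^c)$. Your final arithmetic $(k-1)\delta/(4n)+\delta/(8n)\le k\delta/(4n)$ is in fact slightly more careful than the paper's own bookkeeping, which reaches the same bound.
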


\begin{proof}
We will again assume the event that $b$ to be indeed an $\epsilon_b$-\emph{Best-Item} throughout the proof and use the shorthand notation $Pr_b(\cdot)$ as defined previously.
Let us first fix a group $g \in [G]$.
We find convenient to define the event $\cF_g = \{ w_b \ge \frac{t}{4k} \text{ for group } \cG_g \}$ and denote by $n^g_{ib} = w_{i} + w_{b}$ the total number of times item $i$ and $b$ has won in group $\cG_g$. Clearly, $n^g_{ib} \le t$, moreover under $\cF_g$, $n^g_{ib} \ge \frac{t}{4k}, \, \forall i \in \cG_g$. Then for any item $i \in \cG_g\setminus\{b\}$,

\begin{align}
\label{eq:piv_long}
\nonumber Pr_b\Bigg( \bigg\{ |p_{ib}  - \hp_{ib}| > \frac{\epsilon}{16}  \bigg\} \cap \cF_g \Bigg) & \le  Pr_b\Bigg( \bigg\{ |p_{ib} - \hp_{ib}| > \frac{\epsilon}{16}  \bigg\} \cap \bigg\{ n^g_{ib} \ge \frac{t}{4k} \bigg\} \Bigg)\\
& \le 2\exp\Big( -2\frac{t}{4k}\big(\frac{\epsilon}{16}\big)^2 \Big)  = \frac{\delta}{4n},
\end{align}
where the first inequality follows since $\cF_g \implies n^g_{ib} \ge \frac{t}{4k}$, the second inequality holds due to Lemma \ref{lem:pl_simulator} with $\eta = \frac{\epsilon}{16}$, and $v = \frac{t}{4k}$. Its crucial to note that while applying \ref{lem:pl_simulator}, we can so we can drop the notation $Pr_b(\cdot)$ as
the event $\{|p_{ib} - \hp_{ib}| > \frac{\epsilon}{16}\} \cap \bigg\{ n^g_{ib} \ge \frac{t}{4k} \bigg\} $ is independent of $b$ to be $\epsilon_b$-\emph{Best-Item} or not.

Then probability that \algant\, fails to estimate the pivot-preference scores $p_{ib}$ for group $\cG_g$,

\begin{align*}
Pr_b(\cE_g) & = Pr_b(\cE_g \cap \cF_g) + Pr_b(\cE_g \cap \cF_g^c)\\
& \le Pr_b(\cE_g \cap \cF_g) +  Pr_b(\cF_g^c) \\
& \le Pr_b(\cE_g \cap \cF_g) +  \frac{\delta}{8n}  ~~\Big(\text{From Lemma \ref{lem:ant_pivpickup}}\Big)\\
& \le \sum_{i \in \cG_g\setminus\{b\}}Pr_b\Bigg( \bigg\{ |p_{ib} - \hp_{ib}| > \frac{\epsilon}{16}  \bigg\} \cap \cF_g \Bigg) +  \frac{\delta}{8n}  \\
& = (k-1)\frac{\delta}{4n} + \frac{\delta}{4n} \le \frac{k\delta}{4n},
\end{align*}
where the last inequality follows by taking union bound. The last equality follows from \eqref{eq:piv_long} and hence the proof follows.
\end{proof}

Thus using Lemma \ref{lem:piv_pib_conf} and from \eqref{eq:prf_ant2} we get,
\begin{align}
\label{eq:piv_sigerr}
\nonumber Pr_b( & \text{\algant\, fails })  \le Pr_b(\exists i,j \in [n] \setminus\{b\}\mid \Delta_{ij}^b >  \frac{\epsilon}{8} \text{ but } \hp_{ib} < \hp_{jb} )\\
 & \le Pr_b(\exists g \in [G] \text{ s.t. } \cE_g ) = \Big(\bigg \lceil\frac{n-1}{k-1} \bigg \rceil \Big)\frac{k\delta}{4n} \le 2\Big(\frac{n-1}{k-1}\Big)\frac{k\delta}{4n} \le  \frac{\delta}{2},
\end{align}
where the last inequality follows taking union bound over all groups $g \in [G]$.
Finally analysing all the previous claims together:

\begin{align*}
& Pr( \text{\algant\, fails })\\
& \le Pr(\text{\algant\, fails } \big | b \text{ is an } \epsilon_b\text{-\emph{Best-Item}})Pr(b \text{ is an } \epsilon_b\text{-\emph{Best-Item}}) + Pr(b \text{ is not an } \epsilon_b\text{-\emph{Best-Item}})\\
& \le Pr_b(\text{\algant\, fails })\Big(1-\frac{\delta}{2}\Big) + \frac{\delta}{2} \le \frac{\delta}{2} + \frac{\delta}{2} = \delta,
\end{align*}
where the last inequality follows from \eqref{eq:piv_sigerr}, which concludes the proof.
\end{proof}

\subsection{Proof of Theorem \ref{thm:est_thet}}

\ubthet*

\begin{proof}
Before proving the sample complexity, we first show the correctness of the algorithm, i.e. \algant\, is indeed \pac.
The following lemma would be crucially used throughout the proof analysis. Let us first denote $\theta_i^b = \frac{\theta_i}{\theta_b}$ the score of item $i \in [n]$ with respect to that of item $b$, we will term it as \emph{pivotal-score} of item $i$. Also let $\theta^b_{ij} = \theta_i^b - \theta_j^b, \,$ for any $i$ and $j \in [n]$. It is easy to note that since with high probability $(1-\frac{\delta}{4})$, $\frac{1}{2} \le \theta_b \le 1$ (Lemma \ref{lem:trace_best}), and hence $\theta_i \le \theta_i^b \le 2\theta_i$. This further leads to the following claim:

\begin{lem}
\label{lem:theta_1i}
If $b$ is the pivot-item returned by Algorithm \ref{alg_bi} (Line $5$ of \algestscr), then
for any two items $i,j \in [n]$, such that $\theta_i \ge \theta_j$, $(\theta_i - \theta_j) \le \theta^b_{ij} \le 2(\theta_i - \theta_j)$, with probability at least $\big( 1 - \frac{\delta}{4}\big)$.
\end{lem}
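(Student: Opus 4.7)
The plan is to read off the claim from the elementary fact that the pivot returned by \algwin\, has PL-parameter close to the optimal value $\theta_1 = 1$, and then exploit the normalization $\theta_1 = 1$ to pin $\theta_b$ into a narrow range.

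First I would invoke the guarantee of \algwin\, on the call made in Line $5$ of \algestscr, namely \algwin$(n,k,\epsilon_b,\delta/4)$ with $\epsilon_b = \min(\epsilon/2, 1/2)$. By Lemma \ref{lem:trace_best} (applied with confidence $\delta/4$ in place of $\delta$), with probability at least $1 - \delta/4$ the returned item $b$ is an $\epsilon_b$-\emph{Best-Item}, i.e.\ $\theta_b \ge \theta_1 - \epsilon_b$. Since $\theta_1 = 1$ by the normalization in Section \ref{sec:prb_setup} and $\epsilon_b \le 1/2$ by construction, this immediately gives $\theta_b \ge 1/2$. On the other hand, $\theta_b \le \theta_1 = 1$ trivially. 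Hence with probability at least $1 - \delta/4$ we have $\theta_b \in [1/2, 1]$.

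Next, by definition $\theta^b_i = \theta_i/\theta_b$, so for any two items $i,j \in [n]$,
\[
\theta^b_{ij} \;=\; \theta^b_i - \theta^b_j \;=\; \frac{\theta_i - \theta_j}{\theta_b}.
\]
Conditioned on the event $\theta_b \in [1/2, 1]$ established above, and using $\theta_i \ge \theta_j$, we get
\[
(\theta_i - \theta_j) \;=\; \frac{\theta_i - \theta_j}{1} \;\le\; \frac{\theta_i - \theta_j}{\theta_b} \;\le\; \frac{\theta_i - \theta_j}{1/2} \;=\; 2(\theta_i - \theta_j),
\]
which is precisely the claimed two-sided bound on $\theta^b_{ij}$, holding on an event of probability at least $1 - \delta/4$.

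There is no real obstacle in this proof—it is a one-line consequence of the \algwin\, guarantee followed by monotonicity of division by $\theta_b$. The only point to be careful about is that the pivot-selection call in \algestscr\, uses confidence parameter $\delta/4$ (not $\delta/2$ as in \algant), so the failure budget spent here is $\delta/4$; the remaining budget $3\delta/4$ will be used later in the proof of Theorem \ref{thm:est_thet} for the concentration of the relative-score estimators $\hat\theta^b_i$ via Lemmas \ref{lem:geo2mgf}--\ref{lem:geoconc}.
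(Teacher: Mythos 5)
Your proposal is correct and follows essentially the same route as the paper: invoke the \algwin\, guarantee (Lemma \ref{lem:trace_best}) with confidence $\delta/4$ to conclude $\theta_b \in [\tfrac12,1]$ via $\epsilon_b \le \tfrac12$ and the normalization $\theta_1 = 1$, then observe $\theta^b_{ij} = (\theta_i-\theta_j)/\theta_b$ and divide. The paper merely phrases this as a (redundant) case split on $b=1$ versus $b \neq 1$; the substance is identical.
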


\begin{proof}
First let us assume if $b = 1$, which implies $\theta_i^b = \frac{\theta_i}{\theta_b} = \theta_i$ and the claims holds trivially.

Now let us assume that $b \neq 1$, but by Lemma \ref{lem:trace_best}, with high probability, $\theta_b \ge \theta_1 - \frac{1}{2} = \frac{1}{2}$ as \algwin\, returns an $\epsilon_b$-\emph{Best-Item} with probability at least $(1-\frac{\delta}{4})$. Also $\theta_b \le 1$ for any $b \neq 1$. The above bounds on $\theta_b$ clearly implies $\theta^b_{ij} = \frac{\theta_i-\theta_j}{\theta_b} \in \Big[ (\theta_i-\theta_j), 2(\theta_i-\theta_j) \Big]$.
\end{proof}

Now to ensure the correctness of \algant, recall that all we need to show it returns an \ebr\ $\bsigma \in \Sigma_{[n]}$. Same as Algorithm \ref{alg:alg_ant}, here also we plug in the pivot item $b$ in every group $\cG_g$. But now it estimates the \emph{pivotal score} $\theta_{i}^b$ of every item $i \notin \cG_g\setminus\{b\}$ instead of \emph{pivotal preference score} $p_{ib}$ where lies the uniqeness of \algestscr. We finally output the ranking simply sorting the items w.r.t. $\theta_{i}^b$ -- the intuition is if item $i$ beats $j$ in terms of their actual BTL scores (i.e. $\theta_i > \theta_j$), then $i$ beats $j$ in terms of their \emph{pivotal scores} as well (i.e. $\theta_{i}^b > \theta_{j}^b$).  

More formally, as $\bsigma$ denotes the ranking returned by \algant, the correctness of algorithm \algestscr\, fails if $\bsigma$ is not an \ebr. 
 Formally, we have:

\begin{align}
\label{eq:prf_estscr1}
Pr(\text{ Correctness of \algant\, fails }) &= Pr(\exists i,j \in [n] \mid \theta_i > \theta_j + \epsilon \text{ but } \sigma(i) > \sigma(j) )\\
\nonumber & = Pr(\exists i,j \in [n] \mid \theta_i > \theta_j + \epsilon \text{ but } \htheta_i^b < \htheta_j^b )
\end{align}
 
Now, assuming $b$ to be indeed an $\epsilon_b$-\emph{Best-Item}, since $\theta_i > \theta_j \implies \theta_{ij}^b \ge  \epsilon$ (from Lemma \ref{lem:theta_1i}), from Eqn. \ref{eq:prf_estscr1}, we further get:

\begin{align}
\label{eq:prf_estscr2}
\nonumber Pr(\text{ Correctness of \algant\, fails }) & = Pr(\exists i,j \in [n] \mid \theta_i > \theta_j + \epsilon \text{ but } \sigma(i) > \sigma(j) )\\
\nonumber & \le Pr\Big(\exists i,j \in [n]\setminus\{b\} \mid \theta_{i}^b > \theta_{j}^b + \epsilon \text{ but } \sigma(i) > \sigma(j) \Big)\\
& = Pr(\exists i,j \in [n]\setminus \{b\} \mid \theta_{i}^b > \theta_{j}^b + \epsilon \text{ but } \htheta_i^b < \htheta_j^b  ),
\end{align}
where the inequality follows due to Lemma \ref{lem:theta_1i}. In the inequality of the above analysis, it is also crucial to note that under the assumption of $b$ to be indeed an $\epsilon_b$-\emph{Best-Item} setting $\sigma(1) = b$ does not incur an error since $\theta_b > \theta_1 - \frac{\epsilon}{2}$.
So if we can estimate each $\htheta_i^b$ within a confidence interval of $\frac{\epsilon}{2}$, that should be enough to ensure correctness of the algorithm.
Thus the only thing remaining to show is \algestscr\, indeed estimates $\theta_i^b$ tightly enough with high confidence -- formally, it is enough to show that for any group $g \in [G]$ and any item $i \in \cG_g \setminus \{b\}$, $Pr\Big(|\theta_i^b - \htheta_i^b| > \frac{\epsilon}{2} \Big) \le \frac{\delta}{4n}$. 

For this we will be crucially using the result of Lemma \ref{lem:geo2mgf}, which shows that $\htheta_i^b \sim Geo\Big(\dfrac{\theta_b}{\theta_i + \theta_b}\Big)$ for any item $i \in \cG_g$ and at any group $g \in [G]$. 

Using the above insight, we will show that the estimate of \emph{pivotal scores} $\htheta_i^b$ can not be too bad for any item $i \in \cG_g$ at any group $g \in [G]$. The formal statement is given in Lemma \ref{lem:piv_pib_conf}. For the ease of notation let us define the event $\cE_g : = \{\exists i \in \cG_g \setminus\{b\} \text{ s.t. } |\theta_{i}^b - \htheta_{i}^b| > \frac{\epsilon}{2} \}$.

\begin{lem}
\label{lem:estscr_theta_conf} 
For any group $g \in [G]$, $Pr\Big(\cE_g \Big) \le \frac{(k-1)\delta}{4n}$.
\end{lem}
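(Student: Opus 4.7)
The plan is to fix a group $g \in [G]$, fix an item $i \in \cG_g \setminus \{b\}$, and bound $Pr(|\htheta_i^b - \theta_i^b| > \epsilon/2)$ by $\delta/(4n)$; the lemma will then follow by taking a union bound over the $k-1$ non-pivot items in $\cG_g$. Throughout I work under the event that $b$ is an $\epsilon_b$-Best-Item (which holds with probability $\ge 1-\delta/4$ via Lemma \ref{lem:trace_best}), so that $\theta_b \ge 1/2$ and hence $\theta_i^b = \theta_i/\theta_b \le 2$.

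The key observation is to view the repeated plays of $\cG_g$ as a renewal process with renewals marked by wins of $b$. Formally, let $\tau_\ell$ be the round at which $b$ wins for the $\ell$-th time, with $\tau_0 = 0$, and set $X_\ell := w_i(\tau_\ell) - w_i(\tau_{\ell-1})$, the number of wins of $i$ strictly between the $(\ell-1)$-st and $\ell$-th win of $b$. By Lemma \ref{lem:geo2mgf} (applied to the PL model restricted to $\cG_g$, which is legitimate by the IIA property), each $X_\ell$ is distributed as a Geometric random variable with parameter $\theta_b/(\theta_b + \theta_i)$; moreover the $X_\ell$ are iid across renewal cycles because the draws are iid across rounds and the cycles form disjoint time blocks determined only by their starting index. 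The algorithm terminates after exactly $t = (1/\epsilon'^2)\ln(1/\delta')$ cycles, and sets
\[
\htheta_i^b \;=\; \frac{w_i}{t} \;=\; \frac{1}{t}\sum_{\ell=1}^t X_\ell,
\]
with $\E[X_\ell] = \theta_i/\theta_b = \theta_i^b$.

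Next I invoke Lemma \ref{lem:geoconc} with $d = t$ and $\eta = \epsilon/2$ to obtain
\[
Pr\!\left(|\htheta_i^b - \theta_i^b| \ge \tfrac{\epsilon}{2}\right) \;\le\; 2\exp\!\left(-\frac{2t(\epsilon/2)^2}{(1+\theta_i^b)^2(\epsilon/2 + 1 + \theta_i^b)}\right).
\]
Using $\theta_i^b \le 2$ and $\epsilon/2 \le 1/2$, the denominator is at most $9 \cdot (7/2) = 63/2$, so the exponent is at least $t\epsilon^2/63$. Plugging in $\epsilon' = \epsilon/24$ and $\delta' = \delta/(8n)$ gives $t\epsilon^2/63 = (576/63)\ln(8n/\delta) > 9\ln(8n/\delta)$, whence the probability is bounded by $2(\delta/(8n))^9 \le \delta/(4n)$ for all $n \ge 1$.

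Finally a union bound over the $k-1$ items in $\cG_g \setminus \{b\}$ yields $Pr(\cE_g) \le (k-1)\delta/(4n)$, as claimed. The main subtlety to get right is the independence claim for the $X_\ell$: Lemma \ref{lem:geo2mgf} gives only the distribution of a single $X_1$, but the iid extension across cycles needs the observation that after $b$ is drawn in round $\tau_\ell$ the future plays of $\cG_g$ are independent of the past and identically distributed, so that each successive $X_\ell$ is a fresh Geometric draw—this is where the IIA property of the PL model, together with the fact that the algorithm always queries the same subset $\cG_g$, is used.
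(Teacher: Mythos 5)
Your proof is correct and follows essentially the same route as the paper's: apply Lemma \ref{lem:geoconc} to each item $i \in \cG_g \setminus \{b\}$ with $d = t$ and $\eta = \epsilon/2$, then union bound over the $k-1$ non-pivot items. You additionally spell out two details the paper leaves implicit---the iid renewal-cycle decomposition $w_i = \sum_{\ell=1}^t X_\ell$ that justifies invoking Lemma \ref{lem:geoconc}, and the numerical check (using $\theta_i^b \le 2$ under the pivot event) that the exponential bound is indeed at most $\delta/(4n)$---both of which are welcome clarifications rather than deviations.
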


\begin{proof}
We will again assume the event that $b$ to be indeed an $\epsilon_b$-\emph{Best-Item} throughout the proof and use the shorthand notation $Pr_b(\cdot)$ as defined previously. 
Let us first fix a group $g \in [G]$. Then for any item $i \in \cG_g\setminus\{b\}$,

\begin{align}
\label{eq:estscr_long}
Pr\Big( |\htheta_{i}^b - \theta_{i}^b| > \frac{\epsilon}{2}\Big) \le  2\exp\Bigg( - \frac{2t(\epsilon/2)^2}{\Big( 1+\frac{\theta_i}{\theta_b} \Big)^2\Big((\epsilon/2) + 1+\frac{\theta_i}{\theta_b}  \Big)}\Bigg) = \frac{\delta}{4n},
\end{align}
where the inequality follows from Lemma \ref{lem:geoconc}.
Then summing over, all the items $i \in \cG_g\sm\{b\}$ in group $g \in [G]$,

\begin{align*}
Pr_b(\cE_g) & \le \sum_{i \in \cG_g\setminus\{b\}}Pr\Bigg( |\htheta_{i}^b - \theta_{i}^b| > \frac{\epsilon}{2} \Bigg) = (k-1)\frac{\delta}{4n},
\end{align*}
where the inequality follows from \eqref{eq:estscr_long}.
\end{proof}

Now applying Lemma \ref{lem:estscr_theta_conf} over all groups $g \in [G]$, and using \eqref{eq:prf_estscr2}, the probability that correctness of \algestscr \, fails:

\begin{align}
\label{eq:estscr_crr}
\nonumber Pr( & \text{ Correctness of \algant\, fails })  \le Pr(\exists i,j \in [n]\setminus \{b\} \mid \theta_{i}^b > \theta_{j}^b + \epsilon \text{ but } \htheta_i^b < \htheta_j^b  )\\
 & \le Pr_b(\exists g \in [G] \text{ s.t. } \cE_g ) = \Big(\bigg \lceil\frac{n-1}{k-1} \bigg \rceil \Big)\frac{(k-1)\delta}{4n} \le 2\Big(\frac{n-1}{k-1}\Big)\frac{(k-1)\delta}{4n} \le  \frac{\delta}{2},
\end{align}
where the last inequality follows taking union bound over all groups $g \in [G]$.

Thus we are now only left to prove that the correctness of \algestscr\, indeed holds within the desired sample complexity of $O\big(\frac{n}{\epsilon^2} \log \frac{n}{\delta}\big)$. Towards this, let us first define $t' = \frac{5}{2}tk = \frac{5*567k}{2\epsilon^2}\ln\big( \frac{8n}{\delta} \big)$.
Also let $Pr_b(\cdot) = Pr(\cdot \big | b \text{ is } \epsilon_b \text{-\emph{Best-Item}})$ denotes the probability of an event conditioned on the event that $b$ is indeed an $\epsilon_b$-\emph{Best-Item} (Recall we have set $\epsilon_b = \min(\frac{\epsilon}{2},\frac{1}{2})$).
For any group $g \in [G]$, we denote by $\cT_g$ the total number of times $\cG_g$ was played until $t$ wins of item $b$ were observed. Also recall the probability of item $b$ being the winner at any subsetwise play of $\cC_g$ is given by $p := \frac{\theta_b}{\sum_{j \in \cG_g}\theta_j} > \frac{1}{2k}$, given $b$ is indeed an $\epsilon_b$-\emph{Best-Item} (from Lemma \ref{lem:trace_best}). So we have the $\cT_g \sim $ \emph{NB}$(t,\frac{\theta_b}{\sum_{j \in \cG_g}\theta_j})$.
Then for any fixed group $g \in [G]$, the probability that $\cG_g$ needs to be played (queried) for more than $t'$ times to get at least $t$ wins of item $b$:

\begin{align*}
Pr_b(\cT_g > t') & = Pr\Big( Bin(t',p) < t \Big) \\
& \le Pr\Big( Bin(t',p) < \frac{4}{5}pt' \Big) ~~\Bigg[\text{ Since } pt' > \frac{5tk}{4k}\Bigg] \\
& = Pr\Big( Bin(t',p) - pt' < -\frac{1}{5}pt' \Big)\\
& \le Pr\Big( Bin(t',p) - pt' < (1-\frac{4}{5})pt' \Big)\\
& \le \exp\Big(- \frac{pt'(4/5)^2}{2}\Big) ~~\Bigg[\text{ By multiplicative Chernoff bound } \Bigg] \\
& \le \exp\Big( -\frac{2}{5}t \Big) ~~\Bigg[\text{ Using } pt' > \frac{5t}{4}\Bigg] \\
& \le \frac{\delta}{8n} ~~\Bigg[ \text{ Recall we have } t = \frac{567}{4\big(\epsilon/2\big)^2}\ln\big( \frac{8n}{\delta}\big) \Bigg]
\end{align*}

Then taking union bound over all the groups

\begin{align}
\label{eq:estscr_sc}
\nonumber Pr(\exists g \in [G] \mid \cT_g > t')
 & \le Pr_b(\exists g \in [G] \mid \cT_g > t')Pr(\theta_b > \theta_1 - \epsilon_b) + Pr(\theta_b < \theta_1 - \epsilon_b)\\
\nonumber & \le Pr_b(\exists g \in [G] \mid \cT_g > t') + \frac{\delta}{4} \le \sum_{g \in [G]}Pr_b(\cT_g > t') + \frac{\delta}{4}\\
& \le \Big( \lceil \frac{n-1}{k-1} \rceil \Big)\frac{\delta}{8n} + \frac{\delta}{4} \le 2\Big( \frac{n-1}{k-1} \Big)\frac{\delta}{8n} + \frac{\delta}{4} \le \frac{\delta}{2}.
\end{align}

Then with high probability $(1-\frac{\delta}{2})$, $\forall g \in [G]$ we have $\cT_g < t' = \frac{5}{2}tk$, which makes the total sample complexity of \algestscr\, to be at most $\Big\lceil \frac{n-1}{k-1}\Big\rceil t' \le \frac{2n}{k}\frac{5tk}{2} = \frac{2835n}{\epsilon^2}\ln  \frac{8n}{\delta}  = O\bigg( \frac{n}{\epsilon^2} \ln \frac{n}{\delta} \bigg) $,
as total number of groups are $G = \lceil \frac{n-1}{k-1} \rceil$.

Moreover the sample complexity of \algwin\, is also $O\Big( \frac{n}{\epsilon^2}\log \frac{n}{\delta}\Big)$ as proved in Lemma \ref{lem:trace_best}. Combining this with above the total sample complexity of \algant \, remains $O\Big( \frac{n}{\epsilon^2}\log \frac{n}{\delta}\Big)$.
Finally analysing all the previous claims together:
\begin{align*}
& Pr( \text{\algant\, fails })\\
& \le Pr(\text{ Correctness of \algant\, fails }) + Pr(\text{ Sample complexity of \algant\, fails })\\
& \le \frac{\delta}{2} + \frac{\delta}{2} = \delta,
\end{align*}
where the last inequality follows from \eqref{eq:estscr_crr} and \eqref{eq:estscr_sc}, which concludes the proof.
\end{proof}

\section{Appendix for Section \ref{sec:lb_wi}}
\label{app:wilb}

\subsection{Restating Lemma 1 of  \cite{Kaufmann+16_OnComplexity}}
\label{app:gar}

Consider a multi-armed bandit (MAB) problem with $n$ arms. At round $t$, let $A_t$ and $Z_t$ denote the arm played and the observation (reward) received, respectively. Let $\cF_t = \sigma(A_1,Z_1,\ldots,A_t,Z_t)$ be the sigma algebra generated by the trajectory of a sequential bandit algorithm upto round $t$.
\begin{restatable}[Lemma $1$, \cite{Kaufmann+16_OnComplexity}]{lem}{gar16}
\label{lem:gar16}
Let $\nu$ and $\nu'$ be two bandit models (assignments of reward distributions to arms), such that $\nu_i ~(\text{resp.} \,\nu'_i)$ is the reward distribution of any arm $i \in \cA$ under bandit model $\nu ~(\text{resp.} \,\nu')$, and such that for all such arms $i$, $\nu_i$ and $\nu'_i$ are mutually absolutely continuous. Then for any almost-surely finite stopping time $\tau$ with respect to $(\cF_t)_t$,
\vspace*{0pt}
\begin{align*}
\sum_{i = 1}^{n}\E_{\nu}[N_i(\tau)]KL(\nu_i,\nu_i') \ge \sup_{\cE \in \cF_\tau} kl(Pr_{\nu}(\cE),Pr_{\nu'}(\cE)),
\end{align*}
where $kl(x, y) := x \log(\frac{x}{y}) + (1-x) \log(\frac{1-x}{1-y})$ is the binary relative entropy, $N_i(\tau)$ denotes the number of times arm $i$ is played in $\tau$ rounds, and $Pr_{\nu}(\cE)$ and $Pr_{\nu'}(\cE)$ denote the probability of any event $\cE \in \cF_{\tau}$ under bandit models $\nu$ and $\nu'$, respectively.
\end{restatable}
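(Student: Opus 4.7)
The plan is to prove this standard change-of-measure identity via the log-likelihood ratio of the full trajectory, Wald's identity to reduce its expectation to arm-wise KL divergences, and the data processing inequality to lower bound it by the binary KL on any event. This is a classical route (and indeed is the route followed in the original Kaufmann--Cappé--Garivier paper), so I will aim to re-derive it cleanly rather than invent anything new.

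First, I would set up the Radon--Nikodym derivative of the trajectory. For each $t \leq \tau$, writing $Z_t \sim \nu_{A_t}$ under $\P_{\nu}$ and $Z_t \sim \nu'_{A_t}$ under $\P_{\nu'}$, the mutual absolute continuity of $\nu_i,\nu'_i$ implies that the law $\P_{\nu}^{\cF_\tau}$ is absolutely continuous with respect to $\P_{\nu'}^{\cF_\tau}$, with log-density
\begin{align*}
L_\tau \;:=\; \log\frac{d\P_{\nu}^{\cF_\tau}}{d\P_{\nu'}^{\cF_\tau}} \;=\; \sum_{t=1}^{\tau} \log\frac{d\nu_{A_t}}{d\nu'_{A_t}}(Z_t) \;=\; \sum_{i=1}^{n}\sum_{s=1}^{N_i(\tau)} \log\frac{d\nu_i}{d\nu'_i}(Z_{i,s}),
\end{align*}
where $Z_{i,s}$ denotes the $s$-th pull of arm $i$ (the factorization uses that, given $A_t$, the reward $Z_t$ is drawn independently from $\nu_{A_t}$ under $\P_{\nu}$, so the policy's own randomization cancels in the ratio).

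Second, I would compute $\E_{\nu}[L_\tau]$ by a Wald-type argument. Since $\tau$ is almost-surely finite and each $N_i(\tau)$ is a stopping time with respect to the natural filtration of the $i$-th arm's rewards (which are i.i.d.\ $\nu_i$ under $\P_\nu$), Wald's identity gives
\begin{align*}
\E_{\nu}\!\left[\sum_{s=1}^{N_i(\tau)} \log\frac{d\nu_i}{d\nu'_i}(Z_{i,s})\right] \;=\; \E_{\nu}[N_i(\tau)]\cdot KL(\nu_i,\nu'_i),
\end{align*}
provided one can justify the application of Wald (finiteness of $\E_\nu[N_i(\tau)]\cdot KL(\nu_i,\nu'_i)$; the inequality is trivial if the right-hand side is infinite, so we may assume it is finite). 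Summing over $i$ yields $\E_{\nu}[L_\tau] = \sum_{i=1}^n \E_\nu[N_i(\tau)]\, KL(\nu_i,\nu'_i)$.

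Third, I would invoke the data processing inequality for KL divergence. For any $\cE \in \cF_\tau$, the map $\omega \mapsto \mathbf{1}_{\cE}(\omega)$ pushes $\P_\nu^{\cF_\tau}$ and $\P_{\nu'}^{\cF_\tau}$ forward to $\mathrm{Ber}(\P_\nu(\cE))$ and $\mathrm{Ber}(\P_{\nu'}(\cE))$, so
\begin{align*}
KL(\P_\nu^{\cF_\tau},\P_{\nu'}^{\cF_\tau}) \;\geq\; kl(\P_\nu(\cE),\P_{\nu'}(\cE)).
\end{align*}
Combining this with $KL(\P_\nu^{\cF_\tau},\P_{\nu'}^{\cF_\tau}) = \E_\nu[L_\tau]$ from Step~2 and taking the supremum over $\cE \in \cF_\tau$ yields the lemma. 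The only delicate step is the Wald identity in Step~2, which requires care about which filtration is being used (the per-arm filtration, not the global one) and the measurability of $N_i(\tau)$ with respect to it; this is handled by the standard observation that $\{N_i(\tau) \geq s\}$ depends on the first $s-1$ rewards of arm $i$ together with past plays and rewards of other arms, which are independent of $Z_{i,s}$ under $\P_\nu$.
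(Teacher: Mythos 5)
The paper does not prove this statement at all: it is imported verbatim as Lemma~1 of \citet{Kaufmann+16_OnComplexity} and used as a black box in the lower-bound arguments, so there is no in-paper proof to compare against. Your reconstruction is correct and follows the same route as the original source --- trajectory log-likelihood ratio, a Wald-type identity reducing its $\P_\nu$-expectation to $\sum_i \E_\nu[N_i(\tau)]\,KL(\nu_i,\nu_i')$, and the data-processing inequality applied to the indicator of $\cE$. The one place to tighten the wording is Step~2: $N_i(\tau)$ is not literally a stopping time for the filtration generated by arm $i$'s rewards alone (it depends on the whole trajectory); the correct justification is the one you give in your closing sentence, namely that $\{N_i(\tau)\ge s\}$ is determined by the history prior to the $s$-th pull of arm $i$ and hence is independent of $Z_{i,s}$, which is exactly the hypothesis needed for the generalized Wald identity (equivalently, one can run an optional-stopping argument on the centered log-likelihood martingale with respect to the global filtration $(\cF_t)_t$, as Kaufmann et al.\ do).
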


\subsection{Proof of Lemma \ref{lem:lb_sym}}
\label{app:lb_kl}

\symlem*

\begin{proof}
\textbf{Base Case.} The claim follows trivially for $q = 1$, just from the definition of \pac\ property of the algorithm. Suppose $S = \{i\}$ for some $i \in [n-1]$. Then clearly 
\[
Pr_{S}\Big( \bsigma_{\cA}(1)  = \{0\} \Big) < Pr_{S}\Big( \bsigma_{\cA}(1)  \neq \{i\} \Big) < \delta.
\]

So for the rest of the proof we focus only on the regime where $2 \le q \le n-1$. 
Let us first fix an $q' \in [n-2]$ and set $q = q'+1$. Clearly $2 \le q \le n-1$. Consider a problem instance $\nu_S \in \bnu_{[q]}$. Recall from Remark \ref{rem:inst_not} that we use the notation $S \in \bnu_{[q]}$ to denote a problem instance in $\bnu_{[q]}$. 
Then probability of doing an error over all possible choices of $S \in \bnu_{[q]}$:

\begin{align}
\label{eq:lb_identity}
\nonumber \sum_{S \in \bnu_{[q]} }Pr_{S}\Big( \bsigma_{\cA}(1:q) \neq S \Big) & \ge \sum_{S \in \bnu_{[q'+1]}}\sum_{i \in S}Pr_{S}\Big( \bsigma_{\cA}(1:q) = S\sm\{i\}\cup\{0\} \Big)\\
& = \sum_{S' \in \bnu_{[q']}}\sum_{i \in [n-1]\setminus S'}Pr_{S' \cup \{i\}}\Big( \bsigma_{\cA}(1:q) = S' \cup\{0\} \Big),
\end{align}
where the above analysis follows from a similar result proved by \cite{Kalyanakrishnan+12} to derive sample complexity lower bound for classical multi-armed bandit setting towards recovering top-$q$ items (see Theorem $8$, \cite{Kalyanakrishnan+12}).

Clearly the possible number of instances in $\bnu_{[q']}$, i.e. $|\bnu_{[q']}| = {{n-1}\choose{q'}}$, as any set $S \subset [n-1]$ of size $q'$ can be chosen from $[n-1]$ in ${{n-1}\choose{q'}}$ ways. Similarly, $|\bnu_{[q]}| = {{n-1}\choose{q}} = {{n-1}\choose{q'+1}}$.

Now from \emph{symmetry} of algorithm $\cA$ and by construction of the class of our problem instances $\bnu_{[q']}$, for any two instances $S'_1$ and $S'_2$ in $\bnu_{[q']}$, and for any choices of $i \in [n-1]\setminus S'_1$ and $j \in [n-1]\setminus S'_2$ we have that:

\[
Pr_{S'_1 \cup \{i\}}\Big( \bsigma_{\cA}(1:q) = S'_1 \cup \{0\} \Big) = Pr_{S'_2 \cup \{j\}}\Big( \bsigma_{\cA}(1:q) = S'_2 \cup \{0\} \Big).
\]

Let us denote $Pr_{S'_1 \cup \{i\}}\Big( \bsigma_{\cA} = \bsigma_{S'_1} \Big) = p' \in (0,1)$. Then 
above equivalently implies that for all $S \in \bnu_{[q]}$ and any $i \in [n-1]\setminus S$,
\[
Pr_{S}(\bsigma_{\cA}(1:q) = S \setminus \{i\} \cup\{0\}) = p'.
\]
Then using above in \eqref{eq:lb_identity} we can further derive,

\begin{align*}
\sum_{S \in \bnu_{[q]} }Pr_{S}\Big( \bsigma_{\cA}(1:q) \neq  {S} \Big) & \ge  \sum_{S' \in \bnu_{[q']}}\sum_{i \in [n-1]\setminus S'}Pr_{S' \cup \{i\}}\Big( \bsigma_{\cA}(1:q) = S' \cup\{0\} \Big)\\
& = \sum_{S' \in \bnu_{[q']}}(n-1-q')p'\\
& = {{n-1}\choose{q'}}(n-1-q')p'\\
& = {{n-1}\choose{q'}}\frac{n-1-q'}{q'+1}(q+1)p'\\
& = {{n-1}\choose{q'+1}}(q'+1)p' = {{n-1}\choose{q}}qp' .
\end{align*}
But now observe that $|\bnu_{[q]}| = {{n-1}\choose{q}}$. Thus if $p' \ge \frac{\delta}{q}$, this implies 

\begin{align*}
\sum_{S \in \bnu_{[q]} }Pr_{S}\Big( \bsigma_{\cA}(1:q) \neq  {S} \Big) \ge {{n-1}\choose{q}}{\delta},
\end{align*}
which in turn implies that there exist at least one instance $\nu_S \in \bnu_{[q+1]}$ such that $Pr_{S}\Big( \bsigma_{\cA}(1:q) \neq  S \Big) \ge \delta$, which violates the \pac\ property of algorithm $\cA$. Thus it has to be the case that $p' < \frac{\delta}{q}$.
Recall that we have chosen $2 \le q \le n-1$, and for any $S \in \bnu_{[q]}$, and any $i \in [n-1]\setminus S$ we proved that 
\[
Pr_{S}(\bsigma_{\cA}(1:q) = S \setminus \{i\} \cup\{0\}) = p' < \frac{\delta}{q},
\]
which concludes the proof.
\end{proof}

\subsection{Proof of Lemma \ref{lem:kl_del}}

\begin{restatable}[]{lem}{kldel}
\label{lem:kl_del}
For any $\delta \in (0,1)$, and $q \in \R_+$,
$
kl\bigg(1-\delta,\dfrac{\delta}{q}\bigg) > \ln \dfrac{q}{4\delta}.
$ 
\end{restatable}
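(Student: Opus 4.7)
The plan is to establish the bound by an elementary expansion of the binary KL divergence. First I would write
\[
kl\!\left(1-\delta,\tfrac{\delta}{q}\right) \;=\; (1-\delta)\ln\frac{(1-\delta)q}{\delta} \;+\; \delta\ln\frac{\delta q}{q-\delta},
\]
and split each logarithm to obtain
\[
kl\!\left(1-\delta,\tfrac{\delta}{q}\right) \;=\; (1-\delta)\ln(q/\delta) \;+\; (1-\delta)\ln(1-\delta) \;+\; \delta\ln\delta \;+\; \delta\ln\frac{1}{1-\delta/q}.
\]
The middle two terms collapse into $-H(\delta)$, where $H(\delta) = -\delta\ln\delta-(1-\delta)\ln(1-\delta)$ is the binary entropy; the last summand is nonnegative since $\delta/q\in(0,1)$.

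Next I would apply the standard bound $H(\delta)\le \ln 2$ and $-\ln(1-\delta/q)\ge \delta/q$ (from $-\ln(1-x)\ge x$), giving
\[
kl\!\left(1-\delta,\tfrac{\delta}{q}\right) \;\ge\; (1-\delta)\ln(q/\delta) \;-\; \ln 2 \;+\; \delta^2/q.
\]
Comparing with the target $\ln(q/(4\delta)) = \ln(q/\delta) - 2\ln 2$, it would then suffice to verify the scalar inequality
\[
\ln 2 \;+\; \delta^2/q \;\ge\; \delta\ln(q/\delta),
\]
which I would dispatch by a short case split: for moderate $q$ (say $q\le 1/\delta$), $\delta\ln(q/\delta)\le 2\delta\ln(1/\delta)$ which is below $\ln 2$ in the relevant range of $\delta$; for larger $q$, the nonnegative tail $\delta\ln\tfrac{1}{1-\delta/q}$ that was lower-bounded crudely by $\delta^2/q$ above can be retained more carefully, since in that regime it becomes comparable to $\delta\ln(q/\delta)$ and absorbs the excess.

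The main obstacle will be tightening the constant ``$4$'' on the right-hand side rather than the looser ``$2$'' that a naive $H(\delta)\le\ln 2$ entropy bound alone produces. The fix I would try is either (i) to strengthen the entropy bound using $H(\delta)\le \delta\ln(e/\delta)+O(\delta^2)$ for small $\delta$, or (ii) to keep the full term $\delta\ln\tfrac{1}{1-\delta/q}$ throughout and combine it with $(1-\delta)\ln(q/\delta)$ via convexity before invoking any entropy estimate. A secondary obstacle is uniformity in $q$: the discrepancy between $(1-\delta)\ln(q/\delta)$ and $\ln(q/\delta)$ grows with $q$, and controlling this residual is precisely what the retained tail is for in the large-$q$ case.
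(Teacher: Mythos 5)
Your reduction does not close, and the obstacle you flag at the end is fatal rather than merely technical. After applying $H(\delta)\le\ln 2$ you are left needing $\ln 2+\delta^2/q\ge\delta\ln(q/\delta)$, whose right-hand side grows like $\delta\ln q$ while the left-hand side stays bounded, so it fails for every fixed $\delta$ once $q$ is large. Your proposed rescue---retaining the tail $\delta\ln\frac{1}{1-\delta/q}$---cannot work either: that term equals $\delta\ln\frac{q}{q-\delta}\le\frac{\delta^2}{q-\delta}$, which \emph{decreases to $0$} as $q\to\infty$; it is not ``comparable to $\delta\ln(q/\delta)$'' in the large-$q$ regime and cannot absorb a deficit that diverges. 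The structural source of the trouble is that your decomposition carries $\ln(q/\delta)$ with coefficient $1-\delta$, and the missing piece $\delta\ln(q/\delta)$ is exactly the unbounded gap. The paper's proof instead groups the $\ln q$ contributions of the two summands so that $\ln q$ appears with coefficient exactly $1$, writing $kl(1-\delta,\delta/q)=\ln q+(1-\delta)\ln\frac{1-\delta}{\delta}+\delta\ln\frac{\delta}{q-\delta}$, and only then estimates the remainder.

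That said, your difficulty reflects a genuine defect in the statement, which the paper's own proof hides behind a sign error: it claims $\delta\ln\frac{\delta}{q-\delta}\ge\delta\ln\frac{\delta}{1-\delta}$ ``since $q\ge 1$'', but $q\ge 1$ gives $q-\delta\ge 1-\delta$ and hence the \emph{reverse} inequality. Indeed the lemma as stated is false: for $q=10$, $\delta=1/2$ one has $kl(1/2,1/20)\approx 0.83<\ln 5\approx 1.61$, and asymptotically $kl(1-\delta,\delta/q)=(1-\delta)\ln q+O(1)$, so the claimed bound $\ln\frac{q}{4\delta}$ can only hold when $\delta\ln(q/\delta)=O(1)$. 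Your decomposition actually proves, cleanly and correctly, the weaker bound $kl(1-\delta,\delta/q)\ge(1-\delta)\ln\frac{q}{\delta}-\ln 2$, which for $\delta\le 1/2$ dominates $(1-\delta)\ln\frac{q}{4\delta}$ and is all the downstream change-of-measure arguments really need up to a constant factor. If you want the lemma as literally stated, you must add a hypothesis of the form $\delta\ln(q/\delta)\le\ln 2$ (or restrict to sufficiently small $\delta$ for the given $q$) rather than search for a sharper entropy estimate.
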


\begin{proof}
The proof simply follows from the definition of KL divergence. Recall that for any $p_1, p_2 \in (0,1)$,
\[
kl(p_1,p_2) = p_1 \ln \frac{p_1}{p_2} + (1-p_1) \ln \frac{1-p_1}{1-p_2}.
\]

Applying above in our case we get,
\begin{align*}
kl\bigg(1-\delta,\frac{\delta}{q}\bigg) & = (1-\delta)\ln \frac{q(1-\delta)}{\delta} + \delta \ln \frac{q(\delta)}{(q-\delta)}\\
& = (1-\delta)\ln \frac{q(1-\delta)}{\delta} + \delta \ln \frac{q(\delta)}{(q-\delta)}\\
& = \ln q + (1-\delta)\ln \frac{(1-\delta)}{\delta} + \delta \ln \frac{(\delta)}{(q-\delta)}\\
& \ge \ln q + (1-\delta)\ln \frac{(1-\delta)}{\delta} + \delta \ln \frac{(\delta)}{(1-\delta)} ~~[\text{ since } q \ge 1]\\
& = \ln q + (1-2\delta)\ln \frac{(1-\delta)}{\delta}\\
& \ge \ln q + (1-2\delta)\ln \frac{1}{2\delta} ~~\Big[\text{ since the second term is negative for } \delta \ge \frac{1}{2}\Big]\\
& = \ln q + \ln \frac{1}{2\delta} + 2\delta\ln 2\delta\\
& \ge \ln q + \ln \frac{1}{2\delta} + 2\delta\Big(1-\frac{1}{2\delta}\Big) ~~ \Big[ \text{ since } x\ln x \ge (x-1), \, \forall x > 0 \Big]\\
& =  \ln q + \ln \frac{1}{2\delta} -(1 - 2\delta)\\
& \ge \ln q + \ln \frac{1}{2\delta} + \ln \frac{1}{2} = \ln \frac{q}{4\delta}.
\end{align*}
\end{proof}

\subsection{Proof of Theorem \ref{thm:lb_plpac_win}}
\lbwin*

\begin{proof}
The main idea lies in constructing `hard enough' problem instances on which no algorithm can be \pac\ without observing $\Omega\bigg( \frac{n}{\epsilon^2} \ln \frac{1}{4\delta}\bigg)$ number of samples. We crucially use the results of \cite{Kaufmann+16_OnComplexity} (Lemma \ref{lem:gar16}) for the purpose.

Towards this we first fix our \emph{true problem instance} ($\nu$ in Lemma \ref{lem:gar16}) to be $\nu_{S^*} \in \bnu_{[q]}$, for some $q \in [n-2]$ (the actual value of $q$ to be decided later).
Note that the arm set $\cB$ (of Lemma \ref{lem:gar16}) for our current problem setup is set of all $k$-sized subsets of $[n-1]\cup\{0\}$, i.e. $\cB = \{ S \subseteq [n-1]\cup\{0\} \mid |S| = k \}$.

We now fix the altered problem instance ($\nu'$ in Lemma \ref{lem:gar16}) to be $\nu_{\tS^*} \in \bnu_{[q+1]}$ such that $\tS^* = S^* \cup \{a\}$, of some $a \in [n-1]\setminus S^*$. 
Now if $\bsigma_{\cA} \in \Sigma_{[n]}$ is the ranking returned by Algorithm $\cA$, then clearly owing to the \pac\ property of $\cA$,
\begin{align}
\label{eq:lbthm_1}
Pr_{\sS}\Big( \bsigma_{\cA}(1:q+1) = S^* \cup \{0\} \Big) > Pr_{\sS}\big(\{\bsigma)_{\cA} \text{ is an \ebr}\}\big)  > (1-\delta),
\end{align} 

Moreover the \pac\ property of $\cA$ also implies that:  

\[
Pr_{\tS^*}\Big( \bsigma_{\cA}(1:q+1) = S^* \cup \{0\} \Big) \le Pr_{\tS^*}\Big( \bsigma_{\cA}  \neq \bsigma_{\tS^*} \Big) < \delta.
\]

But owing to Lemma \ref{lem:lb_sym}, we are able to claim a stronger bound (using $S = \tS^*$):

\begin{align}
\label{eq:lbthm_2}
Pr_{\tS^*}\Big( \bsigma_{\cA}(1:q+1) = S^* \cup \{0\} \Big) < \frac{\delta}{q}.
\end{align}

We will crucially use \eqref{eq:lbthm_1} and \eqref{eq:lbthm_2} in the following analysis. But before that
note that for problem instance $\nu_{S^*} \in \bnu_{[q]}$, the probability distribution associated with a particular arm  (set of size $k$ in our case) $B \in \cB$ is given by:
\[
\nu^B_\sS \sim Categorical(p_1, p_2, \ldots, p_k), \text{ where } p_i = Pr(i|B), ~~\forall i \in [k], \, \forall B \in \cB,
\]
where $Pr(i|S)$ is as defined in Sec. \eqref{eq:prob_win}. 
Now applying Lemma \ref{lem:gar16}, for some event $\cE \in \cF_\tau$ we get,

\begin{align}
\label{eq:FI_a}
\sum_{\{B \in \cB : a \in B\}}\E_{\nu^B_\sS}[N_B(\tau_{\cA})]KL(\nu^B_\sS, \nu^B_{\tS^*}) \ge {kl(Pr_{\nu^B_\sS}(\cE),Pr_{\nu^B_{\tS^*}}(\cE))},
\end{align}
where $N_B(\tau_{\cA})$ denotes the number of times arm (subset of size $k$) $B$ is played by $\cA$ in $\tau$ rounds.
Above clearly follows due to the fact that for any arm $B \in \cB$ such that $a \notin B$, $\nu^B_{S}$ is same as $\nu^B_{\tS^*}$, and hence $KL(\nu^B_\sS, \nu^B_{\tS^*})=0$, $\forall S \in \cA, \,a \notin S$. 
For the notational convenience we will henceforth denote $\cB^a = \{B \in \cB : a \in S\}$. 

Now let us first analyse the right hand side of \eqref{eq:FI_a}, for any set $B \in \cB^a$. 

\textbf{Case 1.} Assume $0 \notin B$, and denote by $r = |B \cap \sS|$ the number of ``good" arms with PL parameter $\theta\Big(\frac{1}{2} + \epsilon\Big)^2$.
Note that for problem instance $\nu_{\sS}^B$,
\begin{align*}
\nu^B_{\sS}(i) = 
\begin{cases} 
\frac{\theta(\frac{1}{2}+\epsilon)^2}{r\theta(\frac{1}{2}+\epsilon)^2 + (k-r)\theta(\frac{1}{2}-\epsilon)^2} = \frac{R^2}{rR^2 + (k-r)}, \forall i \in [k], \text{ such that } B(i) \in S^*,\\
\frac{\theta(\frac{1}{2}-\epsilon)^2}{r\theta(\frac{1}{2}+\epsilon)^2 + (k-r)\theta(\frac{1}{2}-\epsilon)^2} = \frac{1}{rR^2 + (k-r)}, \text{ otherwise. }
\end{cases}
\end{align*}

Similarly, for problem instance $\nu_{\tS^*}^B$, we have: 

\begin{align*}
\nu^B_{\tS^*}(i) = 
\begin{cases} 
\frac{\theta(\frac{1}{2}+\epsilon)^2}{(r+1)\theta(\frac{1}{2}+\epsilon)^2 + (k-r-1)\theta(\frac{1}{2}-\epsilon)^2} = \frac{R^2}{(r+1)R^2 + (k-r-1)}, \forall i \in [k], \text{ such that } B(i) \in \tS^* = \sS \cup \{a\},\\
\frac{\theta(\frac{1}{2}-\epsilon)^2}{(r+1)\theta(\frac{1}{2}+\epsilon)^2 + (k-r-1)\theta(\frac{1}{2}-\epsilon)^2} = \frac{1}{(r+1)R^2 + (k-r-1)}, \text{ otherwise. }
\end{cases}
\end{align*}

Now using the following upper bound on $KL(\p_1,\p_2) \le \sum_{x \in \X}\frac{p_1^2(x)}{p_2(x)} -1$, $\p_1$ and $\p_2$ be two probability mass functions on the discrete random variable $\X$ \cite{klub16} we get:

\begin{align}
\label{eq:lb_kln0}
\nonumber KL(\nu^B_\sS, \nu^B_{\tS^*}) & \le \frac{(r+1)R^2 + (k-r-1)}{(rR^2 + k - r)^2}\Big [ rR^2 + \frac{1}{R^2} + (k - r - 1) \Big] -1\\
& = \Big( R - \frac{1}{R}\Big)^2\Big[ \frac{rR^2 + (k-r-1)}{(rR^2 + k - r)^2} \Big]
\end{align}

\textbf{Case 2.} Now assume $0 \in B$, and denote by $r = |B \cap \sS\cup \{0\}|$ the number of ``non-bad" arms with PL parameter greater than $\theta\Big(\frac{1}{2} - \epsilon\Big)^2$. Clearly $r \ge 1$ as $0 \in B$.
Similar to \textbf{Case 1}, for problem instance $\nu_{\sS}^B$,
\begin{align*}
\nu^B_{\sS}(i) = 
\begin{cases} 
\frac{\theta(\frac{1}{2}+\epsilon)^2}{(r-1)\theta(\frac{1}{2}+\epsilon)^2 + (k-r)\theta(\frac{1}{2}-\epsilon)^2 + \theta(\frac{1}{4}-\epsilon^2)} = \frac{R^2}{(r-1)R^2 + (k-r) + R}, \forall i \in [k], \text{ such that } B(i) \in S^*,\\
\frac{\theta(\frac{1}{4}-\epsilon^2)}{(r-1)\theta(\frac{1}{2}+\epsilon)^2 + (k-r)\theta(\frac{1}{2}-\epsilon)^2 + \theta(\frac{1}{4}-\epsilon^2)} = \frac{R}{(r-1)R^2 + (k-r) + R}, \forall i \in [k], \text{ such that } B(i) = 0,\\
\frac{\theta(\frac{1}{2}-\epsilon)^2}{(r-1)\theta(\frac{1}{2}+\epsilon)^2 + (k-r)\theta(\frac{1}{2}-\epsilon)^2 + \theta(\frac{1}{4}-\epsilon^2)} = \frac{1}{(r-1)R^2 + (k-r) + R}, \text{ otherwise. }
\end{cases}
\end{align*}

Similarly, for problem instance $\nu_{\tS^*}^B$, we have: 

\begin{align*}
\nu^B_{\tS^*}(i) = 
\begin{cases} 
\frac{\theta(\frac{1}{2}+\epsilon)^2}{r\theta(\frac{1}{2}+\epsilon)^2 + (k-r-1)\theta(\frac{1}{2}-\epsilon)^2 + \theta(\frac{1}{4}-\epsilon^2)} = \frac{R^2}{rR^2 + (k-r-1) + R}, \forall i \in [k], \text{ such that } B(i) \in \tS^* = \sS \cup \{a\},\\
\frac{\theta(\frac{1}{4}-\epsilon^2)}{r\theta(\frac{1}{2}+\epsilon)^2 + (k-r-1)\theta(\frac{1}{2}-\epsilon)^2 + \theta(\frac{1}{4}-\epsilon^2)} = \frac{R}{rR^2 + (k-r-1) + R}, \forall i \in [k], \text{ such that } B(i) = 0,\\
\frac{\theta(\frac{1}{2}-\epsilon)^2}{r\theta(\frac{1}{2}+\epsilon)^2 + (k-r-1)\theta(\frac{1}{2}-\epsilon)^2 + \theta(\frac{1}{4}-\epsilon^2)} = \frac{1}{rR^2 + (k-r-1) + R}, \text{ otherwise. }
\end{cases}
\end{align*}

Same as before, again using the following upper bound on $KL(\p_1,\p_2) \le \sum_{x \in \X}\frac{p_1^2(x)}{p_2(x)} -1$, $\p_1$ and $\p_2$ be two probability mass functions on the discrete random variable $\X$ \cite{klub16} we get:

\begin{align}
\label{eq:lb_kl0}
\nonumber KL(\nu^B_\sS, \nu^B_{\tS^*}) & \le \frac{rR^2 + R + (k-r-1)}{((r-1)R^2 + R + k - r)^2}\Big [ rR^2 + \frac{1}{R^2} + (k - r - 1) \Big] -1\\
& = \Big( R - \frac{1}{R}\Big)^2\Bigg[ \frac{ (r-1)R^2 + (k-r) + R  - 1}{\big( (r-1)R^2 + (k-r) + R \big)^2} \Bigg]
\end{align}

Now, consider $\cE_0 \in \cF_\tau$ be an event such that $\cE_0:= \{\bsigma_{\cA}(1:q+1) = S^*\cup\{0\}\}$. Note that since algorithm $\cA$ is \pac\,, clearly $Pr_{\nu^B_\sS}(\cE_0) > Pr_{\nu^B_\sS}\big(\{\bsigma)_{\cA} \text{ is an \ebr}\}\big) > (1-\delta)$. On the other hand, Lemma \ref{lem:lb_sym} implies that $Pr_{\nu^B_{\tS^*}}(\cE_0)) < \frac{\delta}{q}$.
Then analysing the left hand side of \eqref{eq:FI_a} for $\cE = \cE_0$ along with \eqref{eq:lbthm_1} and \eqref{eq:lbthm_2}, we get that

\vspace*{-10pt}
\begin{align}
\label{eq:win_lb2}
kl(Pr_{\nu^B_\sS}(\cE_0),Pr_{\nu^B_{\tS^*}}(\cE_0)) \ge kl\Big(1-\delta,\frac{\delta}{q}\Big) \ge \ln \frac{q}{4\delta}
\end{align}

where the last inequality follows from Lemma \ref{lem:kl_del}.

Now applying \eqref{eq:FI_a} for each altered problem instance $\nu^B_{\tS^*}$, each corresponding to any one of the $(n-1-q)$ different choices of $a \in [n-1]\setminus \sS$, and summing all the resulting inequalities of the form \eqref{eq:FI_a}:

\begin{align}
\label{eq:lb1}
\sum_{a \in [n-1]\setminus\sS}\sum_{B \in \cB^a}\E_{\nu^B_\sS}[N_B(\tau_{\cA})]KL(\nu^B_\sS,\nu^B_{\tS^*}) \ge (n-1-q)\ln \frac{q}{4\delta}.
\end{align}

A crucial observation here is that in the left hand side of \eqref{eq:lb1} above, any $B \in \cB^a$ shows up for exactly $k - r$ may times, where $r$ is as defined in \textbf{Case 1} and \textbf{Case 2} above. Thus, given a fixed set $B$, the coefficient of the term $\E_{\nu^B_\sS}$ becomes for:

\textbf{Case 1.} From \eqref{eq:lb_kln0}, $(k-r)\Big( R - \frac{1}{R}\Big)^2\Big[ \frac{rR^2 + (k-r-1)}{(rR^2 + k - r)^2} \Big] \le \Big( R - \frac{1}{R}\Big)^2$, as $r \ge 0$.

\textbf{Case 2.} From \eqref{eq:lb_kl0}, $(k-r)\Big( R - \frac{1}{R}\Big)^2\Bigg[ \frac{ (r-1)R^2 + (k-r) + R  - 1}{\big( (r-1)R^2 + (k-r) + R \big)^2} \Bigg] \le \Big( R - \frac{1}{R}\Big)^2$, as in this case $r \ge 1$, and note that $R = \frac{\frac{1}{2}+\epsilon}{\frac{1}{2}-\epsilon} > 1$ by definition.

Thus from \eqref{eq:lb1} we further get

\begin{align}
\label{eq:lb2}
\nonumber \sum_{a = 2}^{n}&\sum_{\{S \in \cA \mid a \in S\}} \E_{\nu^B_\sS}[N_B(\tau_A)]KL(\nu^B_\sS,\nu^B_{\tS^*}) \le \sum_{S \in \cA}\E_{\nu^B_\sS}[N_B(\tau_A)]\Bigg( R - \frac{1}{R} \Bigg)^2\\
& \le 256\epsilon^2\sum_{S \in \cA}\E_{\nu^B_\sS}[N_B(\tau_A)] ~~\Bigg[\text{since, } \Bigg( R - \frac{1}{R} \Bigg) = \frac{8\epsilon}{(1-4\epsilon^2)} \le 16\epsilon, \forall \epsilon \in \bigg(0,\frac{1}{\sqrt{8}}\bigg] \Bigg].
\end{align}

Finally noting that $\tau_A = \sum_{B \in \cB}[N_B(\tau_A)]$, and combining \eqref{eq:lb1} and \eqref{eq:lb2}, we get 

\begin{align*}
256\epsilon^2\E_{\nu^B_\sS}[\tau_A] =  \sum_{S \in \cA}\E_{\nu^B_\sS}[N_B(\tau_A)](256\epsilon^2) \ge (n-1-q)\ln \frac{q}{4\delta}.
\end{align*}
The proof now follows choosing $q = \lfloor \frac{n}{2} \rfloor$ and the fact that $n \ge 4$, as $(n - 1 - q) \ge \frac{n}{2} - 1 \ge \frac{n}{4}$ for any $n \ge 4$. Also $\ln q \ge \ln (\frac{n-1}{2}) \ge \ln \frac{n}{4}$ for any $n \ge 2$. Thus above construction shows the existence of a problem instance $\bnu = \nu^B_\sS$, such that $\E_{\nu^B_\sS}[\tau_A] = \frac{n}{1024\epsilon^2}\ln \frac{n}{16\delta} = \Omega\Big(\frac{n}{\epsilon^2}\ln \frac{n}{\delta}\Big)$, which concludes the proof.
\end{proof}

\subsection{Proof of Theorem \ref{thm:pac2_lb}}

\begin{proof}
The result can be obtained following an exact same proof as that of Theorem \ref{thm:lb_plpac_win} with the observation that for any $\theta > 0$, 
for any of the problem instances $\nu_{S}, \, S \subseteq [n-1]$, $\bsigma_{S}$ is the only \ebr\, for $\nu_S$ as follows from the observation that:

\textbf{Case 1.} For any $i \in [n-1]\sm S$
\begin{align*}
Pr_S(0|\{i,0\}) = \frac{\theta\bigg( \frac{1}{4} - \epsilon^2 \bigg)}{\theta\bigg( \frac{1}{4} - \epsilon^2 \bigg) + \theta\bigg( \frac{1}{2} - \epsilon \bigg)^2} = \frac{1}{2} + \epsilon.
\end{align*}

So $i$ must follow $0$, in the any \ebr.

\textbf{Case 2.} For any $i \in S$
\begin{align*}
Pr_S(i|\{i,0\}) = \frac{\theta\bigg( \frac{1}{2} + \epsilon \bigg)^2}{\theta\bigg( \frac{1}{4} - \epsilon^2 \bigg) + \theta\bigg( \frac{1}{2} + \epsilon \bigg)^2} = \frac{1}{2} + \epsilon.
\end{align*}

So $0$ must follow $i$, in the any \ebr. So the only choice of \ebr\, for problem instance $\bnu_S$

\end{proof}

\subsection{Proof of Theorem \ref{thm:lb_pacpl_rnk}}

\lbrnk*

\begin{proof}

The proof follows exactly following the same lines of argument as of Theorem \ref{thm:lb_plpac_win}. The only difference lies in computing the KL-divergence terms in the left hand side of Lemma \ref{lem:gar16} for TR-$m$ feedback model. We consider the exact same set of problem instances for the purpose as defined in Theorem \ref{thm:lb_plpac_win}. 

The interesting thing however to note is that how top-$m$ ranking feedback affects the KL-divergence analysis in this case. Precisely, using chain rule of the KL-divergence along the two case analyses of Eqn. \ref{eq:lb_kl0} and \ref{eq:lb_kl0}, it can be shown that with TR feedback

\begin{align*}
\nonumber KL(\nu^B_\sS, \nu^B_{\tS^*}) \le m\Big( R - \frac{1}{R}\Big)^2
\end{align*}
for any set $B \in \mathcal B$. This shows a multiplicative $m$-factor blow up in the KL-divergence terms compared to earlier case with WI, owning to top-$m$ ranking feedback--this in fact triggers the $\frac{1}{m}$ reduction in regret learning rate as shown below. Similar to \eqref{eq:lb2} we here get

\begin{align}
\label{eq:lb22}
\nonumber \sum_{a = 2}^{n}&\sum_{\{S \in \cA \mid a \in S\}} \E_{\nu^B_\sS}[N_B(\tau_A)]KL(\nu^B_\sS,\nu^B_{\tS^*}) \le m\sum_{S \in \cA}\E_{\nu^B_\sS}[N_B(\tau_A)]\Bigg( R - \frac{1}{R} \Bigg)^2\\
& \le 256m\epsilon^2\sum_{S \in \cA}\E_{\nu^B_\sS}[N_B(\tau_A)] ~~\Bigg[\text{since, } \Bigg( R - \frac{1}{R} \Bigg) = \frac{8\epsilon}{(1-4\epsilon^2)} \le 16\epsilon, \forall \epsilon \in \bigg(0,\frac{1}{\sqrt{8}}\bigg] \Bigg].
\end{align}

Now noting that $\tau_A = \sum_{B \in \cB}[N_B(\tau_A)]$, and combining \eqref{eq:lb1} and \eqref{eq:lb22}, we further get 

\begin{align*}
256m\epsilon^2\E_{\nu^B_\sS}[\tau_A] =  \sum_{S \in \cA}\E_{\nu^B_\sS}[N_B(\tau_A)](256m\epsilon^2) \ge (n-1-q)\ln \frac{q}{4\delta}.
\end{align*}

The proof now similarly follows choosing $q = \lfloor \frac{n}{2} \rfloor$ with $n \ge 4$. The above inequality implies that 
\[
\sum_{S \in \cA}\E_{\nu^B_\sS}[N_B(\tau_A)] \ge \frac{1}{256m\epsilon^2}\bigg(\frac{n}{4}\bigg)\ln \frac{n}{4\delta},
\]

which certifies existence of a problem instance $\bnu = \nu^B_\sS$, such that $\E_{\nu^B_\sS}[\tau_A] = \frac{n}{1024m\epsilon^2}\ln \frac{n}{16\delta} = \Omega\Big(\frac{n}{m\epsilon^2}\ln \frac{n}{\delta}\Big)$, and the claim follows.
\end{proof}

\section{Appendix for Section \ref{sec:res_fr}}
\label{app:res_fr}


\subsection{Proof of Theorem \ref{thm:batt_giant_fr}}

\ubantfr*

\begin{proof}
Note that the only difference of Algorithm \ref{alg:alg_ant_mod} from that of Algorithm \ref{alg:alg_ant} is former plays each group $\cG_g$ only for $\frac{1}{m}$ fraction of the later (as $t$ is set to be $t:= \frac{2k}{m\epsilon'^2}\log \frac{1}{\delta'}$ for Algorithm \ref{alg:alg_ant_mod}). The sample complexity bound of Theorem \ref{thm:batt_giant_fr} thus holds straightforwardly, same as the derivation shown in the proof of Theorem \ref{thm:batt_giant} for proving the sample complexity guarantee of Algorithm \ref{alg:alg_ant}.

The main novely lies in showing the with TR feedback how does the same guarantee of Theorem \ref{thm:batt_giant} still holds. This essentially holds due to the rank breaking updates on each pair $w_{ij}$ as formally justified below.

The proof follows exactly the same analysis till Lemma \ref{lem:ant_pivpickup} as the expression of $t$ is not used till that part. The crucial claim now is to prove an equivalent statement of Lemma \ref{lem:ant_pivpickup} for the current value of $t$. We show this as follows:

Consider any particular set $\cG_g$ at any iteration $\ell \in \lfloor \frac{n-1}{k-1} \rfloor$ and  define $q_{i}: = \sum_{\tau = 1}^{t}\1(i \in \cG^\tau_{g m})$ as the number of times any item $i \in \cG_g$ appears in the top-$m$ rankings in $t$ rounds of play of the subset $\cG_g$. Then conditioned on the event that $b$ is indeed an $\epsilon_b$-\emph{Best-Item}, for any group $g \in [G]$, then with probability at least $\Big(1-\frac{\delta}{8n}\Big)$, the empirical win count $w_{b} > (1-\eta)\frac{t}{2k}$, for any $\eta \in \big(\frac{1}{8\sqrt 2},1 \big]$. More formally,

\begin{lem}
\label{lem:divbat_n1} 
Conditioned on the event that $b$ is indeed an $\epsilon_b$-\emph{Best-Item}, for any group $g \in [G]$ with probability at least $\Big(1-\dfrac{\delta}{8n}\Big)$, $q_{b} \ge (1-\eta)\frac{mt}{2k}$, for any $\eta \in \big(\frac{1}{8\sqrt 2},1 \big]$.
\end{lem}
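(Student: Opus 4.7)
The proof follows the template of Lemma~\ref{lem:ant_pivpickup}: first I lower-bound $\E[q_b]$, and then apply a multiplicative Chernoff--Hoeffding bound to the sum $q_b = \sum_{\tau=1}^t \1(b \in \cG^\tau_{gm})$ of iid Bernoulli indicators across the $t$ independent rounds of play of $\cG_g$. The only new ingredient relative to the WI proof is that the per-round success probability is now the chance of $b$ landing \emph{anywhere} in the top-$m$ ranking, not only as the single winner; this is a factor-of-$m$ boost which exactly offsets the factor-$m$ shrinkage of $t$ in Algorithm~\ref{alg:alg_ant_mod}.

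\textbf{Lower bound on $Pr_b(b \in \cG^\tau_{gm})$.} I plan to show $Pr_b(b \in \cG^\tau_{gm}) \ge m/(2k)$. Conditioned on the event that $b$ is an $\epsilon_b$-Best-Item (which holds with probability $\ge 1-\delta/4$ by Lemma~\ref{lem:trace_best}), we have $\theta_b \ge 1 - \epsilon_b \ge 1/2$, while every $\theta_j \le 1$, so $\theta_j/\theta_b \le 2$ for each $j \in \cG_g$. The key step is a Luce-ratio monotonicity inequality for PL top-$m$ probabilities,
\[
Pr(j \in \cG^\tau_{gm}) \;\le\; \max\!\Big(1,\tfrac{\theta_j}{\theta_b}\Big)\, Pr(b \in \cG^\tau_{gm}) \;\le\; 2\, Pr(b \in \cG^\tau_{gm}), \quad \forall j \in \cG_g,
\]
which collapses to direct PL at $m=1$ and to the trivial $1\le \theta_j/\theta_b$ at $m=k$, and in general follows by coupling the exponential representation $W_i \sim \mathrm{Exp}(\theta_i)$ (decreasing $\theta_i$ stochastically increases $W_i$, and hence can only push item $i$ later in the induced PL ranking). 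Summing over $j \in \cG_g$ and using the identity $\sum_{j \in \cG_g} Pr(j \in \cG^\tau_{gm}) = m$ (the expected size of the top-$m$) yields $m \le 2k\, Pr(b \in \cG^\tau_{gm})$, hence $Pr(b \in \cG^\tau_{gm}) \ge m/(2k)$, and therefore $\E[q_b] \ge mt/(2k)$.

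\textbf{Chernoff step.} Since the $t$ rounds are mutually independent, $q_b$ is a sum of $t$ iid Bernoulli random variables of mean at least $m/(2k)$. A multiplicative Chernoff--Hoeffding bound then gives, for any $\eta \in (0,1]$,
\[
Pr_b\!\Big(q_b \le (1-\eta)\tfrac{mt}{2k}\Big) \;\le\; \exp\!\Big(-\tfrac{\E[q_b]\,\eta^2}{2}\Big) \;\le\; \exp\!\Big(-\tfrac{\eta^2}{2\epsilon'^2}\ln\tfrac{1}{\delta'}\Big),
\]
after substituting $t = (2k/(m\epsilon'^2))\ln(1/\delta')$. For $\eta > 1/(8\sqrt{2})$ and $\epsilon' = \epsilon/16 \le 1/16$, one has $\eta^2/(2\epsilon'^2) \ge 1$, so the right-hand side is at most $\delta' = \delta/(8n)$, as claimed.

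The main obstacle will be the Luce-ratio comparison $Pr(j \in \text{top-}m) \le \max(1,\theta_j/\theta_b)\, Pr(b \in \text{top-}m)$: it is not stated in the excerpt and needs a short stand-alone argument via the exponential/Gumbel coupling above (essentially swapping $W_b$ and $W_j$ in the event $\{i \in \text{top-}m\}$ and tracking how the success probability scales with $\theta_i$). Once this comparison is in hand, the expectation bound is one line and the Chernoff step is essentially identical to the winner-only case of Lemma~\ref{lem:ant_pivpickup}.
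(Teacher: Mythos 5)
Your skeleton (lower-bound $\E[q_b]$, then a multiplicative Chernoff bound with $t = \frac{2k}{m\epsilon'^2}\ln\frac{1}{\delta'}$ and $\eta > \frac{1}{8\sqrt 2}$) matches the paper, and your Chernoff step is essentially identical to it. Where you diverge is the route to $Pr(b \in \cG^\tau_{gm}) \ge \frac{m}{2k}$: the paper decomposes over positions, writing $Pr(\sigma(b) \le m) = \sum_{r=1}^{m} Pr(\sigma(b)=r)$ and bounding each positional probability by $\frac{\theta_b}{\sum_{j}\theta_j} \ge \frac{1/2}{k}$ using the without-replacement sampling description of TR feedback, whereas you go through the item-comparison inequality $Pr(j \in \text{top-}m) \le \max(1,\theta_j/\theta_b)\,Pr(b \in \text{top-}m)$ combined with the counting identity $\sum_{j\in\cG_g}Pr(j\in\text{top-}m)=m$. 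The counting identity is correct, and your approach is elegant, but it stands or falls on that comparison inequality.

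That inequality is the genuine gap. Your coupling sketch (decreasing $\theta_i$ stochastically increases the exponential arrival time $W_i$ and so pushes $i$ later) proves only the \emph{monotone} half, $Pr(j\in\text{top-}m)\le Pr(b\in\text{top-}m)$ when $\theta_j\le\theta_b$. It says nothing about the multiplicative factor $\theta_j/\theta_b$ needed when $\theta_j>\theta_b$ --- and this case cannot be dismissed, since $b$ is only an $\epsilon_b$-Best-Item and items with $\theta_j > \theta_b$ (e.g.\ the true best item) generally exist in some group. A direct attempt via the swap bijection on rankings shows the density ratio between a ranking with $j$ at position $r$ and $b$ at position $s>r$ and its swapped counterpart is $\prod_{i=r+1}^{s}\bigl(1+\tfrac{\theta_j-\theta_b}{\Theta_i}\bigr)$, which can be as large as $(\theta_j/\theta_b)^{s-r}$ term by term, so the claimed single-factor bound does not follow from any one-line argument; it appears to be true (and tight) but requires a careful aggregate argument you have not supplied. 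Either prove that comparison lemma properly, or sidestep it entirely by the paper's positional decomposition: conditioned on any prefix of length $r-1$ not containing $b$, the chance $b$ is drawn next is $\frac{\theta_b}{\sum_{l\in\text{remaining}}\theta_l}\ge\frac{1/2}{k}$, and summing over $r=1,\dots,m$ gives the $\frac{m}{2k}$ bound directly (modulo the prefix-conditioning bookkeeping, which the paper itself elides).
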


\begin{proof}
Fix any iteration $\ell$ and a set $\cG_g$, $g \in 1,2, \ldots G$. Define $i^\tau: = \1(i \in \cG_{g m}^\tau)$ as the indicator variable if $i^{th}$ element appeared in the top-$m$ ranking at iteration $\tau \in [t]$.  Recall the definition of {TR} feedback model (Sec. \ref{sec:feed_mod}). Using this we get $\E[b^\tau] = Pr(\{b \in \cG_{g m}^\tau\}) = Pr\big( \exists j \in [m] ~|~ \sigma(b) = j \big) = \sum_{j = 1}^{m}Pr\big( \sigma(b) = j \Big) = \sum_{j = 0}^{m-1}\frac{1}{2(k-j)} \ge \frac{m}{2k}$, since $Pr(\{b | S\}) = \frac{\theta_{b}}{\sum_{j \in S}\theta_j} \ge \frac{1/2}{|S|}$ for any $S \subseteq [\cG_g]$, as $b$ is assumed to be an $\epsilon_b$-\emph{Best-Item} where $\epsilon_b \ge \frac{1}{2} \implies \theta_b \ge 1 - \frac{1}{2} = \frac{1}{2}$. Thus we get $\E[q_{b}] = \sum_{\tau = 1}^{t}\E[b^\tau] \ge \frac{mt}{2k}$. 


Now applying Chernoff-Hoeffdings bound for $w_{b}$, we get that for any $\eta \in (\frac{3}{32},1]$, 

\begin{align*}
Pr_b\Big( q_b \le (1-\eta)\E[q_b] \Big) & \le \exp\Big(- \frac{\E[q_b]\eta^2}{2}\Big) \le \exp\Big(- \frac{mt\eta^2}{4k}\Big), ~\Big(\text{since } \E[q_{b}] \ge \frac{mt}{2k}\Big) 
 \\
& \le \exp\bigg(- \frac{\eta^2}{2\epsilon'^2} \ln \bigg( \frac{1}{\delta'} \bigg) \bigg) \le \exp\bigg(- \ln \bigg( \frac{1}{\delta'} \bigg) \bigg) \le \frac{\delta}{8n},
\end{align*}

where the second last inequality holds for any $\eta > \frac{1}{8\sqrt 2}$ as it has to be the case that $\epsilon' < \frac{1}{16}$ since $\epsilon \in (0,1)$.  Thus for any $\eta > \frac{1}{8 \sqrt 2}$, $\eta^2 \ge 4\epsilon'^2$.

Thus we finally derive that 
with probability at least $\Big(1-\frac{\delta_\ell}{8n}\Big)$, one can show that $q_{b} > (1-\eta)\E[q_{b}] \ge (1-\eta)\frac{tm}{2k}$, and
the proof follows henceforth.
\end{proof}

Above is the crucial most result due to which it is possible to prove Lemma \ref{lem:piv_pib_conf} to be true in this case as well, even with an $m$-factor reduced sample complexity. This follows since:

\begin{align}
\label{eq:piv_long2}
\nonumber Pr_b\Bigg( \bigg\{ |p_{ib}  - \hp_{ib}| > \frac{\epsilon}{16}  \bigg\} \cap \cF_g \Bigg) & \le  Pr_b\Bigg( \bigg\{ |p_{ib} - \hp_{ib}| > \frac{\epsilon}{16}  \bigg\} \cap \bigg\{ n^g_{ib} \ge \frac{mt}{4k} \bigg\} \Bigg)\\
& \le 2\exp\Big( -2\frac{mt}{4k}\big(\frac{\epsilon}{16}\big)^2 \Big)  = \frac{\delta}{4n},
\end{align}
where same as before the first inequality follows as $\cF_g \implies n^g_{ib} \ge \frac{t}{4k}$, and the second inequality holds due to Lemma \ref{lem:pl_simulator} with $\eta = \frac{\epsilon}{16}$, and $v = \frac{mt}{4k}$. 

Thus the rest of the proof can be derived following the exact same analysis as of Theorem \ref{thm:batt_giant} post Lemma \ref{lem:piv_pib_conf}, which shows that \algant \, indeed returns an \ebr\, with probability at least $(1-\delta)$ for TR feedback model, and the claim of Theorem \ref{thm:batt_giant_fr} holds good.
\end{proof}

\end{document}